\date{}
\newtheorem{theorem}{Theorem}
\newtheorem{lemma}[theorem]{Lemma}
\newtheorem{corollary}[theorem]{Corollary}
\theoremstyle{definition}
\newtheorem{example}{Example}
\newtheorem{remark}{Remark}
\newtheorem*{theorem*}{Theorem}
\title{Online Optimization with Feedback Delay \\and Nonlinear Switching Cost}
\author{
 Weici Pan$^\dagger$, Guanya Shi$^\ddagger$, Yiheng Lin$^\ddagger$, Adam Wierman$^\ddagger$ \\
  $^\dagger$Stony Brook University \; $^\ddagger$Caltech \\
  \texttt{weici.pan@stonybrook.edu, \{gshi,yihengl,adamw\}@caltech.edu} 
}
\begin{document}
\maketitle

\begin{abstract}
 We study a variant of online optimization in which the learner receives $k$-round \textit{delayed feedback} about hitting cost and there is a multi-step nonlinear switching cost, i.e., costs depend on multiple previous actions in a nonlinear manner. Our main result shows that a novel Iterative Regularized Online Balanced Descent (iROBD) algorithm has a constant, dimension-free competitive ratio that is $O(L^{2k})$, where $L$ is the Lipschitz constant of the switching cost. Additionally, we provide lower bounds that illustrate the Lipschitz condition is required and the dependencies on $k$ and $L$ are tight. Finally, via reductions, we show that this setting is closely related to online control problems with delay, nonlinear dynamics, and adversarial disturbances, where iROBD directly offers constant-competitive online policies.
\end{abstract}

\section{Introduction}
We study a variant of online convex optimization (OCO) with feedback delay and nonlinear switching (movement) cost. In recent years, the problem of online convex optimization with (linear) switching cost has received considerable attention, e.g., \citep{bansal_et_al:LIPIcs:2015:5297,10.1007/978-3-319-89441-6_13,10.1145/2796314.2745854,shi2020online,goel2019beyond} and the references therein.  In this setting an online learner iteratively picks an action $y_t$ and then suffers a convex hitting cost $f_t(y_t)$ and a (linear) switching cost $c(y_t,y_{t-1},\cdots,y_{t-p})$, depending on current and previous $p$ actions. This type of online optimization with memory has deep connections to convex body chasing \citep{bubeck2019competitively,argue2020chasing,sellke2020chasing,bubeck2020chasing} and has wide applications in areas such as power systems \citep{badiei2015online,kim2016online,li2018using}, electric vehicle charging \citep{chen2012iems,kim2016online}, cloud computing~\cite{liu2014pricing,chen2016using,10.1145/2796314.2745854}, and online control~\citep{goel2019online,li2019online,shi2020online,li2020online,lin2021perturbation}. 

Our work aims to generalize the online convex optimization literature in two directions, motivated by two limitations of the classical setting that prevent applications of the results in some important situations.  First, in the classical setting, the online learner observes the hitting cost function $f_t$ \textit{before} picking the action $y_t$. However, in many applications, such as trajectory tracking problems in robotics, $f_t$ is revealed after a multi-round delay due to communication and process delays, i.e., multiple rounds of actions must be taken \textit{without} feedback on their hitting costs. Delay is known to be very challenging in practice and \cite{shi2020online} shows that even \textit{one-step} delay requires non-trivial algorithmic modifications. The impact of multi-round delay has been recognized as a challenging open question for the design of online algorithms~\citep{shi2020online,joulani2013online,shamir2017online} and broadly in applications. For example, \cite{shi2021neural} highlights that a three-step delay (around $30$ milliseconds) can already cause catastrophic crashes in drone tracking control using a standard controller without algorithmic adjustments for delay. 

Second, the classical online convex optimization setting allows only \emph{linear} forms of switching cost functions, where the switching cost $c$ is some (squared) norm of a linear combination of current and previous actions, e.g.,  $y_t-y_{t-1}$ \cite{goel2019beyond, goel2019online, chen2018smoothed} or $y_t-\sum_{i=1}^pC_iy_{t-i}$ \cite{shi2020online}. However, in many practical scenarios the costs to move from $y_{t-1}$ to $y_t$ are non-trivial nonlinear functions. For example, consider $y_t\in\mathbb{R}$ as the vertical velocity of a drone in a velocity control task. Hovering the drone (i.e., holding the position such that $y_t=0,\forall t$) is not free, due to gravity. In this case, the cost to move from $y_{t-1}$ to $y_t$ is $(y_t-y_{t-1}+g(y_{t-1}))^2$ where the nonlinear term $g(y_{t-1})$ accounts for the gravity and aerodynamic drag \cite{shi2019neural}. 
Such non-linearities create significant algorithmic challenges because (i) in contrast to the linear setting, small movement between decisions does not necessarily imply small switching cost (e.g., the aforementioned drone control example), and (ii) a small error in a decision can lead to large non-linear penalties in the switching cost in future steps, which is further amplified by the multi-round delay. Addressing such challenges is well-known to be a challenging open question for the design of online algorithms. 

Additional motivation for our focus on delay and nonlinear switching cost comes from the emerging connection between online convex optimization and online control. The notion of a switching cost in online optimization parallels the control cost in optimal control theory in that both characterize the cost to steer the state $y_t$. Inspired by this analogy, recent papers have used similar algorithms and techniques in the two settings, e.g., \citep{agarwal2019online,li2020online}, and shown reductions between online control and online convex optimization \citep{goel2019online,li2019online,shi2020online,agarwal2019online,simchowitz2020improper,shi2021meta,yu2020power}. These results are highly provocative -- suggesting a deep connection -- but also limited in terms of the generality of the control settings that can be considered. All the existing results focus on linear dynamical systems without delay. The question of how general a connection can be made between online control and online convex optimization remains unanswered.  As we show in this paper, the incorporation of delay and nonlinear switching costs into online convex optimization significantly generalizes the set of control problems that can be reduced to online optimization.

\subsection{Contributions}
This paper addresses the three open questions highlighted above.  We provide the first competitive algorithm for online convex optimization with feedback delay and nonlinear switching costs, and show a reduction between a class of nonlinear online control models with delay and online convex optimization with feedback delay and nonlinear switching cost. 

More specifically, we propose a novel setting of online optimization where the hitting cost suffers $k$-round delayed feedback and the switching cost is nonlinear. This setting generalizes prior work on online convex optimization with switching costs (e.g., \citep{goel2019beyond,shi2020online}).  In this setting, we propose a new algorithm, Iterative Regularized Online Balanced Descent (iROBD) and prove that it maintains a dimension-free constant competitive ratio that is $O(L^{2k})$, where $L$ is the Lipschitz constant of the non-linear switching cost and $k$ is the delay.  This is the first constant competitive algorithm in the case of either feedback delay or nonlinear switching cost and we show, via lower bounds, that the dependencies on both $L$ and $k$ are tight.  These lower bounds further serve to emphasize the algorithmic difficulties created by delay and non-linear switching costs.

The design of iROBD deals with the $k$-round delay via a novel iterative process of estimating the unknown cost function optimistically, i.e., iteratively assuming that the unknown cost functions will lead to minimal cost for the algorithm.  This approach is different than a one-shot approach focused on the whole trajectory of unknown cost functions, and the iterative nature is crucial for bounding the competitive ratio.  In particular, the key idea to our competitive ratio proof is to bound the error that accumulates in the iterations by leveraging a Lipschitz property on the nonlinear component of the switching cost. This analytic approach is novel and a contribution in its own right. 

Finally, we show that iROBD is constant competitive for the control of linear dynamical systems with squared costs and general adversarial disturbances as well as a class of nonlinear dynamics, via a novel reduction between such systems and online optimization with feedback delay and nonlinear switching costs.  This reduction represents a significant generalization of the results in \citep{goel2019online,shi2020online}, which each have significant limitations on the dynamics where they apply.  Our new reduction highlights that state disturbances can be connected to delay and nonlinear dynamics can be connected to nonlinear switching costs; thus highlighting the difficulties each creates for competitive control.

\section{Model and Preliminaries}
Online convex optimization with memory has emerged as an important and challenging area with a wide array of applications, see \citep{lin2012online,anava2015online,chen2018smoothed,goel2019beyond,agarwal2019online,bubeck2019competitively} and the references therein.  Many results in this area have focused on the case of online optimization with switching costs (movement costs), a form of one-step memory, e.g., \citep{chen2018smoothed,goel2019beyond,bubeck2019competitively}, though some papers have focused on more general forms of memory, e.g., \citep{anava2015online,agarwal2019online}. In this paper we, for the first time, study the impact of feedback delay and nonlinear switching cost in online optimization with switching costs. 

An instance consists of a convex action set $\mathcal{K}\subset\mathbb{R}^d$, an initial point $y_0\in\mathcal{K}$, a sequence of non-negative convex cost functions $f_1,\cdots,f_T:\mathbb{R}^d\to\mathbb{R}_{\ge0}$, and a switching cost $c:\mathbb{R}^{d\times(p+1)}\to\mathbb{R}_{\ge0}$. To incorporate feedback delay, we consider a situation where the online learner only knows the geometry of the hitting cost function at each round, i.e., $f_t$, but that the minimizer of $f_t$ is revealed only after a delay of $k$ steps, i.e., at time $t+k$.  This captures practical scenarios where the form of the loss function or tracking function is known by the online learner, but the target moves over time and measurement lag means that the position of the target is not known until some time after an action must be taken. 
To incorporate nonlinear (and potentially nonconvex) switching costs, we consider the addition of a known nonlinear function $\delta$ from $\mathbb{R}^{d\times p}$ to $\mathbb{R}^d$ to the structured memory model introduced previously.  Specifically, we have
\begin{align}
c(y_{t:t-p}) = \frac{1}{2}\|y_t-\delta(y_{t-1:t-p})\|^2,    \label{e.newswitching}
\end{align}
where we use $y_{i:j}$ to denote either $\{y_i, y_{i+1}, \cdots, y_j\}$ if $i\leq j$, or  $\{y_i, y_{i-1}, \cdots, y_j\}$ if $i > j$ throughout the paper. Additionally, we use $\|\cdot\|$ to denote the 2-norm of a vector or the spectral norm of a matrix.

In summary, we consider an online agent that interacts with the environment as follows:
\begin{enumerate}
    \item The adversary reveals a function $h_t$, which is the geometry of the $t^\mathrm{th}$ hitting cost, and a point $v_{t-k}$, which is the minimizer of the $(t-k)^\mathrm{th}$ hitting cost. Assume that $h_t$ is $m$-strongly convex and $l$-strongly smooth, and that $\arg\min_y h_t(y)=0$.
    \item The online learner picks $y_t$ as its decision point at time step $t$ after observing $h_t,$  $v_{t-k}$.
    \item The adversary picks the minimizer of the hitting cost at time step $t$: $v_t$. 
    \item The learner pays hitting cost $f_t(y_t)=h_t(y_t-v_t)$ and switching cost $c(y_{t:t-p})$ of the form \eqref{e.newswitching}.
\end{enumerate}

The goal of the online learner is to minimize the total cost incurred over $T$ time steps, $cost(ALG)=\sum_{t=1}^Tf_t(y_t)+c(y_{t:t-p})$, with the goal of (nearly) matching the performance of the offline optimal algorithm with the optimal cost $cost(OPT)$. The performance metric used to evaluate an algorithm is typically the \textit{competitive ratio} because the goal is to learn in an environment that is changing dynamically and is potentially adversarial. Formally, the competitive ratio (CR) of the online algorithm is defined as the worst-case ratio between the total cost incurred by the online learner and the offline optimal cost: $CR(ALG)=\sup_{f_{1:T}}\frac{cost(ALG)}{cost(OPT)}$.

It is important to emphasize that the online learner decides $y_t$ based on the knowledge of the previous decisions $y_1\cdots y_{t-1}$, the geometry of cost functions $h_1\cdots h_t$, and the delayed feedback on the minimizer $v_1\cdots v_{t-k}$. Thus, the learner has perfect knowledge of cost functions $f_1\cdots f_{t-k}$, but incomplete knowledge of $f_{t-k+1}\cdots f_t$ (recall that $f_t(y)=h_t(y-v_t)$).

Both feedback delay and nonlinear switching cost add considerable difficulty for the online learner compared to versions of online optimization studied previously. Delay hides crucial information from the online learner and so makes adaptation to changes in the environment more challenging. As the learner makes decisions it is unaware of the true cost it is experiencing, and thus it is difficult to track the optimal solution. This is magnified by the fact that nonlinear switching costs increase the dependency of the variables on each other. It further stresses the influence of the delay, because an inaccurate estimation on the unknown data, potentially magnifying the mistakes of the learner. 

The impact of feedback delay has been studied previously in online learning settings without switching costs, with a focus on regret, e.g., \citep{joulani2013online,shamir2017online}.  However, in settings with switching costs the impact of delay is magnified since delay may lead to not only more hitting cost in individual rounds, but significantly larger switching costs since the arrival of delayed information may trigger a very large chance in action.  To the best of our knowledge, we give the first competitive ratio for delayed feedback in online optimization with switching costs. 

We illustrate a concrete example application of our setting in the following.

\begin{example}[Drone tracking problem]
\label{example:drone} \emph{
Consider a drone with vertical speed $y_t\in\mathbb{R}$. The goal of the drone is to track a sequence of desired speeds $y^d_1,\cdots,y^d_T$ with the following tracking cost:}
\begin{equation}
    \sum_{t=1}^T \frac{1}{2}(y_t-y^d_t)^2 + \frac{1}{2}(y_t-y_{t-1}+g(y_{t-1}))^2,
\end{equation}
\emph{where $g(y_{t-1})$ accounts for the gravity and the aerodynamic drag. One example is $g(y)=C_1+C_2\cdot|y|\cdot y$ where $C_1,C_2>0$ are two constants~\cite{shi2019neural}. Note that the desired speed $y_t^d$ is typically sent from a remote computer/server. Due to the communication delay, at time step $t$ the drone only knows $y_1^d,\cdots,y_{t-k}^d$.}

\emph{This example is beyond the scope of existing results in online optimization, e.g.,~\cite{shi2020online,goel2019beyond,goel2019online}, because of (i) the $k$-step delay in the hitting cost $\frac{1}{2}(y_t-y_t^d)$ and (ii) the nonlinearity in the switching cost $\frac{1}{2}(y_t-y_{t-1}+g(y_{t-1}))^2$ with respective to $y_{t-1}$. However, in this paper, because we directly incorporate the effect of delay and nonlinearity in the algorithm design, our algorithms immediately provide constant-competitive policies for this setting.}
\end{example}

\subsection{Related Work}
This paper contributes to the growing literature on online convex optimization with memory.  
Initial results in this area focused on developing constant-competitive algorithms for the special case of 1-step memory, a.k.a., the Smoothed Online Convex Optimization (SOCO) problem, e.g., \citep{chen2018smoothed,goel2019beyond}. In that setting, \citep{chen2018smoothed} was the first to develop a constant, dimension-free competitive algorithm for high-dimensional problems.  The proposed algorithm, Online Balanced Descent (OBD), achieves a competitive ratio of $3+O(1/\beta)$ when cost functions are $\beta$-locally polyhedral.  This result was improved by \citep{goel2019beyond}, which proposed two new algorithms, Greedy OBD and Regularized OBD (ROBD), that both achieve $1+O(m^{-1/2})$ competitive ratios for $m$-strongly convex cost functions.  Recently, \citep{shi2020online} gave the first competitive analysis that holds beyond one step of memory.  It holds for a form of structured memory where the switching cost is linear:
$
    c(y_{t:t-p})=\frac{1}{2}\|y_t-\sum_{i=1}^pC_iy_{t-i}\|^2,
$
with known $C_i\in\mathbb{R}^{d\times d}$, $i=1,\cdots,p$. If the memory length $p = 1$ and $C_1$ is an identity matrix, this is equivalent to SOCO. In this setting, \citep{shi2020online} shows that ROBD has a competitive ratio of 
\begin{align}
    \frac{1}{2}\left( 1 + \frac{\alpha^2 - 1}{m} + \sqrt{\Big( 1 + \frac{\alpha^2 - 1}{m}\Big)^2 + \frac{4}{m}} \right),
\end{align}
when hitting costs are $m$-strongly convex and $\alpha=\sum_{i=1}^p\|C_i\|$.

Prior to this paper, competitive algorithms for online optimization have nearly always assumed that the online learner acts \emph{after} observing the cost function in the current round, i.e., have zero delay.  The only exception is \citep{shi2020online}, which considered the case where the learner must act before observing the cost function, i.e., a one-step delay.  Even that small addition of delay requires a significant modification to the algorithm (from ROBD to Optimistic ROBD) and analysis compared to previous work. 

As the above highlights, there is no previous work that addresses either the setting of nonlinear switching costs nor the setting of multi-step delay. However, the prior work highlights that ROBD is a promising algorithmic framework and our work in this paper extends the ROBD framework in order to address the challenges of delay and non-linear switching costs. Given its importance to our work, we describe the workings of ROBD in detail in Algorithm~\ref{robd}. 

\begin{algorithm}[t!]
  \caption{ROBD \citep{goel2019beyond}}
  \label{robd}
\begin{algorithmic}[1]
  \STATE {\bfseries Parameter:} $\lambda_1\ge0,\lambda_2\ge0$
  \FOR{$t=1$ {\bfseries to} $T$}
  \STATE {\bfseries Input:} Hitting cost function $f_t$, previous decision points $y_{t-p:t-1}$
  \STATE $v_t\leftarrow\arg\min_yf_t(y)$
  \STATE $y_t\leftarrow\arg\min_yf_t(y)+\lambda_1c(y,y_{t-1:t-p})+\frac{\lambda_2}{2}\|y-v_t\|^2_2$
  \STATE {\bfseries Output:} $y_t$
  \ENDFOR
   
\end{algorithmic}
\end{algorithm}

Another line of literature that this paper contributes to is the growing understanding of the connection between online optimization and adaptive control. The reduction from adaptive control to online optimization with memory was first studied in \citep{agarwal2019online} to obtain a sublinear static regret guarantee against the best linear state-feedback controller, where the approach is to consider a disturbance-action policy class with some fixed horizon.  Many follow-up works adopt similar reduction techniques \citep{agarwal2019logarithmic, brukhim2020online, gradu2020adaptive}. A different reduction approach using control canonical form is proposed by \citep{li2019online} and further exploited by \citep{shi2020online}. Our work falls into this category.  The most general results so far focus on Input-Disturbed Squared Regulators, which can be reduced to online convex optimization with structured memory (without delay or nonlinear switching costs).  As we show in \Cref{Control}, the addition of delay and nonlinear switching costs leads to a significant extension of the generality of control models that can be reduced to online optimization. 

\section{A Competitive Algorithm}\label{Algorithm}

\begin{algorithm}[t!]
   \caption{Iterative ROBD (iROBD)}
   \label{alg}
\begin{algorithmic}[1]
   \STATE {\bfseries Parameter:} $\lambda\ge0$
   \STATE Initialize a ROBD instance with $\lambda_1=\lambda$, $\lambda_2=0$
   \FOR{$t=1$ {\bfseries to} $T$}
   \STATE {\bfseries Input:} $h_t$, $v_{t-k}$
   \STATE Observe $f_{t-k}(y)=h_{t-k}(y-v_{t-k})$
   \STATE $\hat{y}_{t-k} = \mathrm{ROBD}(f_{t-k},\hat{y}_{t-k-p:t-k-1})$\label{line:6}
   \STATE Initialize a temporary sequence $s_{1:t}$
   \STATE $s_{1:t-k} \leftarrow \hat{y}_{1:t-k}$
    \FOR{$i=t-k+1$ {\bfseries to} $t$}
    \STATE $\Tilde{v}_i=\arg\min_v\min_yh_i(y-v)+\lambda c(y,s_{i-1:i-p})$
    \STATE Set $\Tilde{f}_i(y)=h_i(y-\Tilde{v}_i)$
    \STATE $s_i \leftarrow \mathrm{ROBD}(\Tilde{f}_i,s_{i-p:i-1})$
    \ENDFOR
   \STATE $y_t=s_t$
   \STATE {\bfseries Output:} $y_t$ (the action at time step $t$)
   \ENDFOR
w\end{algorithmic}
\end{algorithm}
The main contribution of this paper is the first competitive algorithm for online convex optimization with multi-step delay and nonlinear switching costs. We introduce a new algorithm, Iterative Regularized Online Balanced Descent (iROBD, see Algorithm \ref{alg}) that builds on ideas on ROBD and Optimistic ROBD in order to provide competitive guarantees in a significantly more general and challenging setting. The iROBD algorithm begins from $\hat{y}_{1:T}$, an oracle decision sequence from ROBD where there is no delay. Note that even though ROBD has two parameters $\lambda_1$ and $\lambda_2$, the latter is for practical implementation and redundant in theory. In this way, we use the setting where $\lambda_1=\lambda$ and $\lambda_2=0$, denoting this implementation as ROBD($\lambda$). The algorithmic goal is to make sure the actual decision sequence under delays $y_{1:T}$ stays close to the oracle one. Recall that at time step $t$, after observing $h_t,v_{t-k}$, the available information contains the perfect knowledge of hitting costs $f_{1:t-k}$ and the geometry of unknown hitting costs $f_{t-k+1:t}$, i.e., $h_{t-k+1:t}$. Therefore, the first step of iROBD is to recover what ROBD would do at time step $t-k$ as if it knew $f_{t-k}$ at that time (Line 6).
Given $\hat{y}_{1:t-k}$, the next step is to estimate unknown hitting costs $f_{t-k+1:t}$ (Line 7-13). To do this, iROBD initializes a temporary sequence $s_{1:t}$ which replicates the known part of the oracle sequence, i.e., $s_{1:t-k}=\hat{y}_{1:t-k}$. Then, iROBD iteratively estimates the unknown hitting costs optimistically (Line 9-13), i.e., for each $t-k+1\leq i\leq t$, it estimates the unknown hitting cost function such that running the ROBD algorithm on the temporary sequence would give the smallest cost. Note that, in the first loop $i=t-k+1$, the memory  $s_{i-1:i-p}$ is the same as the oracle sequence but, in later loops ($i=t-k+2,\cdots,t$), the memory contains estimations from previous iterations. After the last iteration ($i=t$), we take $s_t$ as the output action/decision (Line 14). \Cref{fig:temp_seq} depicts the evolution of $s_{1:t}$ in the iterative process.

\begin{figure}[ht!]
    \centering
    \includegraphics[width=0.6\linewidth]{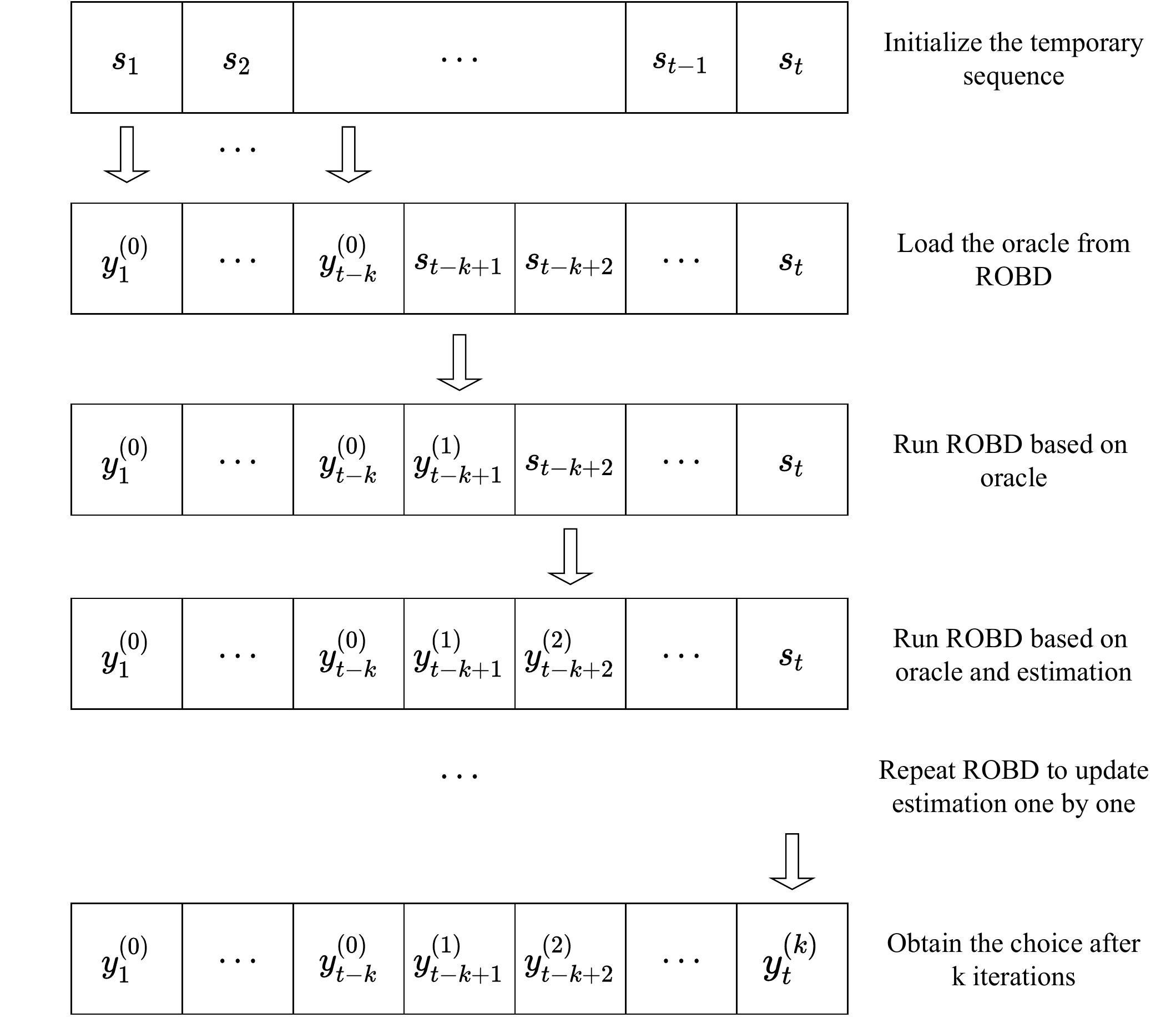}
    \caption{The evolution of the sequence $s_{1:t}$ in iROBD.}
    \label{fig:temp_seq}
\end{figure}
We define some useful notations here. We use a superscript $^{(i)}$ to denote the decision of iROBD in the setting of $i$-step delay. For example, $y_t^{(k)}$ and $y_t^{(0)}$ are decisions of the algorithm at time $t$ in settings of $k$-step delay and no delay respectively. Note that $\hat{y}_t=y_t^{(0)}$ denotes the oracle decision of ROBD without delay. Similarly, $v_t^{(k)}$ is the estimation of the algorithm on the minimizer at time $t$ in the setting of $k$-step delay, while $v_t^{(0)}$ is that in the setting of no delay, that is, the exact minimizer $v_t$. For example, ${v}_t^{(1)}=\arg\min_v\min_yh_t(y-v)+\lambda c(y,y_{t-1:t-p}^{(0)})$, ${v}_t^{(2)}=\arg\min_v\min_yh_t(y-v)+\lambda c(y,y_{t-1}^{(1)},y_{t-2:t-p}^{(0)})$, and ${v}_t^{(3)}=\arg\min_v\min_yh_t(y-v)+\lambda c(y,y_{t-1}^{(2)},y_{t-2}^{(1)},y_{t-3:t-p}^{(0)})$ and so on. Therefore, we have $f_t^{(i)}(y)=h_t(y-v_t^{(i)})=f_t(y-v_t^{(i)}+v_t^{(0)})$ to be the estimated hitting cost of $i$-step-delay iROBD at time step $t$. And we can immediately get that $y_t^{(i)}=ROBD(f_t^{(i)},y_{t-1}^{(i-1)},y_{t-2}^{(i-2)},\cdots)$. Moreover, let $y_t^*$ denote the offline optimal decision at time step $t$. 

Additionally, the following notation also denote the hitting cost and switching cost: $H_t^{(i)}:=f_t(y_t^{(i)})$, and $M_t^{(i)}:=c(y_{t:t-p}^{(i)})$. Similarly, $H_t^{*}:=f_t(y_t^{*})$, and $M_t^{*}:=c(y_{t:t-p}^{*})$. Our goal is to bound the competitive ratio, i.e.,
\begin{align}
    \frac{\sum_{t=1}^T\left(H_t^{(k)}+M_t^{(k)}\right)}{\sum_{t=1}^T\left(H_t^{*}+M_t^{*}\right)}.\notag
\end{align}

It is worth noting that this iterative estimation process is different than a one-shot optimistic approach on the whole trajectory (which is the most natural extension of the ideas in Optimistic ROBD to multi-step delay), where the online learner optimistically estimates the unknown hitting cost function $f_{t-k+1:t}$ jointly. Intuitively, this is because a per-step greedy policy does not follow the true optimal trajectory in hindsight. This difference, and in particular the iterative nature of the algorithm, is novel and is crucial to achieving a constant competitive ratio.

In the remainder of this section, we first show a general competitive bound on iROBD in the case of nonlinear switching cost and then we probe the tightness of the general bound by considering the special case of linear switching cost.

\subsection{Main Result: Nonlinear Switching Cost with Feedback Delay}

Our main result (\Cref{t.main}) shows that iROBD is a constant competitive algorithm under Lipschitz constraints on the nonlinear switching cost. Following the result we show that the dependence on delay and the nonlinear switching costs are both tight and that the Lipschitz constraints are necessary.

\begin{theorem} \label{t.main}
Suppose the hitting costs are $m$-strongly convex and $l$-strongly smooth, and the switching cost is given by $c(y_{t:t-p})=\frac{1}{2}\|y_t-\delta(y_{t-1:t-p})\|^2$, where $\delta:\mathbb{R}^{d\times p}\to\mathbb{R}^d$. If there is a $k$-round-delayed feedback on the minimizers, and for any $1\le i\le p$ there exists a constant $L_i>0$, such that for any given $y_{t-1},\cdots,y_{t-i-1},y_{t-i+1},\cdots,y_{t-p}\in\mathbb{R}^d$, we have:
\begin{align}
    \|\theta(a)-\theta(b)\|\le L_i\|a-b\|,\forall a,b\in\mathbb{R}^d,\notag
\end{align}
where $\theta(x)=\delta(y_{t-1},\cdots,y_{t-i-1},x,y_{t-i+1},\cdots,y_{t-p})$, then the competitive ratio of iROBD($\lambda$) is bounded by 
\begin{align}
    O\left((l+2p^2L^2)^k\max\left\{\frac{1}{\lambda},\frac{m+\lambda}{m+(1-p^2L^2)\lambda}\right\}\right),\notag
\end{align}
where $L=\max_i\{L_i\}$, $\lambda>0$ and $m+(1-p^2L^2)\lambda>0$.
\end{theorem}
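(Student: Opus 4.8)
The plan is to bound the competitive ratio by factoring it as the product of a \emph{delay-amplification} term and the \emph{no-delay} (ROBD) competitive ratio,
\begin{align}
\frac{\sum_{t}\left(H_t^{(k)}+M_t^{(k)}\right)}{\sum_{t}\left(H_t^{*}+M_t^{*}\right)}
=\frac{\sum_{t}\left(H_t^{(k)}+M_t^{(k)}\right)}{\sum_{t}\left(H_t^{(0)}+M_t^{(0)}\right)}\cdot\frac{\sum_{t}\left(H_t^{(0)}+M_t^{(0)}\right)}{\sum_{t}\left(H_t^{*}+M_t^{*}\right)},\notag
\end{align}
and to control the two factors separately: the second is a pure ROBD-without-delay guarantee, while the first isolates exactly the damage done by the $k$-round delay.

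The first thing I would do is obtain a closed form for the optimistic estimate in Lines~10--12. Because the switching cost is $\tfrac{1}{2}\|y-\delta(\cdot)\|^2$ and $\arg\min_y h_i(y)=0$, the joint minimization over $(v,y)$ is solved at value $0$ by $\tilde v_i=\delta(s_{i-1:i-p})$, after which the ROBD step returns $s_i=\delta(s_{i-1:i-p})$. In the paper's notation this means that, for every $i\ge1$,
\begin{align}
y_t^{(i)}=\delta\!\left(y_{t-1}^{(i-1)},\ldots,y_{t-p}^{(i-p)}\right),\notag
\end{align}
so the optimistic rollout simply iterates the free dynamics $\delta$ off the last oracle decisions and pays \emph{zero estimated switching cost}. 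This identity is the engine of the analysis: it says that the only discrepancy the algorithm ever creates is between the rolled-out memory $y_{t-j}^{(i-j)}$ and the true executed memory $y_{t-j}^{(k)}$.

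Next I would quantify that discrepancy. Writing $\epsilon_t^{(i)}:=\|y_t^{(i)}-y_t^{(i-1)}\|$, the identity gives the clean base case
\begin{align}
\left(\epsilon_t^{(1)}\right)^2=\left\|\hat y_t-\delta(\hat y_{t-1:t-p})\right\|^2=2M_t^{(0)},\notag
\end{align}
so the entire delay error is seeded by the \emph{oracle's own switching cost}, and the Lipschitz hypothesis yields $\epsilon_t^{(i)}\le\sum_{j=1}^p L_j\,\epsilon_{t-j}^{(i-j)}$ for $i\ge2$. Summing squares over time, a Cauchy--Schwarz step gives $E^{(i)}\le pL^2\sum_{j=1}^p E^{(i-j)}$ for the energies $E^{(i)}:=\sum_t(\epsilon_t^{(i)})^2$, and bounding the $p$-term sum by $p$ times its largest summand produces a per-level amplification of $O(p^2L^2)$, starting from $E^{(1)}=2\sum_t M_t^{(0)}$. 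I then convert the telescoped deviations $\|y_t^{(k)}-\hat y_t\|\le\sum_{a=1}^k\epsilon_t^{(a)}$ back into true costs: $l$-smoothness gives $H_t^{(k)}\le\tfrac{l}{2}\|y_t^{(k)}-v_t\|^2\le l\|y_t^{(k)}-\hat y_t\|^2+l\|\hat y_t-v_t\|^2$ with $\|\hat y_t-v_t\|^2\le\tfrac{2}{m} H_t^{(0)}$ by $m$-strong convexity, and the Lipschitz bound controls $M_t^{(k)}=\tfrac{1}{2}\|\delta(y_{t-1:t-p}^{(\cdot)})-\delta(y_{t-1:t-p}^{(k)})\|^2$. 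The smoothness contribution $l$ and the switching contribution $2p^2L^2$ then combine into the amplification factor $O\!\left((l+2p^2L^2)^k\right)$. For the remaining factor I would adapt the potential-function ROBD analysis of \citep{goel2019beyond,shi2020online} to the nonlinear cost, with the combined Lipschitz constant $pL$ of $\delta$ playing the role of the linear coefficient $\alpha=\sum_i\|C_i\|$; the resulting per-step inequality, summed with the potential drop, yields $\sum_t(H_t^{(0)}+M_t^{(0)})\le\max\{1/\lambda,\,(m+\lambda)/(m+(1-p^2L^2)\lambda)\}\cdot\sum_t(H_t^{*}+M_t^{*})$, which is precisely where $m+(1-p^2L^2)\lambda>0$ is needed.

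I expect the main obstacle to be the deviation-energy step. The recursion $\epsilon_t^{(i)}\le\sum_{j=1}^p L_j\,\epsilon_{t-j}^{(i-j)}$ couples the time index $t$ and the delay level $i$ at once, so one must unroll a two-dimensional, branching recursion while (i) keeping the per-level growth pinned at $O(p^2L^2)$ rather than an inflated constant, and (ii) tracking how the $k$ telescoped levels inside $\|y_t^{(k)}-\hat y_t\|$ interact with the $l$-smoothness conversion so that the hitting- and switching-cost pieces merge into the single clean factor $(l+2p^2L^2)^k$ instead of a looser product. Getting this additive-$l$/multiplicative-$p^2L^2$ split exactly right, and matching it to the $L^{2k}$ lower bound, is the delicate part.
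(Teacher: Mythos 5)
Your proposal is correct in its essentials but takes a genuinely different route from the paper's proof. The paper never exploits your closed-form identity: it argues variationally, proving Lemma~\ref{l.bound2} by strong-convexity comparisons of the two ROBD objectives so as to bound $\|y_t^{(k)}-y_t^{(0)}\|$ in terms of an abstract estimation error $\|v_t^{(k)}-v_t^{(0)}\|$ plus rolled-out deviations, and then using the $\frac{m\lambda}{m+\lambda}$-strong convexity of $\psi(v)=\min_y h_t(y-v)+\lambda c(y,y_{t-1}^{(k-1)},\ldots,y_{t-p}^{(k-p)})$ to generate a negative term $-\frac{m\lambda}{2(m+\lambda)}\|v_t^{(0)}-v_t^{(k)}\|^2$ that, after smoothness conversions with per-level parameters $\eta_i=O(l+2p^2L^2)$ and the $S(i)$ bookkeeping across all delay levels, \emph{absorbs} the accumulated estimation errors; it closes with Lemma~\ref{appendix.c.lemma}, which is exactly your second factor. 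Your engine identity $y_t^{(i)}=\tilde v_t^{(i)}=\delta\bigl(y_{t-1}^{(i-1)},\ldots,y_{t-p}^{(i-p)}\bigr)$ is indeed valid: since $\arg\min_y h_t(y)=0$, the point $\delta(s_{i-1:i-p})$ is the unique minimizer of the optimistic objective (strong convexity of $\psi$ gives uniqueness), and the subsequent ROBD step returns that same point. This is a real observation the paper leaves on the table: it shows iROBD's action ignores $h_t$ and is a pure $\delta$-rollout of the delayed oracle decisions, it seeds the entire delay error by the oracle's own switching cost via $E^{(1)}=2\sum_t M_t^{(0)}$, and it replaces the paper's absorption machinery with an explicit, elementary error-energy propagation at rate $O(p^2L^2)$ per level. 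What the paper's heavier variational argument buys in exchange is robustness: it does not depend on the exact closed form, so it survives modifications (e.g., $\lambda_2>0$ regularization, hitting costs whose minimum value is nonzero, inexact inner minimization) under which your identity breaks, and the same machinery transfers to the tighter linear-case bound of Theorem~\ref{t.delay}.

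One step of yours needs repair. Bounding $H_t^{(k)}\le l\|y_t^{(k)}-\hat y_t\|^2+l\|\hat y_t-v_t\|^2$ and then $\|\hat y_t-v_t\|^2\le\frac{2}{m}H_t^{(0)}$ injects a condition-number factor $\frac{2l}{m}$ into your delay-amplification term that is not in the stated bound and can dominate $(l+2p^2L^2)^k$ when $m$ is small. Use instead the paper's lemma for nonnegative $l$-strongly smooth functions, $h_t(y_t^{(k)}-v_t)\le(1+\eta)\,h_t(\hat y_t-v_t)+\bigl(1+\frac{1}{\eta}\bigr)\frac{l}{2}\|y_t^{(k)}-\hat y_t\|^2$ with $\eta=1$, which gives $H_t^{(k)}\le 2H_t^{(0)}+l\|y_t^{(k)}-\hat y_t\|^2$ with no $1/m$ loss; combined with $\|y_t^{(k)}-\hat y_t\|^2\le k\sum_{a=1}^k(\epsilon_t^{(a)})^2$, your energy recursion, and the binomial estimate $k\,l\,(p^2L^2)^{k-1}\le(l+2p^2L^2)^k$ (which also absorbs the stray $\mathrm{poly}(k)$ factors), this yields the claimed $O\bigl((l+2p^2L^2)^k\bigr)$ amplification. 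With that substitution, and a $\max\{1,1/\lambda\}$ adjustment to pass from the $\lambda$-weighted conclusion of Lemma~\ref{appendix.c.lemma} to your unweighted second factor, your outline delivers the theorem.
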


A detailed proof of \Cref{t.main} is given in \Cref{proofs}. \Cref{t.main} highlights the contrasting impact of memory, feedback, and nonlinear switching cost on the competitive ratio.  Interestingly, feedback delay ($k$) leads to an exponential degradation of the competitive ratio while memory ($p$) and the Lipschitz constant of the nonlinear switching cost ($L$) impact the competitive ratio only in a polynomial manner.

Our proof  exploits the iterative structure of the algorithm, and the key idea is to bound the estimation errors that accumulate from the iterations. Note that this is something that is not necessary in the proofs of competitive ratios in ROBD and Optimistic ROBD, e.g., see \citep{shi2020online}, and leads to significant challenges.

The crucial point about the theorem above is that the competitive ratio does not depend on the time horizon $T$ or the dimension $d$; it only depends on the delay, the memory structure, the convexity and smoothness of the cost functions, and the parameter the algorithm chooses. This means that iROBD is constant-competitive for OCO with feedback delay and structured memory.  It is also interesting to observe that the competitive ratio is exponential in the delay $k$, which highlights that the bound grows quickly as delay grows.  We show later that this is tight, which emphasizes the challenge delay creates for learning. 

The remainder of this section provides insight into the structure and tightness of \Cref{t.main}.  First, we highlight the case where the memory is of length one ($p=1$) and there is no delay.  This corresponds to SOCO with a nonlinear switching cost, a setting which has not been considered previously. The corollary below specializes \Cref{t.main} to this setting. Note that, because there is no delay, iROBD is simply ROBD.

\begin{corollary}\label{c.nonlinear} 
Suppose the hitting costs are $m$-strongly convex and the switching cost is given by $\frac{1}{2}\|y_t-y_{t-1}-\delta(y_{t-1})\|^2$, where $\delta:\mathbb{R}^d\to\mathbb{R}^d$. If there exists a constant $L$ such that
\begin{align}
     \|\delta(a)-\delta(b)\|\le L\|a-b\|,~\forall a,b\in\mathbb{R}^d,\notag
\end{align}
then the competitive ratio of ROBD($\lambda$) is upper bounded by
\begin{align}
     \max\left\{\frac{1}{\lambda},\frac{m+\lambda}{m-L(L+2)\lambda}\right\},\notag
\end{align}
where $\lambda>0$ and $m-L(L+2)\lambda>0$. If $\lambda = \frac{-(m+2L+L^2)+\sqrt{(m+2L+L^2)^2+4m}}{2}$, then the upper bound is \begin{align}
    \frac{1}{2}\left(1+\frac{2L+L^2}{m}+\sqrt{\left(1+\frac{2L+L^2}{m}\right)^2+\frac{4}{m}}\right).\notag
\end{align}
\end{corollary}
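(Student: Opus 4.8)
The plan is to derive Corollary~\ref{c.nonlinear} directly from Theorem~\ref{t.main} by specializing the parameters, rather than by re-proving a competitive bound from scratch. In this regime we set $k=0$ (no delay) and $p=1$ (one-step memory), so that the iterative estimation loop in iROBD collapses entirely and the algorithm reduces to a single call of ROBD($\lambda$), exactly as the text preceding the corollary notes. With $p=1$ there is only one index $i$ to consider in the Lipschitz hypothesis, so the constant $L=\max_i L_i$ of the theorem is just the single Lipschitz constant $L$ of the map $\delta:\mathbb{R}^d\to\mathbb{R}^d$ appearing in the corollary. The first thing I would check carefully is that the switching-cost forms match up: the theorem uses $c=\frac12\|y_t-\delta(y_{t-1:t-p})\|^2$, whereas the corollary writes $\frac12\|y_t-y_{t-1}-\delta(y_{t-1})\|^2$. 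These are reconciled by identifying the theorem's generic nonlinear map with $y_{t-1}\mapsto y_{t-1}+\delta(y_{t-1})$; I would verify that the Lipschitz constant of this combined map is controlled by $1+L$, which is what ultimately produces the $(1+L)$-type factors.

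Next I would substitute $k=0$ and $p=1$ into the general competitive bound
\begin{align}
O\left((l+2p^2L^2)^k\max\left\{\frac{1}{\lambda},\frac{m+\lambda}{m+(1-p^2L^2)\lambda}\right\}\right).\notag
\end{align}
Since $k=0$, the exponential factor $(l+2p^2L^2)^k$ becomes $1$ and the smoothness constant $l$ disappears, which is why the corollary's hypotheses only require $m$-strong convexity and no strong-smoothness assumption. Setting $p=1$ turns the denominator term $m+(1-p^2L^2)\lambda$ into $m+(1-L^2)\lambda$. However, the corollary states the denominator as $m-L(L+2)\lambda = m-(L^2+2L)\lambda$, so there is a genuine discrepancy of $-2L\lambda$ between the naive specialization and the claimed bound. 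I expect this gap to be the main obstacle, and I do not think it is merely cosmetic: it reflects the fact that absorbing the identity shift into $\delta$ (i.e. treating the effective switching map as $y_{t-1}+\delta(y_{t-1})$ with Lipschitz constant $L+1$) feeds an effective constant of $L+1$ rather than $L$ into the denominator computation, producing $(L+1)^2-1 = L^2+2L = L(L+2)$. So the right move is to re-run the relevant step of the Theorem~\ref{t.main} proof with the effective Lipschitz constant $L+1$ in place of $L$ wherever it enters the single-step ROBD descent bound, while keeping $L$ itself in the places governed purely by the form of $\delta$.

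Concretely, I would isolate the one-step competitive inequality that the proof of Theorem~\ref{t.main} establishes for a single ROBD update under an $m$-strongly convex hitting cost and a switching cost whose nonlinear part is Lipschitz, and re-examine how the Lipschitz constant enters the two branches of the $\max$. The $1/\lambda$ branch comes from the regularization weight and is unaffected by $L$, so it carries over unchanged. The second branch is where the switching-cost geometry enters; I would track the constant through the potential-function / dissipation argument used in the $k$-step proof restricted to a single step, confirming that the effective Lipschitz constant is $L+1$ and hence the relevant quadratic coefficient is $(L+1)^2-1=L(L+2)$, giving the denominator $m-L(L+2)\lambda$ and the positivity condition $m-L(L+2)\lambda>0$. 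Once the bound
\begin{align}
\max\left\{\frac{1}{\lambda},\frac{m+\lambda}{m-L(L+2)\lambda}\right\}\notag
\end{align}
is in hand, the final claim is a routine optimization over $\lambda>0$: the two branches are respectively decreasing and increasing in $\lambda$, so the minimizer equalizes them, $\frac{1}{\lambda}=\frac{m+\lambda}{m-L(L+2)\lambda}$, which rearranges to the quadratic $\lambda^2+(m+2L+L^2)\lambda-m=0$; taking the positive root yields the stated $\lambda$, and back-substitution into $1/\lambda$ produces the closed-form competitive ratio. I would present the $\lambda$-optimization only briefly, since it is a one-variable calculation, and concentrate the exposition on justifying the $L(L+2)$ coefficient, which is the one genuinely non-mechanical point.
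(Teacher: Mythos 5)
Your proposal is correct and takes essentially the same route as the paper, which obtains \Cref{c.nonlinear} precisely by specializing \Cref{t.main} to $k=0$, $p=1$ under the effective switching map $y\mapsto y+\delta(y)$ with Lipschitz constant $1+L$, so that $1-p^2L^2$ becomes $1-(1+L)^2=-L(L+2)$, the exact (non-big-$O$) constant comes from the single-step ROBD-versus-OPT comparison (Lemma~\ref{appendix.c.lemma}), and smoothness is indeed unused once $k=0$. Your resolution of the apparent $-2L\lambda$ discrepancy and your closed-form $\lambda$-optimization match the paper's intended derivation (consistent with the matching lower bound in \Cref{remark.1}, which substitutes $\alpha=1+L$).
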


The result in the corollary is of particular interest because it is possible to construct a simple example showing a matching lower bound in this setting, highlighting the tightness of the analysis.  Thus, the optimality of ROBD, which has been proven previously for linear switching costs \citep{goel2019beyond}, extends to settings with nonlinear switching costs. Notice that setting of this corollary includes many practical applications, e.g., the drone example in \Cref{example:drone}.

\subsubsection{Globally or Locally Lipschitz}
To discuss the necessity of the Lipschitz assumptions on the function $\delta$ in \Cref{t.main} and \Cref{c.nonlinear}, we remark on the following two cases where the Lipschitz condition is violated globally or locally.

\begin{remark}\label{remark.1}
Consider hitting costs that are $m$-strongly convex and  switching costs given by $c(y_t,y_{t+1})=\frac{1}{2}\|y_t-y_{t-1}-\delta(y_{t-1})\|^2$, where $\delta:\mathbb{R}^d\to\mathbb{R}^d$. Then there exists a $\delta$ function with Lipschitz constant $L$ such that the competitive ratio of any online algorithm is lower bounded by 
\begin{align}
    \frac{1}{2}\left(1+\frac{2L+L^2}{m}+\sqrt{\left(1+\frac{2L+L^2}{m}\right)^2+\frac{4}{m}}\right),\notag
\end{align}
which exactly matches the upper bound in \Cref{c.nonlinear}. One achieves this bound by setting $\delta(y)=Ly$.  The derivation is included in Appendix \ref{appendix.remark1}. Note that the lower bound is of the order $L^2$. Thus, as $L$ becomes larger the competitive ratio grows unboundedly. So, if $\delta$ is very sensitive to small changes the competitive ratio can be very large.
\end{remark}

Note that if the Lipschitz constraint in \Cref{t.main} is not satisfied then one cannot hope to obtain a constant competitive guarantee for any algorithm, even if the magnitude of $\delta$ is arbitrarily small, as we highlight below.

Having talked about a very high global Lipschitz constant, next we are going to show that the competitive ratio would explode even when the Lipschitz constant is high in a very small interval. 

\begin{remark}\label{remark.2} Consider a $1$-dimensional setting ($d = 1$) with hitting cost $(y_t-v_t)^2$ and switching cost $(y_t-y_{t-1}-\delta(y_{t-1}))^2$, where 
\begin{align}
\delta(y)=\begin{cases}
\epsilon, & y\le n\epsilon \\
-\epsilon\cdot\sin\left(\frac{\pi}{\gamma\epsilon}y-\frac{n\pi}{\gamma}-\frac{\pi}{2}\right), & n\epsilon<y\le n\epsilon+\gamma\epsilon\\
-\epsilon, & y>n\epsilon+\gamma\epsilon
\end{cases}\notag    
\end{align}
with $n\in\mathbb{N}^+$ given in advance. Here, $\max_y|\delta(y)|=\epsilon$ and $\delta$ has Lipschitz constant $L=\frac{\pi}{\gamma}$, which can be unboundedly large when $\gamma$ is small.  In Appendix \ref{appendix.remark2} we show that the cost of any online algorithm is no smaller than $2\epsilon^2$ in this setting. Additionally, we show that the cost of the offline optimal is no larger than $3\gamma\epsilon^2$. Thus, the competitive ratio of any online algorithm is no smaller than $\frac{2}{3\gamma}$. By taking $\gamma$ arbitrarily small, the competitive ratio can become arbitrarily large. A detailed proof can be found in Appendix \ref{appendix.remark2}.
\end{remark}

Contrasting the \Cref{remark.1} and \Cref{remark.2}, we can see that the Lipschitz assumptions in \Cref{t.main} and \Cref{c.nonlinear} are necessary to get a bounded competitive ratio, not artificial consequences of the proof approach.

\subsection{Tightness: Linear Switching Cost}
To further explore the tightness of \Cref{t.main}, we now consider the special case of linear switching cost, i.e., where $\delta(y_{t-l:t-p})=\sum_{i=1}^pC_iy_{t-i}$.  In this setting, we can not only provide an upper bound on the performance of iROBD, but can also show a matching lower bound in terms of the dependency on delay.  

Note that the case of linear switching cost is also of interest in its own right. This case corresponds to online convex optimization with feedback delay and structured memory, i.e., $c(y_{t:t-p})=\frac{1}{2}\|y_t-\sum_{i=1}^pC_iy_{t-i}\|^2$, a setting that captures, for example, trajectory tracking problems in discrete time. Consider \Cref{eq:example-2} in Example 2 with $g(x_t)=0$, where a vehicle is tracking some moving object with locations $v_{1:T}$. At each time step $t$, the vehicle measures $v_t$ and takes a move $u_t$. Due to the communication and process delay from the sensor, the vehicle cannot accurately measure $v_t$ in time. Instead, $v_t$ is measured at time $t+k$. 

Our first result here is the following upper bound.

\begin{theorem}\label{t.delay}
Suppose the hitting costs are $m$-strongly convex and $l$-strongly smooth, and the switching cost is given by $c(y_{t:t-p})=\frac{1}{2}\|y_t-\sum_{i=1}^pC_iy_{t-i}\|^2$, where $C_i\in\mathbb{R}^{d\times d}$ and $\alpha=\sum_{i=1}^p\|C_i\|$. If there is a $k$-round feedback delay, then the competitive ratio of iROBD($\lambda$) is
\begin{align}
    O\left( (l+2\alpha^2)^k\max\left\{\frac{1}{\lambda},\frac{m+\lambda}{m+(1-\alpha^2)\lambda}\right\} \right),
\end{align}
where $\lambda>0$ and $m+(1-\alpha^2)\lambda>0$.
\end{theorem}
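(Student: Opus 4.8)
The plan is to derive Theorem~\ref{t.delay} as a direct corollary of the main result, Theorem~\ref{t.main}, by recognizing that a linear switching cost is simply a special case of a nonlinear switching cost whose Lipschitz constants can be computed explicitly. The switching cost here is $c(y_{t:t-p}) = \frac{1}{2}\|y_t - \sum_{i=1}^p C_i y_{t-i}\|^2$, which matches the form $\frac{1}{2}\|y_t - \delta(y_{t-1:t-p})\|^2$ in Theorem~\ref{t.main} with the specific choice $\delta(y_{t-1:t-p}) = \sum_{i=1}^p C_i y_{t-i}$. Since Theorem~\ref{t.main} already establishes a competitive bound for any $\delta$ satisfying the coordinatewise Lipschitz conditions, the entire task reduces to computing the relevant Lipschitz constants $L_i$ for this linear $\delta$ and substituting them into the general bound.

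The key computation is the following. Fix a coordinate $i$ and hold all arguments except the $i$-th fixed. Then $\theta(x) = \delta(y_{t-1}, \dots, x, \dots, y_{t-p})$ differs from the function evaluated at another point only through the term $C_i x$, so $\|\theta(a) - \theta(b)\| = \|C_i(a-b)\| \le \|C_i\|\,\|a-b\|$. Hence I would set $L_i = \|C_i\|$, giving $L = \max_i \|C_i\|$ in the notation of Theorem~\ref{t.main}. The first step, therefore, is to verify this Lipschitz identity and record $L_i = \|C_i\|$.

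The remaining step is a bookkeeping substitution, but it requires attention because Theorem~\ref{t.main} is stated in terms of $L = \max_i\{L_i\}$ while Theorem~\ref{t.delay} is stated in terms of $\alpha = \sum_{i=1}^p \|C_i\|$. The bound in Theorem~\ref{t.main} contains the factor $(l + 2p^2 L^2)^k$ and the term $p^2 L^2$ inside the max. Since $p L = p\max_i\|C_i\| \ge \sum_i \|C_i\| = \alpha$, the naive substitution gives a bound involving $p^2 L^2$, and I would replace $p^2 L^2$ by $\alpha^2$ everywhere. The cleanest route is to observe that the error-accumulation argument underlying Theorem~\ref{t.main} only ever uses the aggregate Lipschitz quantity $\sum_i L_i$ (which bounds how much a perturbation in the memory can perturb $\delta$), and that $\sum_i L_i = \sum_i \|C_i\| = \alpha$ for the linear case. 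Consequently wherever the general proof produces a factor of the form $p L$ it is in fact bounding $\sum_i L_i$, which equals $\alpha$ exactly here, so $p^2 L^2$ tightens to $\alpha^2$.

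The main obstacle is precisely this last point: ensuring that the $p^2 L^2$ in Theorem~\ref{t.main} genuinely arises as an upper bound on $(\sum_i L_i)^2$ rather than from some other source in the proof, so that the replacement by $\alpha^2$ is legitimate rather than merely plausible. If the general proof ever exploits the uniform bound $L_i \le L$ in a way that does not collapse to $\sum_i L_i$, then the linear-case bound would not automatically sharpen, and I would need to either re-trace the relevant inequality in the proof of Theorem~\ref{t.main} under the sharper quantity $\sum_i L_i = \alpha$, or present Theorem~\ref{t.delay} as following from a parallel argument that tracks $\alpha$ directly. I expect the former to work: the perturbation of $\delta$ under changing all $p$ memory slots is controlled by $\sum_i \|C_i\| = \alpha$, and carrying $\alpha$ through the same recursion that produces the general bound yields $(l + 2\alpha^2)^k$ and the stated $\max\{1/\lambda,\ (m+\lambda)/(m+(1-\alpha^2)\lambda)\}$ directly.
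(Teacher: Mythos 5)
Your plan is essentially correct, and the route you describe as the fallback---re-tracing the argument of Theorem~\ref{t.main} while tracking the aggregate quantity $\sum_i L_i = \sum_i \|C_i\| = \alpha$ rather than the uniform bound $pL$---is exactly what the paper does in Appendix~\ref{appendix.delay}. You are right to flag that the primary route (plugging $L_i=\|C_i\|$ into the \emph{statement} of Theorem~\ref{t.main}) is insufficient: it yields only $O\left((l+2p^2L^2)^k\right)$ with $L=\max_i\|C_i\|$, which is weaker than the claimed bound since $\alpha\le pL$, and the paper itself remarks after Theorem~\ref{t.delay} that the tighter constant requires re-doing the proof. Concretely, the paper proves a linear analogue of Lemma~\ref{l.bound2} (Lemma~\ref{l.bound}), namely $\|y_t^{(k)}-y_t^{(0)}\|^2\le 8\|v_t^{(k)}-v_t^{(0)}\|^2+2\alpha^2\sum_{i=1}^{k-1}\|y_{t-i}^{(k-i)}-y_{t-i}^{(0)}\|^2$, where the $\alpha^2$ arises from $\left\|\sum_i C_i(a_i-b_i)\right\|\le\sum_i\|C_i\|\,\|a_i-b_i\|$ followed by the weighted Cauchy--Schwarz step $\left(\sum_i\|C_i\|u_i\right)^2\le\alpha\sum_i\|C_i\|u_i^2\le\alpha^2\sum_i u_i^2$; this confirms your conjecture that every occurrence of $pL^2$ or $p^2L^2$ in the general proof is just an upper bound on expressions genuinely controlled by $\sum_i L_i$. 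The remainder is the same $\eta$-parameter recursion over delay levels (choosing $\eta_j=O(l+2\alpha^2)$ so that the coefficients $a(j)$ multiplying the estimation errors $\sum_t\|v_t^{(0)}-v_t^{(j)}\|^2$ become non-positive), plus one ingredient your sketch leaves implicit: the final comparison of the no-delay ROBD trajectory against OPT, which in the linear case is quoted from \citep{shi2020online} and supplies the factor $\frac{\lambda(m+\lambda)}{m+(1-\alpha^2)\lambda}$ (the linear counterpart of Lemma~\ref{appendix.c.lemma}); this step also collapses to $\alpha$ rather than $pL$, as you anticipated, so your overall strategy goes through.
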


A proof of \Cref{t.delay} is given in Appendix \ref{appendix.delay}. The bound provided in this theorem resembles that in \Cref{t.main}, but with $C_i$ instead of $L$, making it tighter. Note that obtaining tighter results in this case requires a different proof technique. 

Like in the nonlinear setting, the result for linear switching cost also displays an exponential dependency on delay.  Thus, one may wonder if this dependence is a function of the algorithm or if it is fundamental.  The lower bound result that follows shows that it is fundamental.  

\begin{theorem} \label{t.exp_lowerbound}
Consider hitting costs that are both $m$-strongly convex and $m$-strongly smooth, and switching cost given by $c(y_t, y_{t-1}) = \frac{1}{2}\|y_t - \alpha y_{t-1}\|^2$. If there is a $k$-round feedback delay and $\alpha > 1$, then the competitive ratio of any online algorithm is lower bounded by $\frac{m (\alpha^{2k} - 1)}{\alpha^2 - 1}$.
\end{theorem}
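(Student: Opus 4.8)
The plan is to exhibit a one-dimensional hard instance (so $d=1$), since the stated bound is dimension-free and a lower bound needs only a single family of instances. Because the hitting cost is simultaneously $m$-strongly convex and $m$-strongly smooth with minimizer at $v_t$, I may take $f_t(y)=\frac{m}{2}(y-v_t)^2$, with switching cost $\frac12(y_t-\alpha y_{t-1})^2$; I fix the initial point $y_0=0$ and the horizon $T=k$.

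The core of the argument is a two-instance indistinguishability construction that exploits the $k$-step delay. I would define two instances sharing the same constant geometry $h_t=\frac{m}{2}(\cdot)^2$ but placing the minimizers on opposite geometric ramps: instance $(+)$ uses $v_t=+\alpha^{t-1}$ and instance $(-)$ uses $v_t=-\alpha^{t-1}$, for $t=1,\dots,k$. Since at round $t$ the learner observes only $h_t$ and $v_{t-k}$, and $t-k\le 0$ for every $t\le k$, the information available while choosing $y_1,\dots,y_k$ is identical in the two instances. Hence any deterministic online algorithm is forced to commit the \emph{same} sequence $y_1,\dots,y_k$ in both; for randomized algorithms the same holds in distribution and the argument below runs verbatim with expected costs.

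Next I would upper bound the offline optimum, which is where $\alpha>1$ enters crucially. The ramp $z_t=\alpha^{t-1}$ satisfies $z_t=\alpha z_{t-1}$ for $t\ge 2$, so riding it incurs zero switching cost after the first step, while matching every minimizer exactly gives zero hitting cost. Starting from $z_0=y_0=0$ one only pays the unavoidable initial ``kick'' $\frac12(z_1-\alpha z_0)^2=\frac12$. Thus $cost(OPT)\le \frac12$ in each instance (symmetrically for $(-)$), a constant independent of $k$.

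Finally I would lower bound the online cost by averaging over the two instances. Dropping the nonnegative switching terms and using $(y_t-\alpha^{t-1})^2+(y_t+\alpha^{t-1})^2=2y_t^2+2\alpha^{2(t-1)}\ge 2\alpha^{2(t-1)}$, the sum of the two online costs is at least $\sum_{t=1}^{k} m\,\alpha^{2(t-1)}=m\frac{\alpha^{2k}-1}{\alpha^2-1}$. Therefore the larger of the two online costs is at least $\frac{m}{2}\frac{\alpha^{2k}-1}{\alpha^2-1}$, and dividing by the corresponding $cost(OPT)\le\frac12$ yields a competitive ratio of at least $m\frac{\alpha^{2k}-1}{\alpha^2-1}$ on one of the two instances. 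The main obstacle is not the arithmetic but choosing the right pair of instances: one must simultaneously make them $k$-step indistinguishable to the learner, keep the offline cost a fixed constant via the free ride up the geometric ramp, and ensure a single committed online trajectory cannot hedge both ramps; this last point is exactly what forces the online hitting cost to grow geometrically in $k$.
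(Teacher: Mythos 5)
Your proof is correct, and it uses the same hard instance family as the paper: the geometric ramp $v_t = \alpha^{t-1}$ starting from $y_0 = 0$ with horizon $T = k$, so the learner receives no minimizer information before the game ends, while the offline player rides the ramp $z_t = \alpha z_{t-1}$ and pays only the initial kick $\frac{1}{2}(v_1 - \alpha y_0)^2 = \frac{1}{2}$. Where you genuinely differ is in how the online cost is forced to be large. The paper argues that any algorithm with finite competitive ratio must stay at the origin for all $k$ rounds, because moving would yield an infinite ratio against the alternative instance $v_t \equiv 0$, whose optimal cost is zero; the pinned-down algorithm then pays $\sum_{t=1}^k \frac{m}{2}\alpha^{2t-2}$ against the ramp, giving the bound after dividing by $\frac{1}{2}$. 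You instead run a two-instance averaging argument over the symmetric ramps $v_t = \pm\alpha^{t-1}$: indistinguishability under the $k$-step delay forces a single committed trajectory (in distribution, for randomized algorithms), and the identity $(y_t-\alpha^{t-1})^2 + (y_t+\alpha^{t-1})^2 = 2y_t^2 + 2\alpha^{2(t-1)} \ge 2\alpha^{2(t-1)}$ shows that one of the two instances charges hitting cost at least $\frac{m}{2}\cdot\frac{\alpha^{2k}-1}{\alpha^2-1}$, while both have $cost(OPT) \le \frac{1}{2}$, yielding exactly the stated bound. Your route buys some robustness that the paper's does not: it never invokes an instance with $cost(OPT) = 0$, so it survives definitions of competitive ratio that exclude degenerate instances or permit an additive constant, and it handles randomized algorithms explicitly via expected costs, whereas the paper's ``a competitive algorithm cannot move'' dichotomy implicitly relies on the unbounded-ratio instance and addresses only deterministic behavior. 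The paper's version is shorter, trading these refinements for brevity; both arguments are tight against the same offline ramp strategy.
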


In the study of no-regret online learning \emph{without} switching costs, delay influences regret bounds in a polynomial way, instead of exponentially~\citep{joulani2013online,shamir2017online}.  The contrast provided by the above result highlights that the existence of switching costs (which gives more power to the adversary) and the stronger metric (competitive ratio) makes the impact of delay significantly more dramatic. However, it is also interesting to note that the exponential impact of delay is consistent with what is proven \citep{yu2020competitive} for online control in linear systems, which gives a competitive ratio lower bound $\Omega(\|A\|^k)$ for online control with $k$ steps of delay.

Perhaps surprisingly, for the special case of $c(y_t, y_{t-1}) = \frac{1}{2}\|y_t - y_{t-1}\|^2$ it is possible to break through the exponential dependence on delay, as our final result of the section shows. This case corresponds to the original setting considered in the SOCO literature, e.g., \cite{goel2019online, goel2019beyond}, with the addition of feedback delay.

\begin{theorem}\label{t.poly_upperbound}
Consider hitting costs that are both $m$-strongly convex and $l$-strongly smooth, and the switching costs given by $c(y_t, y_{t-1}) = \frac{1}{2}\|y_t - y_{t-1}\|^2$. When there is a $k$-round feedback delay, there exists an online algorithm that is $poly(k)$-competitive.
\end{theorem}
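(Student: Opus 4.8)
The plan is to exhibit a simple \emph{delayed replay} algorithm and show it is $O(k^2)$-competitive, up to constants depending only on $m$ and $l$. Let $\hat y_{1:T}$ denote the trajectory that no-delay ROBD$(\lambda)$ would produce on $f_{1:T}$. Because at time $t$ the learner has full knowledge of $f_{1:t-k}$ (both the geometry $h_s$ and the minimizers $v_s$ for $s\le t-k$), and ROBD's decision $\hat y_s$ depends only on $f_s$ and past decisions, the learner can compute $\hat y_{t-k}$ at time $t$. The algorithm simply plays $y_t = \hat y_{t-k}$ (and $y_t = y_0$ for $t\le k$). The entire reason the special case $\alpha=1$ escapes the exponential lower bound of \Cref{t.exp_lowerbound} is that replaying a trajectory with a fixed time lag does not inflate the switching cost, whereas an $\alpha>1$ multiplier is what drives the $O(\alpha^{2k})$ blow-up.

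First I would bound the switching cost. Since $y_t - y_{t-1} = \hat y_{t-k}-\hat y_{t-k-1}$ for $t>k$ and $y_t=y_{t-1}$ otherwise, the sum $\sum_t \tfrac12\|y_t-y_{t-1}\|^2$ reindexes to at most the switching cost $\sum_s M_s^{(0)}$ of no-delay ROBD, which is in turn at most $cost(\hat y)\le C_0\cdot cost(OPT)$, where $C_0$ is the constant competitive ratio of ROBD for $m$-strongly convex costs with unit switching cost \citep{goel2019beyond}. Crucially, no factor involving the lag appears here. Next I would bound the hitting cost $\sum_t f_t(\hat y_{t-k}) = \sum_t h_t(\hat y_{t-k}-v_t)$. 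Using $l$-smoothness of $h_t$ about its minimizer $0$ gives $h_t(\hat y_{t-k}-v_t)\le h_t(0)+\tfrac l2\|\hat y_{t-k}-v_t\|^2$ (the offsets $\sum_t h_t(0)$ are bounded by $cost(OPT)$), and I split $\|\hat y_{t-k}-v_t\|^2 \le 2\|\hat y_{t-k}-v_{t-k}\|^2 + 2\|v_{t-k}-v_t\|^2$. The first term is controlled by $m$-strong convexity, $\|\hat y_{t-k}-v_{t-k}\|^2\le \tfrac2m f_{t-k}(\hat y_{t-k})$, so its sum is at most $\tfrac{2C_0}{m}cost(OPT)$. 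For the second term, a Cauchy--Schwarz telescoping yields $\sum_t\|v_{t-k}-v_t\|^2 \le k^2\sum_t\|v_t-v_{t-1}\|^2$, which is precisely where the single polynomial factor $k^2$ enters.

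The remaining and most delicate step is a \emph{minimizer-movement lemma}: $\sum_t\|v_t-v_{t-1}\|^2 \le (6+12/m)\,cost(OPT)$. I would prove it by inserting the offline optimal trajectory, writing $v_t-v_{t-1} = (v_t-y^*_t) + (y^*_t-y^*_{t-1}) + (y^*_{t-1}-v_{t-1})$, applying $\|a+b+c\|^2\le 3(\|a\|^2+\|b\|^2+\|c\|^2)$, and then bounding $\sum_t\|v_t-y^*_t\|^2\le \tfrac2m H^*$ via strong convexity together with $\sum_t\|y^*_t-y^*_{t-1}\|^2 = 2M^*$, both of which are at most $2\,cost(OPT)$. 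Combining the three estimates gives $cost(ALG) = O\!\left(k^2\,\mathrm{poly}(l,1/m)\right)cost(OPT)$, which establishes $poly(k)$-competitiveness. I expect this last lemma --- certifying that a rapidly moving target is necessarily expensive for the offline optimal as well --- to be the main obstacle, since it is the only place where the adversary's minimizer path length must be charged back to $cost(OPT)$ rather than to the algorithm; the $O(k)$ boundary terms arising from $t\le k$ are lower order and handled separately.
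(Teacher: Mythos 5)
Your proof is correct, but it takes a genuinely different route from the paper's. The paper (Section \ref{proof.poly}) analyzes a delayed move-to-minimizer algorithm that plays $x_t = v_{t-k}$ for $t > k$, and bounds each of $f_t(x_0)$, $f_t(v_{t-k})$, and $\frac{1}{2}\|v_t - v_{t-1}\|^2$ directly against $H_t^*$ and $M_t^*$ via smoothness, strong convexity, and Cauchy--Schwarz telescoping through the offline trajectory, arriving at a stated $O(k^3)$ ratio. You instead replay the no-delay ROBD trajectory with lag $k$ ($y_t = \hat y_{t-k}$), which is well-defined since $f_{t-k}$ is fully known at time $t$, and you route both the switching cost (which reindexes exactly to ROBD's) and the on-trajectory hitting error $\|\hat y_{t-k} - v_{t-k}\|^2$ through ROBD's constant competitive ratio $C_0$, so that only the drift term $\sum_t \|v_{t-k} - v_t\|^2 \le k^2 \sum_t \|v_t - v_{t-1}\|^2$ carries any $k$-dependence. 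Your minimizer-movement lemma is, constants included, exactly the paper's inequality \eqref{p.poly_upperbound.e3} summed over $t$ (and doubled), so the step you flagged as the main obstacle is precisely the one ingredient the two proofs share; it is no harder than you suggest. What each approach buys: the paper's M2M argument is fully self-contained (no appeal to ROBD's guarantee, so it does not need \citep{goel2019beyond} as a black box), while your replay argument is more modular and yields the nominally sharper $O\left(k^2 \cdot \mathrm{poly}(l, 1/m)\right)$ bound --- indeed, a careful summation of the paper's own estimates \eqref{p.poly_upperbound.e1}--\eqref{p.poly_upperbound.e3} also gives $O(k^2)$, so the paper's stated $O(k^3)$ is loose rather than a real gap between the methods. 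One small imprecision on your side: the claim that the $t \le k$ boundary terms are ``$O(k)$ \ldots{} lower order'' is not quite right, since with $y_t = y_0$ those hitting costs require the same telescoping (cf.\ \eqref{p.poly_upperbound.e1}) and can contribute on the order of $l k^2 \cdot cost(OPT)$ in the worst case, i.e., the same order as your main term; this does not affect the $O(k^2)$, hence $poly(k)$, conclusion.
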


In Sections \ref{proof.exp} and \ref{proof.poly}, we provide proofs of \Cref{t.exp_lowerbound} and \Cref{t.poly_upperbound} respectively.  

\section{Proofs}\label{proofs}
In this section, we provide an overview of the proofs of the main results in \Cref{Algorithm}.  We defer the proofs of some technical lemmas needed in the analysis to the Appendix when appropriate.

We first present the proof of the $O(L^{2k})$ competitive ratio upper bound in \Cref{t.main} because it is our main result. Then, we prove the lower bound results in \Cref{t.exp_lowerbound} and \Cref{t.poly_upperbound} because they establish the tightness of the dependencies on $k$ and $L$ in our main result \Cref{t.main}.

\subsection{Proof of Theorem \ref{t.main}}\label{p.t.main}
Intuitively, to bound the cost of iROBD with $k$ steps of delay, we will derive relationships between its trajectory and the trajectory of ROBD (Algorithm \ref{robd}), which experiences no delay and has been studied thoroughly in \citep{shi2020online, goel2019beyond}. However, while establishing such a relationship is relatively straightforward in \citep{shi2020online}, the situation becomes considerably more complicated in our setting since iROBD's trajectory can be ``far away'' from no-delay ROBD's trajectory after $k$ ``estimate and solve'' loops (see Line 10-12 in Algorithm \ref{alg}). Therefore, we adopt a novel induction-based proof, where we first reduce iROBD with $k$ steps of delay to iROBD with less than $k$ steps of delay, and then apply the induction hypothesis.

Following this idea, we treat the decision points of no-delay ROBD $y_t^{(0)}$ as a baseline throughout the proof. Since the cost functions are well-conditioned, it suffices to bound the difference $\|{y_t^{(k)} - y_t^{(0)}}\|$ in order to bound the cost incurred by $k$-step-delay iROBD. The impact of delays on iROBD can then be qualified by how fast the difference $\|{y_t^{(k)} - y_t^{(0)}}\|$ increases as the length of delay $k$ grows.

Before the proof of \Cref{t.main}, we first propose a lemma, demonstrating the cumulative nature of the error of iROBD.

\begin{lemma}\label{l.bound2}
The distance between $y_t^{(0)}$ and $y_t^{(k)}$ can be bounded by:
\begin{align}
    \|y_t^{(k)}-y_t^{(0)}\|^2\le8\|v_t^{(k)}-v_t^{(0)}\|^2+2pL^2\sum_{i=1}^{p}\|y_{t-i}^{(k-i)}-y_{t-i}^{(0)}\|^2.\notag
\end{align}
\end{lemma}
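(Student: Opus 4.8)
The plan is to realize both $y_t^{(k)}$ and $y_t^{(0)}$ as minimizers of one and the same ROBD objective that differ only in two ``centers,'' and then to bound the distance between the two minimizers by a nonexpansiveness argument whose constants do not depend on $l$, $m$, or $\lambda$. Write $\delta_k := \delta(y_{t-1}^{(k-1)},\dots,y_{t-p}^{(k-p)})$ and $\delta_0 := \delta(y_{t-1}^{(0)},\dots,y_{t-p}^{(0)})$ for the switching-cost anchors along the two trajectories. Recalling that iROBD runs ROBD($\lambda$) (Algorithm~\ref{robd} with $\lambda_1=\lambda$, $\lambda_2=0$) on the estimated cost $f_t^{(k)}(y)=h_t(y-v_t^{(k)})$ with memory $y_{t-1}^{(k-1)},\dots,y_{t-p}^{(k-p)}$, and on the true cost $f_t^{(0)}(y)=h_t(y-v_t^{(0)})$ with memory $y_{t-1}^{(0)},\dots,y_{t-p}^{(0)}$, we have
\[
y_t^{(k)}=\arg\min_y h_t(y-v_t^{(k)})+\tfrac{\lambda}{2}\|y-\delta_k\|^2,\qquad y_t^{(0)}=\arg\min_y h_t(y-v_t^{(0)})+\tfrac{\lambda}{2}\|y-\delta_0\|^2.
\]
Thus both are values of the single map $Y(v,\delta):=\arg\min_y h_t(y-v)+\tfrac{\lambda}{2}\|y-\delta\|^2$ (well defined and single-valued by strong convexity), and it suffices to control how $Y$ reacts to perturbations of its two arguments.

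The key step is a proximal reformulation of $Y$ that makes these sensitivities parameter-free. Substituting $z=y-v$ gives $Y(v,\delta)=v+\operatorname{prox}_{h_t/\lambda}(\delta-v)$, where $\operatorname{prox}_{h_t/\lambda}(x):=\arg\min_z \tfrac{1}{\lambda}h_t(z)+\tfrac12\|z-x\|^2$ is firmly nonexpansive and hence $1$-Lipschitz. From this representation two bounds follow immediately, with no dependence on the conditioning constants: holding $v$ fixed, $\|Y(v,\delta_1)-Y(v,\delta_2)\|\le\|\delta_1-\delta_2\|$; and holding $\delta$ fixed, the triangle inequality together with $1$-Lipschitzness of the prox gives $\|Y(v_1,\delta)-Y(v_2,\delta)\|\le\|v_1-v_2\|+\|\operatorname{prox}_{h_t/\lambda}(\delta-v_1)-\operatorname{prox}_{h_t/\lambda}(\delta-v_2)\|\le 2\|v_1-v_2\|$. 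Inserting the intermediate point $Y(v_t^{(k)},\delta_0)$ and combining the two estimates yields $\|y_t^{(k)}-y_t^{(0)}\|\le 2\|v_t^{(k)}-v_t^{(0)}\|+\|\delta_k-\delta_0\|$, and squaring via $(a+b)^2\le 2a^2+2b^2$ produces the $8\|v_t^{(k)}-v_t^{(0)}\|^2+2\|\delta_k-\delta_0\|^2$ structure of the claimed bound.

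It then remains to convert $\|\delta_k-\delta_0\|^2$ into the stated memory sum. Changing the arguments of $\delta$ one coordinate at a time and applying the coordinate-wise Lipschitz hypothesis (constants $L_i\le L$) gives $\|\delta_k-\delta_0\|\le\sum_{i=1}^p L_i\|y_{t-i}^{(k-i)}-y_{t-i}^{(0)}\|\le L\sum_{i=1}^p\|y_{t-i}^{(k-i)}-y_{t-i}^{(0)}\|$, and Cauchy--Schwarz, $\big(\sum_{i=1}^p a_i\big)^2\le p\sum_{i=1}^p a_i^2$, upgrades this to $\|\delta_k-\delta_0\|^2\le pL^2\sum_{i=1}^p\|y_{t-i}^{(k-i)}-y_{t-i}^{(0)}\|^2$. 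Substituting into the squared inequality gives exactly $\|y_t^{(k)}-y_t^{(0)}\|^2\le 8\|v_t^{(k)}-v_t^{(0)}\|^2+2pL^2\sum_{i=1}^p\|y_{t-i}^{(k-i)}-y_{t-i}^{(0)}\|^2$.

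I expect the crux to be the second paragraph: showing that $Y$ is nonexpansive in the switching-cost anchor and only mildly expansive (factor $2$) in the hitting-cost center, with constants that do not degrade as $l\to\infty$, $m\to 0$, or $\lambda$ varies. A naive route --- bounding $\|y_t^{(k)}-y_t^{(0)}\|$ by $\tfrac{1}{m+\lambda}$ times the gradient discrepancy of the two objectives --- leaves a factor $l/(m+\lambda)$ on the $\|v_t^{(k)}-v_t^{(0)}\|$ term, which is not uniformly bounded and would contaminate the induction over $k$ used to prove Theorem~\ref{t.main}. The proximal reformulation removes this dependence precisely because $h_t$ enters both objectives through the same operator, so its curvature cancels rather than appearing as an $l/m$-type ratio; engineering this cancellation is the delicate point of the argument.
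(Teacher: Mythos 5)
Your proof is correct, and it arrives at exactly the paper's own intermediate inequality $\|y_t^{(k)}-y_t^{(0)}\|\le 2\|v_t^{(k)}-v_t^{(0)}\|+\|\delta_k-\delta_0\|$, after which your squaring step ($(2a+b)^2\le 8a^2+2b^2$) and the coordinate-wise Lipschitz plus Cauchy--Schwarz conversion of $\|\delta_k-\delta_0\|^2$ coincide with the paper's verbatim. The difference lies in how that inequality is obtained. The paper argues directly from $(m+\lambda)$-strong convexity of the two ROBD objectives: it evaluates each objective's strong-convexity inequality at the \emph{other} minimizer shifted by $v_t^{(0)}-v_t^{(k)}$, so that upon summing the two inequalities the hitting-cost terms cancel and only switching-cost terms survive, from which the bound is extracted. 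You instead package that cancellation into the standard fact that $\operatorname{prox}_{h_t/\lambda}$ is firmly nonexpansive, via the representation $Y(v,\delta)=v+\operatorname{prox}_{h_t/\lambda}(\delta-v)$, and then split the perturbation through the intermediate point $Y(v_t^{(k)},\delta_0)$. At bottom the two arguments carry the same mathematical content --- firm nonexpansiveness is itself proved by summing two strong-convexity/monotonicity inequalities, and your factor $2$ in the $v$-direction is precisely the paper's shift trick in disguise --- but your version is more modular: it isolates, as a reusable one-line lemma, why the sensitivity constants are uniform in $m$, $l$, and $\lambda$, which you correctly identify as the crux (a naive gradient-perturbation bound would leave an $l/(m+\lambda)$ factor that would poison the induction over $k$ in the proof of \Cref{t.main}). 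Your route would even permit a slightly sharper constant: under $m$-strong convexity, $\operatorname{prox}_{h_t/\lambda}$ is a $\frac{\lambda}{m+\lambda}$-contraction, so the $v$-sensitivity is $1+\frac{\lambda}{m+\lambda}<2$, though the lemma as stated does not need this. One shared caveat, not a gap relative to the paper: your prox reformulation, like the paper's proof (which plugs shifted points such as $y_t^{(k)}+v_t^{(0)}-v_t^{(k)}$ into the strong-convexity inequality), implicitly treats the ROBD minimization as unconstrained over $\mathbb{R}^d$; with a proper constraint set $\mathcal{K}$ the substitution $z=y-v$ would shift the feasible set and both arguments would require feasibility of the shifted points.
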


Lemma \ref{l.bound2} bounds of the difference between the decisions of $k$-step-delay iROBD and no-delay ROBD by its counter parts with less steps of delays, as well as an additional error on estimating the true minimizer $v_t$. This additional error will be related to iROBD trajectories with fewer steps of delays later in the main proof. 

\begin{proof}[Proof of \Cref{l.bound2}]
We know that, for any $m$-strongly convex function $g:\mathcal{X}\to\mathbb{R}$ and its minimizer $v$ ($v=\arg\min_{x\in\mathcal{X}}g(x)$), the following inequality holds for all $x\in\mathcal{X}$:
\begin{align}
    g(x)\ge g(v)+\frac{m}{2}\|x-v\|^2.\notag
\end{align}

Therefore, given $y_t^{(0)}=$ROBD$(f_t,y_{t-p:t-1}^{(0)})$ in Line 6 of \Cref{alg}, that is, $y_t^{(0)}=\arg\min_yf_t(y)+\frac{\lambda}{2}\|y-\delta(y_{t-1:t-p})\|^2$, we have that
\begin{align}
    f_t(y_t^{(0)})+&\frac{\lambda}{2}\|y_t^{(0)}-\delta(y_{t-1:t-p}^{(0)})\|^2+\frac{m+\lambda}{2}\|y_t^{(0)}-(y_t^{(k)}+v_t^{(0)}-v_t^{(k)})\|^2\notag\\
\le&f_t(y_t^{(k)}+v_t^{(0)}-v_t^{(k)})+\frac{\lambda}{2}\|y_t^{(k)}+v_t^{(0)}-v_t^{(k)}-\delta(y_{t-1:t-p}^{(0)})\|^2.\notag
\end{align}

Similarly, since $y_{t}^{(k)}\leftarrow\mathrm{ROBD}(f_{t}^{(k)},y_{t-1}^{(k-1)},\cdots,y_{t-k}^{(0)},\cdots,y_{t-p}^{(0)})$ in Line 12 of \Cref{alg}, we have that 
\begin{align}
    f_t(y_t^{(k)}+v_t^{(0)}-v_t^{(k)})+&\frac{\lambda}{2}\|y_t^{(k)}-\delta(y_{t-1}^{(k-1)}\cdots y_{t-p}^{(k-p)})\|^2+\frac{m+\lambda}{2}\|y_t^{(0)}-(y_t^{(k)}+v_t^{(0)}-v_t^{(k)})\|^2\notag\\
\le&f_t(y_t^{(0)})+\frac{\lambda}{2}\|y_t^{(0)}-v_t^{(0)}+v_t^{(k)}-\delta(y_{t-1}^{(k-1)}\cdots y_{t-p}^{(k-p)})\|^2,\notag    
\end{align}
where we used $f_t^{(k)}(y) = f_t(y + v_t^{(0)} - v_t^{(k)})$.

Summing the above two inequalities gives
\begin{align}
(m+\lambda)\|&y_t^{(0)}-y_t^{(k)}+v_t^{(k)}-v_t^{(0)}\|^2\notag\\
\le&\frac{\lambda}{2}\|y_t^{(k)}+v_t^{(0)}-v_t^{(k)}-\delta(y_{t-1:t-p}^{(0)})\|^2 + \frac{\lambda}{2}\|y_t^{(0)}-v_t^{(0)}+v_t^{(k)}-\delta(y_{t-1}^{(k-1)}\cdots y_{t-p}^{(k-p)})\|^2\notag\\
&- \frac{\lambda}{2}\|y_t^{(0)}-\delta(y_{t-1:t-p}^{(0)})\|^2 - \frac{\lambda}{2}\|y_t^{(k)}-\delta(y_{t-1}^{(k-1)}\cdots y_{t-p}^{(k-p)})\|^2\notag\\
\le&\lambda\|y_t^{(0)}-y_t^{(k)}+v_t^{(k)}-v_t^{(0)}\|\|v_t^{(0)}-v_t^{(k)}+(\delta(y_{t-1}^{(k-1)}\cdots y_{t-p}^{(k-p)})-\delta(y_{t-1:t-p}^{(0)}))\|\notag\\
\le&\lambda\|y_t^{(0)}-y_t^{(k)}+v_t^{(k)}-v_t^{(0)}\|\left(\|v_t^{(0)}-v_t^{(k)}\|+\|\delta(y_{t-1}^{(k-1)}\cdots y_{t-p}^{(k-p)})-\delta(y_{t-1:t-p}^{(0)})\|\right).\notag
\end{align}
Therefore, we can see that
\begin{align}
    \|y_t^{(0)}-y_t^{(k)}+v_t^{(k)}-v_t^{(0)}\|\le\|v_t^{(0)}-v_t^{(k)}\|+\|\delta(y_{t-1}^{(k-1)}\cdots y_{t-p}^{(k-p)})-\delta(y_{t-1:t-p}^{(0)})\|,\notag
\end{align}
which implies that
\begin{align}
     \|y_t^{(0)}-y_t^{(k)}\|\le2\|v_t^{(0)}-v_t^{(k)}\|+\|\delta(y_{t-1}^{(k-1)}\cdots y_{t-p}^{(k-p)})-\delta(y_{t-1:t-p}^{(0)})\|.\notag
\end{align}
Take square and use the Cauchy Inequality, and then we have that
\begin{align}
    \|y_t^{(0)}-y_t^{(k)}\|^2\le&8\|v_t^{(0)}-v_t^{(k)}\|^2+2pL^2\sum_{i=1}^{p}\|y_{t-i}^{(k-i)}-y_{t-i}^{(0)}\|^2.\notag
\end{align}
\end{proof}

Next, we use the previous lemma to prove  \Cref{t.main}. Recall that in Lemma \ref{l.bound2}, the decision difference between k-step-delay iROBD and no-delay ROBD is bounded not only by its counter parts with less steps of delays, but also an estimation error $\|v_t^{(k)}-v_t^{(0)}\|^2$. We show that we can bound the impacts of this error on the performance using the strongly-convexity of $f_t$.

\begin{proof}[Proof of \Cref{t.main}]

Define a function $\psi:\mathbb{R}^d\to\mathbb{R}_{\ge0}$ as 
\[\psi(v)=\min_yh_t(y-v)+\lambda c(y,y_{t-1}^{(k-1)},\cdots,y_{t-p}^{(k-p)}).\]
We know that for any function $m$-strongly convex $f:\mathbb{R}^d\to\mathbb{R}$, function $g:\mathbb{R}^d\to\mathbb{R}$ in the form: 

\begin{align}
    g(x)=\min_yf(y)+\frac{\lambda}{2}\|y-x\|^2\notag
\end{align}
is $\frac{m\lambda}{m+\lambda}$-strongly convex as a function of $x$ (see Lemma 3 in \citep{shi2020online}). Thus, we have\begin{subequations}
\begin{align}
     h_t(y_t^{(k)}-v_t^{(k)})+&\lambda c(y_t^{(k)},y_{t-1}^{(k-1},\cdots,y_{t-p}^{(k-p)})+\frac{1}{2}\cdot\frac{m\lambda}{m+\lambda}\|v_t^{(0)}-v_t^{(k)}\|^2\notag\\
=&\psi(v_t^{(k)})+\frac{1}{2}\cdot\frac{m\lambda}{m+\lambda}\|v_t^{(0)}-v_t^{(k)}\|^2\le\psi(v_t^{(0)}).\label{appendix.c.16.1}
\end{align}
\end{subequations}
According to the definition of $\psi$, we can see that
\begin{subequations}
\begin{align}
    \psi(v_t^{(0)})=&\min_yh_t(y-v_t^{(0)})+\lambda c(y,y_{t-1}^{(k-1)},\cdots,y_{t-p}^{(k-p)})\notag\\
\le&h_t(y_t^{(0)}-v_t^{(0)})+\lambda c(y_t^{(0)},y_{t-1}^{(k-1)},\cdots,y_{t-p}^{(k-p)})\notag\\
=&h_t(y_t^{(0)}-v_t^{(0)})+\frac{\lambda}{2}\|y_t^{(0)}-\delta(y_{t-1:t-p}^{(0)})+(\delta(y_{t-1:t-p}^{(0)})-\delta(y_{t-1}^{(k-1)},\cdots,y_{t-p}^{(k-p)}))\|^2\notag\\
\le&h_t(y_t^{(0)}-v_t^{(0)})+\lambda\|y_t^{(0)}-\delta(y_{t-1:t-p}^{(0)})\|^2+\lambda\|\delta(y_{t-1:t-p}^{(0)})-\delta(y_{t-1}^{(k-1)},\cdots,y_{t-p}^{(k-p)})\|^2,\label{appendix.c.16.2}
\end{align}
\end{subequations}
where we have applied AM-GM inequality in \Cref{appendix.c.16.2}. 

Here, we have encountered terms as square of the distance between two $\delta$ functions, where $\delta(y_{t-1:t-p}^{(0)})$ corresponds to ROBD while $\delta(y_{t-1}^{(k-1)},\cdots,y_{t-p}^{(k-p)})$ our algorithm of iROBD. All we know about the $\delta$ function is that it is Lipschitz, so by the Lipschitz condition of it, we have 
\begin{equation}\label{appendix.c.16.3}
\begin{split}
     h_t(y_t^{(0)}-v_t^{(0)})+&\lambda\|y_t^{(0)}-\delta(y_{t-1:t-p}^{(0)})\|^2+\lambda\|\delta(y_{t-1:t-p}^{(0)})-\delta(y_{t-1}^{(k-1)},\cdots,y_{t-p}^{(k-p)})\|^2\\
\le&h_t(y_t^{(0)}-v_t^{(0)})+2\lambda c(y_{t:t-p}^{(0)})+\lambda(\sum_{i=1}^pL\|y_{t-i}^{(k-i)}-y_{t-i}^{(0)}\|)^2.
\end{split}
\end{equation}
And according to the Cauchy Inequality, the line above is no larger than
\begin{equation}\label{appendix.c.16.4}
    h_t(y_t^{(0)}-v_t^{(0)})+2\lambda c(y_{t:t-p}^{(0)})+\lambda pL^2\sum_{i=1}^p\|y_{t-i}^{(k-i)}-y_{t-i}^{(0)}\|^2.
\end{equation}

With \Cref{appendix.c.16.1,appendix.c.16.2,appendix.c.16.3,appendix.c.16.4}, we immediately get

\begin{equation}\label{appendix.c.1}
\begin{split}
    h_t(y_t^{(k)}-v_t^{(k)})+&\lambda c(y_t^{(k)},y_{t-1}^{(k-1)},\cdots,y_{t-p}^{(k-p)})+\frac{1}{2}\cdot\frac{m\lambda}{m+\lambda}\|v_t^{(0)}-v_t^{(k)}\|^2\\
\le&h_t(y_t^{(0)}-v_t^{(0)})+2\lambda c(y_{t:t-p}^{(0)})+\lambda pL^2\sum_{i=1}^p\|y_{t-i}^{(k-i)}-y_{t-i}^{(0)}\|^2.    
\end{split}
\end{equation}

This inequality is important in proving because it bridges the hitting cost of no-delay ROBD, that is, $h_t(y_t^{(0)}-v_t^{(0)})$. Next, we turn to connect the hitting cost of $k$-step-delay iROBD, that is $h_t(y_t^{(k)}-v_t^{(0)})$.

We know that for convex and $l$-strongly smooth function $f:\mathbb{R}^d\to\mathbb{R}_{\ge0}$, the inequality
\begin{align}
    f(y)\le(1+\eta)f(x)+\left(1+\frac{1}{\eta}\right)\cdot\frac{l}{2}\|x-y\|^2\notag
\end{align}
holds for all $\eta>0$. Observing that $h$ is $l$-strongly smooth, we can conclude that, for any $\eta_{1,k}>0$,
\begin{align}\label{appendix.c.2}
    \frac{1}{1+\eta_{1,k}}h_t(y_t^{(k)}-v_t^{(0)})\le h_t(y_t^{(k)}-v_t^{(k)})+\frac{l}{2\eta_{1,k}}\|v_t^{(0)}-v_t^{(k)}\|^2.
\end{align}

Additionally, since the function $\frac{\lambda}{2}\|y_t^{(k)}-y\|^2$ is $\lambda$-strongly smooth in $y$, for any $\eta_{2,k}>0$, we have
\begin{align}\label{appendix.c.3}
    \frac{1}{1+\eta_{2,k}}\cdot&\frac{\lambda}{2}\|y_t^{(k)}-\delta(y_{t-1:t-p}^{(k)})\|^2\notag\\
\le&\frac{\lambda}{2}\|y_t^{(k)}-\delta(y_{t-1}^{(k-1)},\cdots,y_{t-p}^{(k-p)})\|^2+\frac{\lambda}{2\eta_{2,k}}\|\delta(y_{t-1:t-p}^{(k)})-\delta(y_{t-1}^{(k-1)},\cdots,y_{t-p}^{(k-p)})\|^2.
\end{align}

Substituting \Cref{appendix.c.3} and \Cref{appendix.c.2} into \Cref{appendix.c.1}, we have 
\begin{subequations}\allowdisplaybreaks
\begin{align}
    \frac{1}{1+\eta_{1,k}}&h_t(y_t^{(k)}-v_t^{(0)})+\frac{1}{1+\eta_{2,k}}\cdot\frac{\lambda}{2}\|y_t^{(k)}-\delta(y_{t-1:t-p}^{(k)})\|^2\notag\\
\le&h_t(y_t^{(k)}-v_t^{(k)})+\frac{l}{2\eta_{1,k}}\|v_t^{(0)}-v_t^{(k)}\|^2\notag\\
&+\frac{\lambda}{2}\|y_t^{(k)}-\delta(y_{t-1}^{(k-1)},\cdots,y_{t-p}^{(k-p)})\|^2+\frac{\lambda}{2\eta_{2,k}}\|\delta(y_{t-1:t-p}^{(k)})-\delta(y_{t-1}^{(k-1)},\cdots,y_{t-p}^{(k-p)})\|^2\notag\\
\le&h_t(y_t^{(0)}-v_t^{(0)})+2\lambda c(y_{t:t-p}^{(0)})+\lambda pL^2\sum_{i=1}^p\|y_{t-i}^{(k-i)}-y_{t-i}^{(0)}\|^2-\frac{1}{2}\cdot\frac{m\lambda}{m+\lambda}\|v_t^{(0)}-v_t^{(k)}\|^2\notag\\
&+\frac{l}{2\eta_{1,k}}\|v_t^{(0)}-v_t^{(k)}\|^2+\frac{\lambda}{2\eta_{2,k}}\|\delta(y_{t-1:t-p}^{(k)})-\delta(y_{t-1}^{(k-1)},\cdots,y_{t-p}^{(k-p)})\|^2\notag\\
\le&h_t(y_t^{(0)}-v_t^{(0)})+2\lambda c(y_{t:t-p}^{(0)})+\frac{l}{2\eta_{1,k}}\|v_t^{(0)}-v_t^{(k)}\|^2\notag\\
&-\frac{m\lambda}{2m+2\lambda}\|v_t^{(0)}-v_t^{(k)}\|^2+\lambda pL^2\sum_{i=1}^p\|y_{t-i}^{(k-i)}-y_{t-i}^{(0)}\|^2\notag\\
&+\frac{\lambda}{\eta_{2,k}}\|\delta(y_{t-1:t-p}^{(k)})-\delta(y_{t-1:t-p}^{(0)})\|^2+\frac{\lambda}{\eta_{2,k}}\|\delta(y_{t-1}^{(k-1)},\cdots,y_{t-p}^{(k-p)})-\delta(y_{t-1:t-p}^{(0)})\|^2\label{appendix.c.19.1}\\
\le&h_t(y_t^{(0)}-v_t^{(0)})+2\lambda c(y_{t:t-p}^{(0)})+\frac{l}{2\eta_{1,k}}\|v_t^{(0)}-v_t^{(k)}\|^2-\frac{m\lambda}{2m+2\lambda}\|v_t^{(0)}-v_t^{(k)}\|^2\notag\\
&+\lambda pL^2(1+\frac{1}{\eta_{2,k}})\sum_{i=1}^p\|y_{t-i}^{(k-i)}-y_{t-i}^{(0)}\|^2+\frac{\lambda pL^2}{\eta_{2,k}}\sum_{i=1}^p\|y_{t-i}^{(k)}-y_{t-i}^{(0)}\|^2,\label{appendix.c.19.2}
\end{align}
\end{subequations}
where we have applied AM-GM inequality in \Cref{appendix.c.19.1}, the Lipschitz condition of $\delta$ and Cauchy inequality in \Cref{appendix.c.19.2}. Now we already have the relation between the step-wise cost of $k$-step-delay iROBD in the left hand side and the step-wise cost of no-delay ROBD in the right hand side. The problem left is to analyze the impacts of terms of errors on estimating the minimizer and decision difference of iROBD with fewer steps of delays to the no-delay ROBD.

Summing over time and applying \Cref{l.bound2} gives
\begin{align}\label{appendix.c.5}
    \sum_{t=1}^T&\left(\frac{1}{1+\eta_{1,k}}H_t^{(k)}+\frac{\lambda}{1+\eta_{2,k}}M_t^{(k)}\right)\notag\\
\le&\sum_{t=1}^T\left(H_t^{(0)}+2\lambda M_t^{(0)}\right)+\left(\frac{l}{\eta_{1,k}}-\frac{m\lambda}{m+\lambda}\right)\frac{1}{2}\sum_{t=1}^T\|v_t^{(0)}-v_t^{(k)}\|^2\notag\\
&+\frac{\lambda p^2L^2}{\eta_{2,k}}\sum_{t=1}^T\|y_t^{(k)}-y_t^{(0)}\|^2+\lambda pL^2(1+\frac{1}{\eta_{2,k}})\sum_{i=1}^{k-1}\sum_{t=1}^T\|y_t^{(i)}-y_t^{(0)}\|^2\notag\\
\le&2\sum_{t=1}^T\left(H_t^{(0)}+\lambda M_t^{(0)}\right)+\left(\frac{l}{\eta_{1,k}}+\frac{16\lambda p^2L^2}{\eta_{2,k}}-\frac{m\lambda}{m+\lambda}\right)\frac{1}{2}\sum_{t=1}^T\|v_t^{(0)}-v_t^{(k)}\|^2\notag\\
&+\sum_{j=k-1}^{1}16\lambda pL^2(1+\frac{1+2p^2L^2}{\eta_{2,k}})(1+2pL^2)^{k-1-j}\frac{1}{2}\sum_{t=1}^T\|v_t^{(0)}-v_t^{(j)}\|^2.
\end{align}

This structure is rather complicated since it not only involves $\sum_{t=1}^T\|v_t^{(0)}-v_t^{(k)}\|^2$, but also $\|v_t^{(0)}-v_t^{(j)}\|^2$ for $j=1,2,\cdots,k-1$. It just corresponds to \Cref{l.bound2}, where the distance between the choices of iROBD and ROBD consists of errors from past steps.

Finally, pick $\eta_{2,k}=\eta_{k}$ and $\eta_{1,k}=\frac{1+\eta_{k}-\lambda}{\lambda}$ so that $\frac{1}{1+\eta_{1,k}}=\frac{\lambda}{1+\eta_{2,k}}$. Additionally, denote $P(i)$ as
\begin{align}
    P(i)=\frac{\lambda}{1+\eta_{i}}\sum_{t=1}^T\left(H_t^{(i)}+M_t^{(i)}\right).\notag
\end{align}

This yields the following:
\begin{align}
    \frac{1}{\prod_{i=1}^{k-1}\eta_{i}}&P(k)
\le\frac{1}{\prod_{i=1}^{k-1}\eta_{i}}P(k)+\frac{1}{\prod_{i=1}^{k-2}\eta_{i}}P(k-1)+\cdots+\frac{1}{\eta_{1}}P(2)+P(1)\notag\\
\le&(1+\frac{2}{\eta_{1}}+\cdots+\frac{2}{\prod_{i=1}^{k-1}\eta_{i}})\sum_{t=1}^T\left(H_t^{(0)}+\lambda M_t^{(0)}\right)+\sum_{i=1}^{k-1}\frac{\lambda\sum_{t=1}^T\|v_t^{(0)}-v_t^{(i)}\|^2}{2\prod_{j=1}^{i-1}\eta_{j}}\cdot S(i)\notag\\
&+\frac{\lambda}{\prod_{i=1}^{k-1}\eta_{i}}\left(\frac{l}{1+\eta_{k}-\lambda}+\frac{16p^2L^2}{\eta_{k}}-\frac{m}{m+\lambda}\right)\frac{1}{2}\sum_{t=1}^T\|v_t^{(0)}-v_t^{(k)}\|^2,\notag
\end{align}
where the coefficient $S(i)$ is defined as
\begin{align}
    S(i)=\frac{l}{1+\eta_{i}-\lambda}+\frac{16 p^2L^2}{\eta_{i}}-\frac{m}{m+\lambda}+16\sum_{j=i+1}^k(1+\frac{1+2p^2L^2}{\eta_{j}})\cdot\frac{pL^2}{\eta_{i}}\cdot\frac{(1+2p^2L^2)^{j-i-1}}{\prod_{h=i+1}^{j-1}\eta_{h}}.\notag
\end{align}

When $\eta_{i}=O(l+2p^2L^2)$ for $i=1,\cdots,k-1$, we can make the coefficient of $\sum_{t=1}^{T}\|v_t^{(0)}-v_t^{(i)}\|^2$ in the $S(i)$ negative for all $i$. Also, when $\eta_{k}=O(l+2p^2L^2)$, the coefficient of $\sum_{t=1}^{T}\|v_t^{(0)}-v_t^{(k)}\|^2$ in the inequality above negative. It finally leads to
\begin{align}
    \sum_{i=1}^T&\left(H_t^{(k)}+M_t^{(k)}\right)
    =\frac{1+\eta_{k}}{\lambda}P(k)\notag\\
    \le&\frac{1+\eta_{k}}{\lambda}(\prod_{i=1}^{k-1}\eta_{i}+2\sum_{i=2}^{k-1}\prod_{j=i}^{k-1}\eta_{j}+2)\sum_{t=1}^T\left(H_t^{(0)}+\lambda M_t^{(0)}\right)\notag\\
    \le&O((l+2p^2L^2)^k)\max\left\{\frac{1}{\lambda},\frac{m+\lambda}{m+(1-p^2L^2)\lambda}\right\}\sum_{i=1}^T\left(H_t^*+M_t^*\right).\notag
\end{align}

The last inequality is based on the following lemma comparing the performance of ROBD to the optimal solution:
\begin{lemma}\label{appendix.c.lemma}
$\sum_{t=1}^T(H_t^{(0)}+\lambda M_t^{(0)})\le\sum_{t=1}^T\left(H_t^*+\frac{\lambda(m+\lambda)}{m+(1-p^2L^2)\lambda}M_t^*\right).$
\end{lemma}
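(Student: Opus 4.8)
The plan is to prove this lemma — which is exactly the competitive analysis of the no-delay oracle ROBD against the offline optimal, generalizing the linear-switching-cost bound of \citep{shi2020online} to the Lipschitz nonlinear setting — by summing a family of per-step inequalities, without needing any telescoping potential. Since ROBD produces $y_t^{(0)}=\arg\min_y f_t(y)+\frac{\lambda}{2}\|y-\delta(y_{t-1:t-p}^{(0)})\|^2$, its first-order optimality condition reads $\nabla f_t(y_t^{(0)})=-\lambda g_t$ where $g_t:=y_t^{(0)}-\delta(y_{t-1:t-p}^{(0)})$ and $M_t^{(0)}=\frac{1}{2}\|g_t\|^2$. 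Writing $d_t:=y_t^*-y_t^{(0)}$ and invoking $m$-strong convexity of $f_t$ at the pair $(y_t^{(0)},y_t^*)$ yields $H_t^*\ge H_t^{(0)}-\lambda\langle g_t,d_t\rangle+\frac{m}{2}\|d_t\|^2$, the single inequality that brings in the hitting costs of both trajectories.

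Next I would add $\lambda M_t^{(0)}=\frac{\lambda}{2}\|g_t\|^2$ to the rearranged inequality and complete the square: $\lambda\langle g_t,d_t\rangle+\frac{\lambda}{2}\|g_t\|^2=\frac{\lambda}{2}\|g_t+d_t\|^2-\frac{\lambda}{2}\|d_t\|^2$. The crucial observation is that $g_t+d_t=y_t^*-\delta(y_{t-1:t-p}^{(0)})$, which is OPT's switching term except that $\delta$ is evaluated along ROBD's history rather than OPT's. Splitting $y_t^*-\delta(y_{t-1:t-p}^{(0)})=(y_t^*-\delta(y_{t-1:t-p}^*))+(\delta(y_{t-1:t-p}^*)-\delta(y_{t-1:t-p}^{(0)}))$, the triangle inequality with the coordinatewise Lipschitz hypothesis of \Cref{t.main} gives $\|g_t+d_t\|\le\sqrt{2M_t^*}+L\sum_{i=1}^p\|d_{t-i}\|$. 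Applying the Young-type inequality $(x+y)^2\le(1+\rho)x^2+(1+\rho^{-1})y^2$ followed by Cauchy--Schwarz then produces the per-step bound $H_t^{(0)}+\lambda M_t^{(0)}\le H_t^*+\lambda(1+\rho)M_t^*+\frac{\lambda(1+\rho^{-1})pL^2}{2}\sum_{i=1}^p\|d_{t-i}\|^2-\frac{m+\lambda}{2}\|d_t\|^2$.

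Summing over $t$ is what handles the cross-time coupling: since $y_s^{(0)}=y_s^*$ for $s\le0$ (both trajectories start from the fixed initial points), a change of index gives $\sum_{t=1}^T\sum_{i=1}^p\|d_{t-i}\|^2\le p\sum_{s=1}^T\|d_s\|^2$, so the accumulated quadratic-in-$d$ terms collapse into $\left(\frac{\lambda(1+\rho^{-1})p^2L^2}{2}-\frac{m+\lambda}{2}\right)\sum_s\|d_s\|^2$. Forcing this coefficient to vanish requires $1+\rho^{-1}=\frac{m+\lambda}{\lambda p^2L^2}$, which is admissible precisely when $m+(1-p^2L^2)\lambda>0$, and a short simplification then shows $\lambda(1+\rho)=\frac{\lambda(m+\lambda)}{m+(1-p^2L^2)\lambda}$, exactly the claimed coefficient on $M_t^*$. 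The hard part will be the memory bookkeeping in this last step: correctly tracking the $p$ shifted copies of each $\|d_s\|^2$, confirming the boundary terms vanish, and verifying that one global choice of $\rho$ simultaneously cancels the residual quadratic term and yields the target coefficient, since any looser treatment of the Lipschitz/Cauchy--Schwarz step would inflate the $M_t^*$ coefficient beyond the stated value.
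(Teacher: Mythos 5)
Your proof is correct and arrives at exactly the stated constant, but it organizes the argument differently from the paper. Both proofs start from the same core inequality: your first-order-optimality-plus-strong-convexity step, after completing the square, is algebraically identical to the paper's use of $(m+\lambda)$-strong convexity of $g_t(y)=f_t(y)+\frac{\lambda}{2}\|y-\delta(y_{t-1:t-p}^{(0)})\|^2$ at its minimizer, yielding $H_t^{(0)}+\lambda M_t^{(0)}+\frac{m+\lambda}{2}\|d_t\|^2\le H_t^*+\frac{\lambda}{2}\|y_t^*-\delta(y_{t-1:t-p}^{(0)})\|^2$, and both then use the same split of $y_t^*-\delta(y_{t-1:t-p}^{(0)})$ into OPT's switching vector plus the coordinatewise-Lipschitz deviation $\|\delta(y_{t-1:t-p}^*)-\delta(y_{t-1:t-p}^{(0)})\|\le L\sum_{i=1}^p\|d_{t-i}\|$. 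Where you diverge is in discharging the cross-time coupling: the paper introduces the potential $\phi_t=\frac{m+\lambda}{2}\|d_t\|^2$, lower-bounds $\frac{1}{p}\sum_{i=1}^p\phi_{t-i}$ via Jensen's inequality, and picks AM-GM weights so that the $\left(\sum_{i=1}^p\|d_{t-i}\|\right)^2$ terms cancel \emph{within each time step}, leaving the amortized per-step bound in \Cref{appendix.nonlinear.lemma.2} together with the telescoping surplus $\sum_{t=1}^T\bigl(\phi_t-\frac{1}{p}\sum_{i=1}^p\phi_{t-i}\bigr)\ge0$; you instead sum the raw per-step inequalities first, shift indices using $d_s=0$ for $s\le0$ (the same shared-initial-history assumption the paper uses as $\phi_0=\cdots=\phi_{-p+1}=0$), and cancel the accumulated $\|d_s\|^2$ terms globally with a single Young parameter $\rho$ chosen at the end. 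Your route is more elementary — no potential function, no Jensen — and it makes transparent exactly where the admissibility condition $m+(1-p^2L^2)\lambda>0$ enters and why the coefficient $\frac{\lambda(m+\lambda)}{m+(1-p^2L^2)\lambda}$ cannot be improved by this method; the paper's route buys a genuinely per-step (amortized) inequality, which is more modular and mirrors the potential-function analyses of ROBD in prior work. Two small points to tidy up in a final write-up: your zero-gradient optimality condition presumes the ROBD minimization is unconstrained (which matches the paper's treatment, but a constrained version would need the variational inequality instead), and the degenerate case $L=0$ should be handled by skipping the Young split rather than taking $\rho=0$, though the limit gives the consistent coefficient $\lambda$.
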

\noindent Due to space constraints, we defer the proof of \Cref{appendix.c.lemma} to Appendix \ref{appendix.nonlinear+delay}.
\end{proof}

\subsection{Proof of Theorem \ref{t.exp_lowerbound}}\label{proof.exp}

Without loss of generality, consider a $1$-dimensional problem instance where the agent starts at $x_0 = 0$, and the hitting cost sequence is given by $f_t(x) = \frac{m}{2}(x - v_t)^2$ with $v_t = \alpha^{t-1}$. Note that this instance can be extended to $d$-dimensional space easily when $d > 1$ by letting $v_t = (\alpha^{t-1}, 0, \ldots, 0)$. Suppose the horizon $T$ equals to the delay length $k$, which means the online agent cannot observe any information about the hitting costs before the game ends. We discuss the behavior of any competitive online algorithm $ALG$ and an offline adversary $ADV$ that moves to $v_t$ at time step $t$.

Since $ALG$ is competitive, it will stay at the origin throughout the game. This is because $ALG$ cannot observe any information about the hitting cost functions until the end. If $ALG$ moves at any time step, and the hitting cost sequence turns out to be $f_t(x) = \frac{m}{2}x^2, t = 1, \ldots, k$, the ratio between $cost(ALG)$ and $cost(ADV)$ will be $\infty$. Therefore, we see that the algorithm cost is
\begin{equation}\label{t.exp_lowerbound.e1}
    cost(ALG) = \sum_{t=1}^k \frac{m}{2}\cdot \alpha^{2t-2} = \frac{m(\alpha^{2k} - 1)}{2(\alpha^2 - 1)}.
\end{equation}
On the other hand, the total cost incurred by the offline adversary $ADV$ is $cost(ADV) = \frac{1}{2}$, because the only cost it incurs is the switching cost at time step $1$. Combining with \eqref{t.exp_lowerbound.e1}, we see that
\[\frac{cost(ALG)}{cost(OPT)} \geq \frac{cost(ALG)}{cost(ADV)} = \frac{m(\alpha^{2k} - 1)}{\alpha^2 - 1}.\]
Since this inequality holds for any competitive online algorithm $ALG$, we see the competitive ratio of any online algorithm is lower bounded by $\frac{m(\alpha^{2k} - 1)}{\alpha^2 - 1}$.
\qed

\subsection{Proof of Theorem \ref{t.poly_upperbound}}\label{proof.poly}

We can assume $f_t(v_t) = 0$ without loss of generality. We consider a delayed version of a greedy, move to the minimizer (M2M) algorithm. M2M works by picking the decision point
\begin{equation*}
    x_t = \begin{cases}
    x_0 & \text{ if } t \leq k;\\
    v_{t- k} & \text{ if } t > k.
    \end{cases}
\end{equation*}
To simplify the notation, we define $v_0 := x_0$. The total cost incurred by M2M can be expressed as
\begin{equation}\label{p.poly_upperbound.e0}
    cost(M2M) = \sum_{t=1}^k f_t(x_0) + \sum_{t = k + 1}^{T} f_{t}(v_{t-k}) + \sum_{t = 1}^{T - k} \frac{1}{2}\|{v_t - v_{t-1}}\|^2.
\end{equation}
For $t \leq k$, we have
\begin{align}\label{p.poly_upperbound.e1}
    f_t(x_0) &\leq \frac{l}{2}\|{v_t - x_0}\|^2\nonumber\\
    &\leq \frac{l}{2} \left(\|v_t - x_t^*\| + \sum_{\tau = 1}^t \|x_\tau^* - x_{\tau-1}^*\|\right)^2\nonumber\\
    &\leq \frac{l(t+1)}{2} \left(\|v_t - x_t^*\|^2 + \sum_{\tau = 1}^t \|x_\tau^* - x_{\tau-1}^*\|^2\right)\nonumber\\
    &\leq \frac{l(t+1)}{m}H_t^* + l(t+1) \sum_{\tau = 1}^t M_\tau^*.
\end{align}
For $t > k$, we see that
\begin{align}\label{p.poly_upperbound.e2}
    f_t(v_{t-k}) &\leq \frac{l}{2}\|{v_t - v_{t-k}}\|^2\nonumber\\
    &\leq \frac{l}{2}\left(\|v_t - x_t^*\| + \sum_{\tau = t - k + 1}^t \|x_\tau^* - x_{\tau-1}^*\| + \|x_{t-k}^* - v_{t-k}\|\right)^2\nonumber\\
    &\leq \frac{l(k+2)}{2} \left(\|v_t - x_t^*\|^2 + \sum_{\tau = t - k + 1}^t \|x_\tau^* - x_{\tau-1}^*\|^2 + \|x_{t-k}^* - v_{t-k}\|^2\right)\nonumber\\
    &\leq \frac{l(k+2)}{m}(H_t^* + H_{t-k}^*) + l(k+2) \sum_{\tau = t - k + 1}^t M_\tau^*.
\end{align}
For $t \leq T - k$, we see that
\begin{align}\label{p.poly_upperbound.e3}
    \frac{1}{2}\|{v_t - v_{t-1}}\|^2 &\leq \frac{1}{2}\left(\|v_t - x_t^*\| + \|x_t^* - x_{t-1}^*\| + \|v_{t-1} - x_{t-1}^*\|\right)^2\nonumber\\
    &\leq \frac{3}{2}\left(\|v_t - x_t^*\|^2 + \|x_t^* - x_{t-1}^*\|^2 + \|v_{t-1} - x_{t-1}^*\|^2\right)\nonumber\\
    &\leq \frac{3}{m}(H_t^* + H_{t-1}^*) + 3 M_t^*.
\end{align}
Substituting \eqref{p.poly_upperbound.e1}, \eqref{p.poly_upperbound.e2}, and \eqref{p.poly_upperbound.e3} into \eqref{p.poly_upperbound.e0} gives that
\[\frac{cost(M2M)}{cost(OPT)} = O(k^3).\]
\qed

\section{Connection to Online Control}\label{Control}
Deep connections between online optimization and online control have emerged in recent years.  However, the reductions developed in the literature to this point, e.g., \citep{goel2019beyond,shi2020online}, have applied only to limited control settings.  In particular, the most general result so far shows that Input-Disturbed Squared Regulators (IDSR) (\Cref{eq:example-1} with a special form of $w_t$: $w_t=B\Bar{w}_t$) can be reduced to online convex optimization with structured memory. Here, we highlight that the addition of feedback delay and nonlinear switching cost to online optimization with memory significantly expands the class of control problems which can be addressed.  We present two different reductions, one which focuses on linear dynamics and a second which focuses on nonlinear dynamics.

\subsection{Linear Dynamics with Adversarial Disturbances}\label{Delay2OnlineControl}

Our first reduction connects online optimization with delay and memory to a class of linear dynamical system that is more general than possible via reductions in prior work, e.g., in~\citep{goel2019online,shi2020online}. Specifically, we consider 
\begin{equation}
\label{eq:example-1}
\begin{aligned}
    &\min_{u_t}\sum_{t=1}^{T} \frac{q_t}{2}\|x_t\|^2 + \sum_{t=0}^{T-1}\frac{1}{2}\|u_t\|^2 \\
    &\mathrm{s.t.~}\quad x_{t+1}=Ax_t+Bu_t+w_t,
\end{aligned}    
\end{equation}
where $(A,B)$ are in controllable canonical form and $w_t$ is a potentially adversarial disturbance. Note that in \citep{goel2019online} $B$ has to be invertible and \citep{shi2020online} only allows input disturbed systems (i.e., $x_{t+1}=Ax_t+B(u_t+w_t)$). 

\begin{algorithm}[t!]
   \caption{Reduction to OCO with Memory and Delay}
   \label{a.reduction}
\begin{algorithmic}[1]
   \STATE {\bfseries Input:} Transition matrix $A$ and control matrix $B$
   \STATE {\bfseries Solver:} OCO with memory $p$ and delay $p$ algorithm ALG
   \FOR{$t=0$ to $T-1$}
        \STATE {\bfseries Observe:} $x_t$ and $q_{t:t+p-1}$
        \IF{$t>0$}
            \STATE $w_{t-1}\leftarrow x_t-Ax_{t-1}-Bu_{t-1}$
            \STATE $\zeta_{t-1}\leftarrow\psi(w_{t-1})+\sum_{i=1}^pC_i\zeta_{t-1-i}$
        \ENDIF
        \STATE $f_t(y):=\frac{1}{2}\sum\limits_{i=1}^d\sum\limits_{j=1}^{p_i}q_{t+j}\left(y^{(i)}+\zeta_t^{(i)}+r(t+j,i,j)\right)^2$
        \STATE $h_t(y):=\frac{1}{2}\sum\limits_{i=1}^d\sum\limits_{j=1}^{p_i}q_{t+j}\left(y^{(i)}\right)^2$
        \STATE Work out $v_{t-p}\leftarrow\arg\min_vf_{t-p}(y)$
        \STATE Feed $v_{t-p}$ and $h_t$ into ALG
        \STATE Obtain the output of ALG, $y_t$
        \STATE $u_t\leftarrow y_t-\sum_{i=1}^pC_iy_{t-i}$
   \ENDFOR
\end{algorithmic}
\end{algorithm}

Algorithm \ref{a.reduction} presents a reduction from the control problem mentioned above to online convex optimization with structured memory and feedback delay leading to the following theorem. 

\begin{theorem}\label{t.reduction}
Consider the online control problem in \Cref{eq:example-1}. Assume the coefficients $q_{t:t+p-1}$ are observable at step $t$. It can be converted to an instance of OCO with structured memory and feedback delay using Algorithm \ref{a.reduction}. 
\end{theorem}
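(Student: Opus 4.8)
The plan is to treat \Cref{a.reduction} as an explicit construction and to prove it is a \emph{cost-preserving, causal} correspondence between control policies and OCO decision sequences, so that any competitive OCO-with-delay algorithm pulls back to a competitive controller for \Cref{eq:example-1}. First I would use the controllable canonical form of $(A,B)$ to decompose the state into $d$ blocks, block $i$ having controllability index $p_i$ with $p=\max_i p_i$; within each block the noiseless dynamics form a chain of shifts in which only the terminal coordinate is actuated. This structure lets me introduce the OCO variable $y_t$ (the solver output in Line~13) so that the actuation reconstruction $u_t=y_t-\sum_{i=1}^pC_iy_{t-i}$ (Line~14) holds, with the $C_i$ read off from the companion rows of $A$. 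The immediate payoff is that the control cost $\frac12\|u_t\|^2$ becomes \emph{exactly} the structured switching cost $\frac12\|y_t-\sum_{i=1}^pC_iy_{t-i}\|^2$ of the OCO model.

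Next I would substitute this change of variables into the state cost. Because each state coordinate equals a shifted $y$-value plus an accumulated disturbance contribution, and because the actuated coordinate $y_t$ is penalized through the future state costs at physical times $t+1,\dots,t+p$, the OCO hitting cost at index $t$ collects the weights $q_{t+1},\dots,q_{t+p}$ (hence the observability requirement on $q_{t:t+p-1}$) and separates into a geometry part $h_t(y)=\frac12\sum_{i,j}q_{t+j}(y^{(i)})^2$ --- diagonal, strongly convex, strongly smooth, and minimized at $0$ as the model requires --- plus a translation fixed by the disturbances. I would show the past-disturbance accumulation is captured by the auxiliary sequence $\zeta_t$, which obeys the same linear recursion as the noiseless $y$-dynamics driven through $\psi$, i.e. $\zeta_t=\psi(w_t)+\sum_{i=1}^pC_i\zeta_{t-i}$ (\Cref{a.reduction}, Line~7), while the remaining deterministic and future-disturbance offsets are exactly the terms $r(t+j,i,j)$. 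Together these produce the hitting cost $f_t$ of Line~9, whose minimizer $v_t$ is translated from $0$ by $-\zeta_t$ and a $q$-weighted combination of the offsets.

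The delay and causality are the content I would verify last. Since $y_t$ influences only states up to $p$ steps ahead, the geometry $h_t$ --- depending solely on the observable weights --- can be formed at time $t$. The minimizer $v_t$, however, depends on the disturbances $w_t,\dots,w_{t+p-1}$ that corrupt the penalized future states $x_{t+1},\dots,x_{t+p}$, and the last of these, $w_{t+p-1}$, is revealed only once $x_{t+p}$ is observed, i.e. at time $t+p$. Hence $v_t$ is determined exactly $p$ steps late, and at the current time $t$ the agent can supply only $v_{t-p}$ (Line~11), whose latest required disturbance $w_{t-1}$ has just been recovered (Line~6). This produces precisely a $p$-round feedback delay against a known geometry, matching the solver's interface of memory $p$ and delay $p$. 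I would then check that every quantity computed at step $t$ --- the disturbance estimate (Line~6), the $\zeta$ update (Line~7), and the action map (Line~14) --- uses only already-observed data. Because the change of variables is invertible and the only nonmatching terms are disturbance-determined constants common to every policy, the offline optima coincide, so the reduction is faithful and any OCO guarantee (e.g. iROBD via \Cref{t.delay}) transfers to the control cost.

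The main obstacle is the disturbance bookkeeping in the general multi-input canonical form. Unlike the input-channeled case $w_t=B\bar w_t$ of prior work, an arbitrary $w_t$ injects energy into coordinates that are not directly actuated, and I must show it can still be folded exactly into the single recursion for $\zeta_t$ together with the offsets $r(t+j,i,j)$ --- which pins down the precise definition of $\psi$ and explains the triple index on $r$. Doing this while confirming that the future-disturbance dependence resolves in exactly $p$ steps --- neither fewer, which would understate the delay, nor more, which would make $v_{t-p}$ uncomputable at time $t$ --- is the crux. Once the correspondence and this timing are established, the cost identity and the transfer of the competitive ratio follow by direct substitution.
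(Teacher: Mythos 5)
Your proposal follows essentially the same route as the paper's own proof in Appendix~\ref{appendix.reduction1}: the canonical-form change of variables with the accumulated-control sequence $y_t=u_t+\sum_{i=1}^pC_iy_{t-i}$ and accumulated-noise sequence $\zeta_t=\psi(w_t)+\sum_{i=1}^pC_i\zeta_{t-i}$, the decomposition of each state coordinate into a shifted actuated value plus the disturbance offsets $r(t+j,i,j)$, the exact identification of $\frac{1}{2}\|u_t\|^2$ with the structured switching cost, and the observation that $f_t$ involves $w_{t+p-1}$ (revealed only at step $t+p$), yielding precisely a $p$-round delay with cost-preserving transfer of guarantees. The argument is correct and matches the paper's construction in all essentials, including the key bookkeeping identity $x_t^{(1-j+k_i)}=\bigl(\psi(x_{t-j+1})\bigr)^{(i)}+r(t,i,j)$ that you identify as the crux.
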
 

A proof of \Cref{t.reduction} is given in Appendix \ref{appendix.reduction1}. This proof splits the disturbance into an input disturbance part, which can be dealt with using approaches in prior work, and a residual part, which leads to the $k$-round delay and requires a new analysis. From the details of the reduction in Algorithm \ref{a.reduction}  we can see that the cost function $f_t$ in the resulting online optimization has a term $r(t+p_i,i,p_i)$ in it, which involves $w_{t+p_i-1}$. Since $p=\max\{p_i\}$, we know that $f_t$ has a $w_{t+p-1}$, which is not revealed until step $t+p$ in our setting, resulting in a $p$-step feedback delay.

Theorem \ref{t.delay} immediately implies that iROBD provides a constant-competitive online policy for the control problem, even against adversarial disturbances. We also show the state disturbed component of $w_t$ exactly corresponds to multi-round feedback delay in online optimization.  Further, since $cost(ALG)$ and $cost(OPT)$ remain unchanged, the reduction immediately provides competitive policies for the linear system with general adversarial disturbances, based on our constant-competitive algorithm iROBD. To state the result, we define:
\begin{align}
    &q_{min}=\min_{0\le t\le T-1,1\le i\le d}\sum_{j=1}^{p_i}q_{t+j},\notag\\
    &q_{max}=\max_{0\le t\le T-1,1\le i\le d}\sum_{j=1}^{p_i}q_{t+j}.\notag
\end{align}

\begin{corollary}
Consider the online control problem in \Cref{eq:example-1}. Assume the coefficients $q_{t:t+p-1}$ are observable at step $t$. Let $\alpha=\sum_{i=1}^p\|{C_i}\|$. The competitive ratio of Algorithm \ref{a.reduction}, using iROBD($\lambda$) as the solver, is
\begin{align}
    O\left((q_{max}+2\alpha^2)^p\max\left\{\frac{1}{\lambda},\frac{q_{min}+\lambda}{q_{min}+(1-\alpha^2)\lambda}\right\}\right).\notag
\end{align}
\end{corollary}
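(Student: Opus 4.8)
The plan is to derive the corollary by composing the cost-preserving reduction of \Cref{t.reduction} with the competitive-ratio bound for linear switching costs in \Cref{t.delay}; essentially no new analysis is needed beyond a careful identification of constants. \Cref{t.reduction} guarantees that Algorithm~\ref{a.reduction} converts the control problem \Cref{eq:example-1} into an instance of OCO with structured memory of length $p$ and feedback delay $k=p$, in which the switching cost is exactly $c(y_{t:t-p})=\frac12\|y_t-\sum_{i=1}^p C_i y_{t-i}\|^2$, so that $\alpha=\sum_{i=1}^p\|C_i\|$ is the governing constant. As noted after \Cref{t.reduction}, the delay is precisely $p$ because the term $r(t+p_i,i,p_i)$ in $f_t$ embeds the disturbance $w_{t+p-1}$, which is revealed only at step $t+p$. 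Because the reduction preserves both $cost(ALG)$ and $cost(OPT)$, the competitive ratio of Algorithm~\ref{a.reduction} coincides with that of the iROBD solver on the reduced OCO instance, so it suffices to bound the latter.

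First I would read off the conditioning of the hitting-cost geometry directly from its form $h_t(y)=\frac12\sum_{i=1}^d\sum_{j=1}^{p_i} q_{t+j}\,(y^{(i)})^2$. This is a diagonal quadratic whose Hessian has $i$-th diagonal entry $\sum_{j=1}^{p_i} q_{t+j}$, and it is minimized at the origin, matching the model's requirement $\arg\min_y h_t(y)=0$. By the definitions of $q_{min}$ and $q_{max}$, every such diagonal entry lies in $[\,q_{min},\,q_{max}\,]$ uniformly over $t$ and $i$; hence each $h_t$ is $q_{min}$-strongly convex and $q_{max}$-strongly smooth, uniformly across the horizon. Replacing each per-step modulus by the common lower bound $q_{min}$ and the common upper bound $q_{max}$ only weakens the hypotheses of \Cref{t.delay}, so using the uniform constants is legitimate.

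With these identifications, the remaining step is a substitution into \Cref{t.delay} with $m=q_{min}$, $l=q_{max}$, $k=p$, and $\alpha=\sum_{i=1}^p\|C_i\|$: the factor $(l+2\alpha^2)^k$ becomes $(q_{max}+2\alpha^2)^p$ and the factor $\frac{m+\lambda}{m+(1-\alpha^2)\lambda}$ becomes $\frac{q_{min}+\lambda}{q_{min}+(1-\alpha^2)\lambda}$, producing exactly the stated bound; the feasibility conditions $\lambda>0$ and $m+(1-\alpha^2)\lambda>0$ specialize to $\lambda>0$ and $q_{min}+(1-\alpha^2)\lambda>0$, which is why $q_{min}$ appears in the denominator of the max.

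I expect the main obstacle to be bookkeeping rather than conceptual: one must confirm that Algorithm~\ref{a.reduction} produces a delay of exactly $p$ (an off-by-one would change the exponent in the bound) and that the reduction leaves $cost(ALG)$ and $cost(OPT)$ genuinely unchanged, so the two competitive ratios can be equated. Both facts are supplied by \Cref{t.reduction} and its accompanying discussion, after which the corollary follows by the substitution above.
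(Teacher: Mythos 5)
Your proposal is correct and follows essentially the same route as the paper, which treats the corollary as an immediate consequence of composing the cost-preserving reduction of \Cref{t.reduction} (with delay and memory both equal to $p$) with \Cref{t.delay}, substituting $m=q_{min}$, $l=q_{max}$, $k=p$, and $\alpha=\sum_{i=1}^p\|C_i\|$ after reading off the strong convexity and smoothness moduli from the diagonal Hessian entries $\sum_{j=1}^{p_i}q_{t+j}$ of $h_t$. Your bookkeeping of the delay via the $w_{t+p-1}$ term in $r(t+p_i,i,p_i)$ and of the feasibility condition $q_{min}+(1-\alpha^2)\lambda>0$ matches the paper's discussion exactly.
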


Note that, in this corollary, due to the structure in $(A,B)$, the lengths of delay and memory are both $p$, which is also the same as the controllability index of $(A,B)$. 

\subsection{Nonlinear Dynamics with Delay and Time-varying Costs}

Our second reduction connects online optimization with delay and nonlinear switching cost to the following class of online nonlinear control problems:
\begin{equation}
\label{eq:example-2}
\begin{aligned}
    &\min_{u_t}\sum_{t=1}^{T} f_t(x_t) + \sum_{t=0}^{T-1}\frac{1}{2}\|u_t\|^2 \\
    &\mathrm{s.t.~}\quad x_{t+1}=Ax_t+u_t+g(x_t),
\end{aligned}    
\end{equation}
where $\{f_t\}_{t=1}^T$ is time-variant well-conditioned cost (e.g., trajectory tracking cost), and $g(x_t)$ is the nonlinear dynamics term. At time step $t$, only $f_{1:t-k}$ is known due to communication delays. Many robotic systems can be viewed as special cases of this form, such as pendulum dynamics and quadrotor dynamics~\citep{shi2019neural}. It is immediate to see that, by defining $y_t=x_t$, this online control problem can be converted into an online optimization problem with hitting cost $f_t$ and nonlinear switching cost $c(y_t,y_{t-1})=\frac{1}{2}\|y_t-Ay_{t-1}-g(y_{t-1})\|^2$. 

In this section we present a reduction from this class of online control to online convex optimization with nonlinear switching cost and feedback delay. The reduction implies that Theorem \ref{t.main} immediately gives that iROBD provides a constant-competitive online policy for the control problem, even against adversarial disturbances. 

\begin{remark}
    For simplicity of presentation we consider the trajectory tracking task $f_t(x_t)=\frac{1}{2}(x_t-v_t)^\top Q_t(x_t-v_t)$, where $\{v_t\}$ is the desired trajectory to track. However, the cost itself is not necessarily quadratic.  In fact, our algorithm works for general hitting costs $f_t$, if we know the minimizer and the geometry of the function. In other words, we need the parameters of the function to know its ``shape" and the minimizer to locate the function in the space. In this general setting, we just need to modify Line 4 to Line 6 in \Cref{a.reduction-2} to get the general form:
\begin{itemize}
    \item Line 4: Observe $x_t$, $v_t$ and the geometry of $f_t$. 
    \item Line 5: Set the exact $f_{t-k}$ by its geometry and minimizer $v_{t-k}$.
    \item Line 6: Set function $h_t$ by the same geometry as $f_t$ and minimizer at 0.
\end{itemize}
With this modification, the following results still hold.
\end{remark}

\begin{algorithm}[t!]
   \caption{Reduction to Online Optimization with Nonlinear Switching Cost and Delay}
   \label{a.reduction-2}
\begin{algorithmic}[1]
   \STATE {\bfseries Input:} Nonlinear function $g(x)$
   \STATE {\bfseries Solver:} Online optimization with delay $k+1$ and switching cost algorithm ALG
   \FOR{$t=0$ to $T-1$}
        \STATE {\bfseries Observe:} $x_t$, $v_{t-k}$ and $Q_t$
        \STATE Set $f_{t-k-1}(y)=\frac{1}{2}(y-v_{t-k})^TQ_{t-k}(y-v_{t-k})$
        \STATE Set $h_t(y)=\frac{1}{2}y^TQ_ty$
        \STATE $c(y,y_{t-1}):=\frac{1}{2}\|y-Ax_t-g(x_t)\|^2$
        \STATE Feed $f_{t-k-1}$, $h_t$ and $c(y,y_{t-1})$ into ALG
        \STATE Obtain the output of ALG, $y_t$
        \STATE $u_t\leftarrow y_t-Ay_{t-1}-g(y_{t-1})$
   \ENDFOR
\end{algorithmic}
\end{algorithm}


\begin{theorem} \label{t.reduction2}
Consider the online control problem in \Cref{eq:example-2}. If $Q_t$ is observable at step $t$, and only the trajectory $v_{1:t-k}$ is known, i.e., there are $k$ steps of feedback delay, then it can be converted to an instance of online optimization with switching cost and feedback delay using \Cref{a.reduction-2}.
\end{theorem}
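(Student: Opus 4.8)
The plan is to prove the reduction by exhibiting an exact, cost-preserving correspondence between control trajectories and OCO action sequences, so that the competitive guarantee of \Cref{t.main} transfers verbatim. The central device is the reparametrization $y_t = x_t$ noted just above the theorem: since the dynamics $x_{t+1} = Ax_t + u_t + g(x_t)$ are affine-invertible in $u_t$ given $x_t$, committing the control input $u_t$ at step $t$ is equivalent to committing the next state $x_{t+1}$, i.e.\ the OCO action $y_{t+1}$. I would first make this reparametrization precise and then carry out three checks: (i) the total cost is preserved term by term, (ii) the induced switching cost has the nonlinear form required by \Cref{t.main}, and (iii) the feedback-delay bookkeeping matches, with a $k$-step control delay becoming a $(k+1)$-step OCO delay.

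For the cost identity, I would substitute $u_t = y_{t+1} - Ay_t - g(y_t)$ (using $y_t = x_t$) into the control objective of \Cref{eq:example-2} and reindex the input-cost sum, yielding
\begin{align}
\sum_{t=1}^{T} f_t(x_t) + \sum_{t=0}^{T-1}\tfrac{1}{2}\|u_t\|^2
= \sum_{t=1}^{T}\Big( f_t(y_t) + \tfrac{1}{2}\|y_t - Ay_{t-1} - g(y_{t-1})\|^2 \Big),\notag
\end{align}
which is exactly $\sum_t \big(f_t(y_t) + c(y_t,y_{t-1})\big)$ for the switching cost produced in \Cref{a.reduction-2}. Because this is an identity rather than an inequality, it holds simultaneously for the trajectory generated by the solver ALG and for the offline optimum; hence $cost(ALG)$ and $cost(OPT)$ of the control problem equal their OCO counterparts, and the competitive ratio is preserved under the reduction.

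Next I would verify that the resulting instance falls within the scope of \Cref{t.main}. The switching cost matches the nonlinear form \eqref{e.newswitching} with $p=1$ and $\delta(y) = Ay + g(y)$; since $A$ is linear and $g$ is Lipschitz, $\delta$ satisfies the Lipschitz hypothesis with constant $L \le \|A\| + L_g$, where $L_g$ is the Lipschitz constant of $g$. The hitting cost $f_t(x_t) = \tfrac{1}{2}(x_t-v_t)^\top Q_t (x_t-v_t)$ has geometry $h_t(y) = \tfrac{1}{2}y^\top Q_t y$, which is $\lambda_{\min}(Q_t)$-strongly convex and $\lambda_{\max}(Q_t)$-strongly smooth, so the well-conditioning of $\{Q_t\}$ supplies the required $m$ and $l$. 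Thus \Cref{t.main} applies to the reduced instance, and its minimizer-is-revealed-with-delay information model mirrors the control setting, in which the geometry $Q_t$ is observed at step $t$ while the target $v_t$ arrives late.

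The step I expect to be the main obstacle is the delay bookkeeping, since the reparametrization introduces a one-step offset. Because the OCO action $y_{t+1}$ is committed when the controller selects $u_t$ at step $t$, an observation of the minimizer $v_\tau$ that arrives only at control step $\tau + k$ becomes available to the OCO learner only at OCO step $\tau + k + 1$; this is precisely why \Cref{a.reduction-2} invokes a solver with delay $k+1$ rather than $k$. I would then confirm that every quantity \Cref{a.reduction-2} feeds to ALG at step $t$ is genuinely observable then: the geometry $h_t$ uses only the currently revealed $Q_t$, the most recent fully-revealed hitting cost is the one built from the already-arrived minimizer $v_{t-k}$ and $Q_{t-k}$, and the switching cost $\tfrac{1}{2}\|y - Ax_t - g(x_t)\|^2$ is computable from the observed state $x_t$ together with the known map $g$. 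Establishing that this information schedule is causal, and that the $(k+1)$-delay is the correct translation of the $k$-step control delay, completes the reduction.
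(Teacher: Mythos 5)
Your proposal is correct and takes essentially the same route as the paper's proof: set $y_t=x_t$, invert the dynamics to write $u_t=x_{t+1}-Ax_t-g(x_t)$, reindex the input-cost sum into the switching cost $\frac{1}{2}\|y_t-Ay_{t-1}-g(y_{t-1})\|^2$ so the total cost is preserved as an identity for every trajectory, and note that the delayed arrival of $v_{1:t-k}$ becomes feedback delay on the hitting-cost minimizers (your verification of the Lipschitz and conditioning hypotheses is what the paper defers to its corollary). In fact, your bookkeeping of the one-step offset---a $k$-step control delay becoming a $(k+1)$-step OCO delay because the action $y_{t+1}$ is committed when $u_t$ is chosen---is more careful than the paper's proof text, which states only a ``$k$-step delay'' even though \Cref{a.reduction-2} invokes a solver with delay $k+1$; your account is the one consistent with the algorithm.
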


A proof of \Cref{t.reduction2} is given in Appendix \ref{appendix.reudction2}. The reduction in \Cref{a.reduction-2} results from observing that, after defining $y_t=x_t$, the online control problem in \Cref{eq:example-2} can be converted into an online optimization problem with hitting cost $f_t(y_t)=\frac{1}{2}(y_t-v_t)^TQ_t(y_t-v_t)$ and switching cost $c(y_t,y_{t-1})=\frac{1}{2}\|y_t-Ay_{t-1}-g(y_{t-1})\|^2$. Note that the nonlinear switching cost comes from the nonlinear dynamics, and the delayed feedback is coming from delayed information about the target trajectory $v_{1:t}$, i.e., only $v_{1:t-k}$ is known at time step $t$ due to communication delays.

Given that we have proven that iROBD is a constant competitive algorithm for online optimization with feedback delay and nonlinear switching costs, the reduction above immediately brings a competitive policy for class of online control problem with nonlinear dynamics and delay in \Cref{eq:example-2}. This is because $cost(ALG)$ and $cost(OPT)$ remain unchanged in the reduction. To state this formally, suppose the smallest and largest eigenvalue of positive definite matrix $Q_t$ is $\lambda_{min}(t)$ and $\lambda_{max}(t)$ respectively for $t=1,\cdots,T$. Further, define $\lambda_{min}=\min_{t}\{\lambda_{min}(t)\},~\lambda_{max}=\max_{t}\{\lambda_{max}(t)\}.$
Using this notation, we have the following corollary:

\begin{corollary}
    Consider the online control problem in \Cref{eq:example-2} where the $Q_t$ is observable at step $t$. If $\|Ax+g(x)-Ax'-g(x')\|\le L\|x-x'\|$ for any $x,x'\in\mathbb{R}^n$, then the competitive ratio of Algorithm \ref{a.reduction-2} using iROBD($\lambda$) as the solver is upper bounded by:
    \begin{align}
        O\left((\lambda_{max}+2L^2)^k\max\left\{\frac{1}{\lambda},\frac{\lambda_{min}+\lambda}{\lambda_{min}+(1-L^2)\lambda}\right\}\right).\notag
    \end{align}
\end{corollary}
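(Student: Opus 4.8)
The plan is to derive the corollary directly from the cost-preserving reduction of \Cref{t.reduction2} together with the competitive-ratio bound of \Cref{t.main}; the whole argument amounts to checking that the online optimization instance produced by \Cref{a.reduction-2} meets the hypotheses of \Cref{t.main} and then reading off the bound under the identification $p=1$, $m=\lambda_{min}$, $l=\lambda_{max}$, with Lipschitz constant $L$. Concretely, I would first invoke \Cref{t.reduction2}: setting $y_t=x_t$ turns \eqref{eq:example-2} into an online optimization instance with hitting cost $f_t(y_t)=\frac12(y_t-v_t)^\top Q_t(y_t-v_t)$ and single-step nonlinear switching cost $c(y_t,y_{t-1})=\frac12\|y_t-Ay_{t-1}-g(y_{t-1})\|^2$, which is exactly \eqref{e.newswitching} with $p=1$ and $\delta(y)=Ay+g(y)$. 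The control input recovered in Line~10 of \Cref{a.reduction-2}, $u_t=y_t-Ay_{t-1}-g(y_{t-1})$, makes $\frac12\|u_t\|^2$ coincide termwise with $c(y_t,y_{t-1})$, while the hitting terms coincide under $y_t=x_t$; hence $cost(ALG)$ and $cost(OPT)$ are unchanged by the reduction and the competitive ratio of \Cref{a.reduction-2} with solver iROBD($\lambda$) equals that of iROBD($\lambda$) on the reduced instance. Knowing only $v_{1:t-k}$ at time $t$ means the minimizer $v_t$ of $f_t$ is revealed $k$ rounds late, i.e.\ a $k$-round feedback delay on the minimizers.

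Next I would verify the remaining hypotheses of \Cref{t.main}. The Hessian of $f_t$ is $Q_t$, so $f_t$ is $\lambda_{min}(t)$-strongly convex and $\lambda_{max}(t)$-strongly smooth; since strong convexity with a given modulus persists when the modulus is lowered and strong smoothness persists when it is raised, every $f_t$ is simultaneously $\lambda_{min}$-strongly convex and $\lambda_{max}$-strongly smooth, which legitimizes the choices $m=\lambda_{min}$ and $l=\lambda_{max}$. For the switching cost, the hypothesis $\|Ax+g(x)-Ax'-g(x')\|\le L\|x-x'\|$ is precisely the global Lipschitz bound on $\delta(y)=Ay+g(y)$ required by \Cref{t.main}, and with memory length $p=1$ there is a single coordinate block, so $L_1=L=\max_i L_i$.

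Finally I would substitute $p=1$, $m=\lambda_{min}$, $l=\lambda_{max}$ into the bound of \Cref{t.main}, namely $O\big((l+2p^2L^2)^k\max\{1/\lambda,(m+\lambda)/(m+(1-p^2L^2)\lambda)\}\big)$, which collapses term-for-term to
\[
O\!\left((\lambda_{max}+2L^2)^k\max\left\{\tfrac1\lambda,\tfrac{\lambda_{min}+\lambda}{\lambda_{min}+(1-L^2)\lambda}\right\}\right),
\]
the claimed bound. I expect the only genuinely delicate point to be the exact cost-equivalence of the reduction: confirming the off-by-one alignment between the observed state $x_t$ and the actions $y_{t-1},y_t$, so that the recovered control inputs and the switching costs match and the two offline optima agree. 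Everything else is a parameter substitution into \Cref{t.main}.
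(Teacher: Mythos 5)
Your proposal is correct and takes essentially the same route as the paper: the paper obtains this corollary exactly by composing the cost-preserving reduction of \Cref{t.reduction2} (noting that $cost(ALG)$ and $cost(OPT)$ are unchanged since $\frac{1}{2}\|u_t\|^2$ matches the switching cost termwise under $y_t=x_t$) with \Cref{t.main} under the identifications $p=1$, $\delta(y)=Ay+g(y)$, $m=\lambda_{min}$, $l=\lambda_{max}$, and Lipschitz constant $L$. The indexing alignment you flag as delicate is precisely what the paper's proof of \Cref{t.reduction2} verifies, so nothing is missing.
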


This corollary implies that competitive control is more challenging when the system has more delay on the target trajectory (bigger $k$), when the cost functions are less smooth (larger $\lambda_{max}$), or if there are bad Lipschitz properties in the dynamics.  These qualitative observations are consistent with those from the robust control and nonlinear control literature~\citep{slotine1991applied,zhou1996robust}.

\section{Concluding Remarks}
\label{conclusion}
In this paper we propose a new policy, iROBD, for online optimization with feedback delay and nonlinear switching cost.  We show that iROBD obtains constant competitive bound in this setting and provide reductions to online control that provide competitive bounds in that context as well.  Our results are the first to characterize a competitive algorithm in settings with either multi-step delay or nonlinear switching costs, both of which are challenging and practically important factors.

Our results are general, but focus purely on worst cast bounds in the case when the algorithm does not have any information about future costs.  In practice, using predictions is valuable and worst case algorithms can, at times, be overly pessimistic.  Considering average case results in settings with (noisy) predictions is an important future direction.  Additionally, it will be interesting to deploy iROBD in applications with delay and nonlinear switching costs in order to evaluate the performance of the algorithm in practice.  


\bibliographystyle{unsrtnat}
\bibliography{ref}

\newpage
\appendix

\begin{center}
{\huge Appendix}
\end{center}


\appendix

\section{Proof of Lemma \ref{appendix.c.lemma}}\label{appendix.nonlinear+delay}
We begin with a technical lemma, bounding the performance of the oracle decision sequence from ROBD where there is no delay:
\begin{proof}
Define $\phi_t=\frac{m+\lambda}{2}\|y_t^{(0)}-y_t^*\|^2$. Since the function 
\begin{align}
    g_t(y)=f_t(y)+\frac{\lambda}{2}\|y-\delta(y_{t-1:t-p}^{(0)})\|^2\notag
\end{align}
is $(m+\lambda)$-strongly convex, and ROBD selects $y_t^{(0)}=\arg\min_yg_t(y)$, we have that
\begin{align}
    g_t(y_t^{(0)})+\frac{m+\lambda}{2}\|y_t^{(0)}-y_t^*\|^2\le g_t(y_t^*),\notag
\end{align}
which implies that
\begin{align}\label{appendix.nonlinear.lemma.1}
    H_t^{(0)}+\lambda M_t^{(0)}+\left(\phi_t-\frac{1}{p}\sum_{i=1}^p\phi_{t-i}\right)\le H_t^*+\frac{\lambda}{2}\|y_t^*-\delta(y_{t-1:t-p}^{(0)})\|^2-\frac{1}{p}\sum_{i=1}^p\phi_{t-i}.
\end{align}

Applying Jensen's inequality gives
\begin{align}\label{appendix.c.psi}
    \frac{1}{p}\sum_{i=1}^p\phi_{t-i}
    =\frac{m+\lambda}{2p}\sum_{i=1}^p\|y_{t-i}^{(0)}-y_{t-i}^*\|^2
    \ge\frac{m+\lambda}{2p^2}\left(\sum_{i=1}^p\|y_{t-i}^{(0)}-y_{t-i}^*\|\right)^2.
\end{align}

Therefore, we can derive the following bound 
\begin{subequations}\allowdisplaybreaks
\begin{align}
    \frac{\lambda}{2}\|y_t^*-&\delta(y_{t-1:t-p}^{(0)})\|^2-\frac{1}{p}\sum_{i=1}^p\phi_{t-i}\notag\\
    \le&\frac{\lambda}{2}\|y_t^*-\delta(y_{t-1:t-p}^{(0)})\|^2-\frac{m+\lambda}{2p^2}\left(\sum_{i=1}^p\|y_{t-i}^{(0)}-y_{t-i}^*\|\right)^2\label{appendix.c.bound.1}\\
    =&\frac{\lambda}{2}\|y_t^*-\delta(y_{t-1:t-p}^*)-(\delta(y_{t-1:t-p}^{(0)})-\delta(y_{t-1:t-p}^*))\|^2-\frac{m+\lambda}{2p^2}\left(\sum_{i=1}^p\|y_{t-i}^{(0)}-y_{t-i}^*\|\right)^2\notag\\
    \le&\frac{\lambda}{2}\|y_t^*-\delta(y_{t-1:t-p}^*)\|^2+\frac{\lambda}{2}\|\delta(y_{t-1:t-p}^{(0)})-\delta(y_{t-1:t-p}^*)\|^2\notag\\&+\lambda\|y_t^*-\delta(y_{t-1:t-p}^*)\|\cdot\|\delta(y_{t-1:t-p}^{(0)})-\delta(y_{t-1:t-p}^*)\|-\frac{m+\lambda}{2p^2}\left(\sum_{i=1}^p\|y_{t-i}^{(0)}-y_{t-i}^*\|\right)^2\label{appendix.c.bound.2}\\
    \le&\frac{\lambda}{2}\|y_t^*-\delta(y_{t-1:t-p}^*)\|^2+\lambda\|y_t^*-\delta(y_{t-1:t-p}^*)\|\cdot\|\delta(y_{t-1:t-p}^{(0)})-\delta(y_{t-1:t-p}^*)\|\notag\\
    &+\frac{\lambda}{2}\left(L\sum_{i=1}^p\|y_{t-i}^{(0)}-y_{t-i}^*\|\right)^2-\frac{m+\lambda}{2p^2}\left(\sum_{i=1}^p\|y_{t-i}^{(0)}-y_{t-i}^*\|\right)^2\label{appendix.c.bound.3}\\
     =&\frac{\lambda}{2}\|y_t^*-\delta(y_{t-1:t-p}^*)\|^2+\lambda\|y_t^*-\delta(y_{t-1:t-p}^*)\|\cdot\|\delta(y_{t-1:t-p}^{(0)})-\delta(y_{t-1:t-p}^*)\|\notag\\
    &-\frac{m+\lambda(1-p^2L^2)}{2p^2}\left(\sum_{i=1}^p\|y_{t-i}^{(0)}-y_{t-i}^*\|\right)^2\notag\\
    \le&\frac{\lambda}{2}\|y_t^*-\delta(y_{t-1:t-p}^*)\|^2+\frac{\lambda^2p^2L^2}{2(m+\lambda(1-p^2L^2))}\||y_t^*-\delta(y_{t-1:t-p}^*)\|^2\notag\\
    &+\frac{m+\lambda(1-p^2L^2)}{2p^2L^2}\|\delta(y_{t-1:t-p}^{(0)})-\delta(y_{t-1:t-p}^*)\|^2-\frac{m+\lambda(1-p^2L^2)}{2p^2}\left(\sum_{i=1}^p\|y_{t-i}^{(0)}-y_{t-i}^*\|\right)^2\label{appendix.c.bound.4}\\
    \le&\frac{\lambda^2+m\lambda}{2(m+\lambda(1-p^2L^2))}\|y_t^*-\delta(y_{t-1:t-p}^*)\|^2\notag\\
    &+\frac{m+\lambda(1-p^2L^2)}{2p^2L^2}\cdot L^2\left(\sum_{i=1}^p\|y_{t-i}^{(0)}-y_{t-i}^*\|\right)^2-\frac{m+\lambda(1-p^2L^2)}{2p^2}\left(\sum_{i=1}^p\|y_{t-i}^{(0)}-y_{t-i}^*\|\right)^2\label{appendix.c.bound.5}\\
    =&\frac{\lambda^2+m\lambda}{2(m+\lambda(1-p^2L^2))}M_t^*.\notag
\end{align}
\end{subequations}
We have applied \Cref{appendix.c.psi} in \Cref{appendix.c.bound.1}, AM-GM inequality in \Cref{appendix.c.bound.2} and \Cref{appendix.c.bound.4}, the Lipschitz condition of $\delta$ in \Cref{appendix.c.bound.3} and \Cref{appendix.c.bound.5}. In this way, we have made a connection between the last two terms of the right hand side in \Cref{appendix.nonlinear.lemma.1}, and the switching cost of the offline optimal. 

Substituting the above into \Cref{appendix.nonlinear.lemma.1} gives
\begin{align}\label{appendix.nonlinear.lemma.2}
    H_t^{(0)}+\lambda M_t^{(0)}+\left(\phi_t-\frac{1}{p}\sum_{i=1}^p\phi_{t-i}\right)\le H_t^*+\frac{\lambda^2+m\lambda}{2(m+\lambda(1-p^2L^2))}M_t^*.
\end{align}

Focusing on the term in parentheses, we see that 
\begin{align}
    \sum_{t=1}^T\left(\phi_t-\frac{1}{p}\sum_{i=1}^p\phi_{t-i}\right)=\sum_{i=0}^{p-1}\frac{p-i}{p}(\phi_{T-i}-\phi_{-i}).\notag
\end{align}
Since $\phi_t\ge0,\forall t$ and $\phi_0=\phi_{-1}=\cdots=\phi_{-p+1}=0$, we have
\begin{align}
     \sum_{t=1}^T\left(\phi_t-\frac{1}{p}\sum_{i=1}^p\phi_{t-i}\right)\ge0.\notag
\end{align}
Now, returning to \Cref{appendix.nonlinear.lemma.2} and summing over time gives
\begin{align}
    \sum_{t=1}^TH_t^{(0)}+\lambda M_t^{(0)}&\le\sum_{t=1}^T\left(H_t^{(0)}+\lambda M_t^{(0)}\right)+\sum_{t=1}^T\left(\phi_t-\frac{1}{p}\sum_{i=1}^p\phi_{t-i}\right)\notag\\
    &\le\sum_{t=1}^T\left(H_t^*+\frac{\lambda(m+\lambda)}{m+(1-p^2L^2)\lambda}M_t^*\right).\notag
\end{align}
\end{proof}
With this lemma we can easily get the last inequality in the proof of \Cref{t.main}.

\section{Proof of Theorem \ref{t.delay}}\label{appendix.delay}
We prove another preliminary lemma that bounds the distance between iROBD's decision with $k$-step delay, $y_t^{(k)}$, and the oracle ROBD's decision without delay, $y_t^{(0)}$. 
\begin{lemma}\label{l.bound}
The distance between $y_t^{(0)}$ and $y_t^{(k)}$ can be bounded by:
    \begin{align}
        \|y_t^{(k)}-y_t^{(0)}\|^2\le8\|v_t^{(k)}-v_t^{(0)}\|^2+2\alpha^2\sum_{i=1}^{k-1}\|y_{t-i}^{(k-i)}-y_{t-i}^{(0)}\|^2.\notag
    \end{align}
\end{lemma}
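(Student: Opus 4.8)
The plan is to replay the argument of \Cref{l.bound2} almost verbatim, since the only structural change is that the nonlinear map $\delta$ is now the linear map $\delta(y_{t-1:t-p})=\sum_{i=1}^p C_i y_{t-i}$. First I would write down the two $(m+\lambda)$-strong-convexity inequalities coming from the two ROBD solves that define $y_t^{(0)}$ and $y_t^{(k)}$. Recall that $y_t^{(0)}=\arg\min_y f_t(y)+\frac{\lambda}{2}\|y-\sum_{i=1}^pC_iy_{t-i}^{(0)}\|^2$, whereas $y_t^{(k)}$ is the ROBD output using the shifted cost $f_t^{(k)}$ and the memory $(y_{t-1}^{(k-1)},\dots,y_{t-k+1}^{(1)},y_{t-k}^{(0)},\dots,y_{t-p}^{(0)})$. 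Evaluating the strong-convexity bound of the first objective at $y_t^{(k)}+v_t^{(0)}-v_t^{(k)}$ and that of the second (using $f_t^{(k)}(y)=f_t(y+v_t^{(0)}-v_t^{(k)})$) at $y_t^{(0)}-v_t^{(0)}+v_t^{(k)}$, then summing, cancels the hitting-cost terms exactly as before and yields
\[
\|y_t^{(0)}-y_t^{(k)}+v_t^{(k)}-v_t^{(0)}\|\le\|v_t^{(0)}-v_t^{(k)}\|+\Big\|\textstyle\sum_{i=1}^pC_i\big(y_{t-i}^{(\cdot)}-y_{t-i}^{(0)}\big)\Big\|,
\]
where $y_{t-i}^{(\cdot)}$ denotes the $i$-th memory entry actually used by the $k$-delay solve.

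The two places where the linear structure is exploited come next. The key observation is that the memory entry at lag $i$ equals $y_{t-i}^{(k-i)}$ only when $i<k$; for $i\ge k$ the entry is exactly the oracle decision $y_{t-i}^{(0)}$, so those differences vanish. Hence $\sum_{i=1}^pC_i(y_{t-i}^{(\cdot)}-y_{t-i}^{(0)})=\sum_{i=1}^{k-1}C_i(y_{t-i}^{(k-i)}-y_{t-i}^{(0)})$, with the convention $C_i=0$ for $i>p$, which is precisely what truncates the sum at $k-1$ rather than at $p$. Bounding operator norms and using the triangle inequality then gives
\[
\|y_t^{(0)}-y_t^{(k)}\|\le 2\|v_t^{(0)}-v_t^{(k)}\|+\sum_{i=1}^{k-1}\|C_i\|\,\|y_{t-i}^{(k-i)}-y_{t-i}^{(0)}\|.
\]
Squaring and using $(a+b)^2\le 2a^2+2b^2$, together with the weighted Cauchy--Schwarz estimate $\big(\sum_i\|C_i\|b_i\big)^2\le\big(\sum_i\|C_i\|\big)\big(\sum_i\|C_i\|b_i^2\big)\le\alpha^2\sum_ib_i^2$ (invoking $\sum_i\|C_i\|=\alpha$ and $\|C_i\|\le\alpha$), delivers the claimed
\[
\|y_t^{(k)}-y_t^{(0)}\|^2\le 8\|v_t^{(k)}-v_t^{(0)}\|^2+2\alpha^2\sum_{i=1}^{k-1}\|y_{t-i}^{(k-i)}-y_{t-i}^{(0)}\|^2.
\]

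Since the skeleton is inherited from \Cref{l.bound2}, I expect no serious analytic obstacle; the real care is bookkeeping. The one step that must be gotten exactly right is identifying, for the $k$-delay solve, which memory slots carry genuine estimation error ($i<k$) and which coincide with the oracle ($i\ge k$), because it is this truncation---together with replacing the crude $\sum_iL\|\cdot\|$ Lipschitz bound by the sharper operator-norm and weighted Cauchy--Schwarz bound---that produces the tighter $\alpha^2$ constant (in place of $p^2L^2$) and the shortened summation range that feed into \Cref{t.delay}.
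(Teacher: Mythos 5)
Your proposal is correct and follows essentially the same route as the paper's proof in Appendix~\ref{appendix.delay}: the same two $(m+\lambda)$-strong-convexity inequalities for the two ROBD solves evaluated at the shifted points $y_t^{(k)}+v_t^{(0)}-v_t^{(k)}$ and $y_t^{(0)}-v_t^{(0)}+v_t^{(k)}$, the same cancellation upon summing, the same observation that memory slots with lag $i\ge k$ coincide with the oracle decisions $y_{t-i}^{(0)}$ (truncating the sum at $k-1$), and the same triangle-inequality, squaring, and Jensen/weighted Cauchy--Schwarz steps yielding the constants $8$ and $2\alpha^2$. No gaps; your weighted Cauchy--Schwarz estimate is the same inequality the paper invokes as Jensen's inequality.
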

\begin{proof} Let $\delta_t^{(k)}=v_t^{(0)}-v_t^{(k)}$. Since $y_{t}^{(0)}\leftarrow \mathrm{ROBD}(f_{t},y_{t-p,t-1}^{(0)})$,
\begin{align}
    f_t(y_t^{(0)})+&\frac{\lambda}{2}\left\|y_t^{(0)}-\sum_{i=1}^pC_iy_{t-i}^{(0)}\right\|^2+\frac{m+\lambda}{2}\|y_t^{(0)}-y_t^{(k)}-\delta_t^{(k)}\|^2\notag\\
    \le&f_t(y_t^{(k)}+\delta_t^{(k)})+\frac{\lambda}{2}\left\|y_t^{(k)}+\delta_t^{(k)}-\sum_{i=1}^pC_iy_{t-i}^{(0)}\right\|^2.\notag
\end{align}
Also, we have $y_{t}^{(k)}\leftarrow\mathrm{ROBD}(f_{t}^{(k)},y_{t-1}^{(k-1)},\cdots,y_{t-k}^{(0)},\cdots,y_{t-p}^{(0)})$. Then
\begin{align}
    f_t(y_t^{(k)}+&\delta_t^{(k)})+\frac{\lambda}{2}\left\|y_t^{(k)}-C_1y_{t-1}^{(k-1)}-\cdots-C_ky_{t-k}^{(0)}-\cdots-C_py_{t-p}^{(0)}\right\|^2+\frac{m+\lambda}{2}\|y_t^{(0)}-y_t^{(k)}-\delta_t^{(k)}\|^2\notag\\
   \le &f_t(y_t^{(0)})+\frac{\lambda}{2}\left\|y_t^{(0)}-\delta_t^{(k)}-C_1y_{t-1}^{(k-1)}-\cdots-C_ky_{t-k}^{(0)}-\cdots-C_py_{t-p}^{(0)}\right\|^2.\notag
\end{align}
Summing yields
\begin{subequations}
\begin{align}
    (m+\lambda)\|y_t^{(0)}-y_t^{(k)}-\delta_t^{(k)}\|^2\le&\lambda\|\delta_t^{(k)}+\sum_{i=1}^{k-1}C_i(y_{t-i}^{(k-i)}-y_{t-i}^{(0)})\|\|y_t^{(0)}-y_t^{(k)}-\delta_t^{(k)}\|\notag\\
\Longrightarrow\|y_t^{(0)}-y_t^{(k)}\|\le&2\|\delta_t^{(k)}\|+\|\sum_{i=1}^{k-1}C_i(y_{t-i}^{(k-i)}-y_{t-i}^{(0)})\|\label{a.a.e.1.1}\\
\Longrightarrow\|y_t^{(0)}-y_t^{(k)}\|^2\le&8\|\delta_t^{(k)}\|^2+2\|\sum_{i=1}^{k-1}C_i(y_{t-i}^{(k-i)}-y_{t-i}^{(0)})\|^2\label{a.a.e.1.2}\\
\le&8\|\delta_t^{(k)}\|^2+2(\sum_{i=1}^{k-1}\|C_i\|\|y_{t-i}^{(k-i)}-y_{t-i}^{(0)}\|)^2\label{a.a.e.1.3}\\
\le&8\|\delta_t^{(k)}\|^2+2\alpha^2\sum_{i=1}^{k-1}\|y_{t-i}^{(k-i)}-y_{t-i}^{(0)}\|^2.\notag
\end{align}
\end{subequations}
We have used triangle inequality in \Cref{a.a.e.1.1}, AM-GM inequality in \Cref{a.a.e.1.2} and Jensen inequality in \Cref{a.a.e.1.3}. 
\end{proof}

Next, we show that the distance between the action of iROBD and that of ROBD can be bounded via recursions. Then, we turn back to the proof of Theorem 2:

\begin{theorem*}
Suppose the hitting costs are $m$-strongly convex and $l$-strongly smooth, and the switching cost is given by $c(y_{t:t-p})=\frac{1}{2}\|y_t-\sum_{i=1}^pC_iy_{t-i}\|^2$, where $C_i\in\mathbb{R}^{d\times d}$ and $\alpha=\sum_{i=1}^p\|C_i\|$. If there is a $k$-round feedback delay, then the competitive ratio of iROBD($\lambda$) is
\begin{align}
    O\left( (l+2\alpha^2)^k\max\left\{\frac{1}{\lambda},\frac{m+\lambda}{m+(1-\alpha^2)\lambda}\right\} \right).
\end{align}
\end{theorem*}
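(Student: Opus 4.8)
The plan is to follow the same induction-on-delay skeleton used for \Cref{t.main}, treating the no-delay ROBD trajectory $y_t^{(0)}$ as a fixed baseline and reducing everything to controlling the drift $\|y_t^{(k)}-y_t^{(0)}\|$. Since the hitting costs are $m$-strongly convex and $l$-strongly smooth, once this drift and the minimizer-estimation error $\|v_t^{(k)}-v_t^{(0)}\|$ are bounded, both $H_t^{(k)}$ and $M_t^{(k)}$ are controlled relative to $H_t^{(0)},M_t^{(0)}$. The starting point is \Cref{l.bound}, which is already established and is the linear analogue of \Cref{l.bound2}: crucially its recursion carries the factor $2\alpha^2$ rather than $2p^2L^2$ and its sum runs only to $k-1$ rather than $p$. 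Both improvements come directly from linearity, because for $i\ge k$ the $i$-th memory slot of $k$-step-delay iROBD already equals the oracle value $y_{t-i}^{(0)}$, so the difference $\delta(y_{t-1}^{(k-1)},\dots)-\delta(y_{t-1:t-p}^{(0)})$ collapses to $\sum_{i=1}^{k-1}C_i(y_{t-i}^{(k-i)}-y_{t-i}^{(0)})$, and a weighted Jensen step over the operator norms $\|C_i\|$ yields $\alpha^2\sum_{i=1}^{k-1}$ with no spurious factor of $p$.

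With \Cref{l.bound} in hand, the next step is to reproduce the per-step cost inequality of \Cref{appendix.c.1}--\Cref{appendix.c.19.2} in the linear setting. I would again use that $g(x)=\min_y f(y)+\frac{\lambda}{2}\|y-x\|^2$ is $\frac{m\lambda}{m+\lambda}$-strongly convex (Lemma 3 of \citep{shi2020online}) to extract a $\frac{m\lambda}{m+\lambda}\|v_t^{(0)}-v_t^{(k)}\|^2$ cushion, apply $l$-smoothness with free parameters $\eta_{1,k},\eta_{2,k}$ to pass between hitting costs evaluated at $v_t^{(0)}$ and $v_t^{(k)}$, and replace every appearance of the Lipschitz bound by the exact identity $\delta(a)-\delta(b)=\sum_i C_i(a_i-b_i)$ followed by the triangle inequality in operator norm. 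The outcome is a step inequality of the same shape as \Cref{appendix.c.19.2} but with $\alpha^2$ in place of $p^2L^2$. Summing over $t$ and substituting \Cref{l.bound} then turns the decision-drift sums into minimizer-error sums $\sum_t\|v_t^{(0)}-v_t^{(j)}\|^2$ for $j=1,\dots,k$.

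The final step is the telescoping recursion. Setting $\eta_{2,k}=\eta_k$ and $\eta_{1,k}=\frac{1+\eta_k-\lambda}{\lambda}$ as in \Cref{t.main}, defining $P(i)=\frac{\lambda}{1+\eta_i}\sum_t(H_t^{(i)}+M_t^{(i)})$, and unrolling the recursion, I would choose each $\eta_i=\Theta(l+2\alpha^2)$ so that the coefficient multiplying every error sum $\sum_t\|v_t^{(0)}-v_t^{(i)}\|^2$ becomes nonpositive and may be discarded. This leaves $\sum_t(H_t^{(k)}+M_t^{(k)})$ bounded by $\frac{1+\eta_k}{\lambda}\bigl(\prod_{i=1}^{k-1}\eta_i+\cdots\bigr)\sum_t(H_t^{(0)}+\lambda M_t^{(0)})=O((l+2\alpha^2)^k)\sum_t(H_t^{(0)}+\lambda M_t^{(0)})$. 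Closing the argument requires the linear analogue of \Cref{appendix.c.lemma}, namely $\sum_t(H_t^{(0)}+\lambda M_t^{(0)})\le\sum_t\bigl(H_t^*+\tfrac{\lambda(m+\lambda)}{m+(1-\alpha^2)\lambda}M_t^*\bigr)$, whose proof I would sharpen from the nonlinear version by using a potential $\phi_t$ weighted by $\|C_i\|/\alpha$ in place of the uniform $1/p$ weights; this norm-weighted Jensen inequality is exactly what converts the $p^2L^2$ of \Cref{appendix.c.lemma} into $\alpha^2$.

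I expect the main obstacle to be this last sharpening together with the verification that the recursion closes. Getting the tight $\alpha^2$ (rather than the loose $p^2L^2$ one would obtain by treating $\delta$ as a generic $L$-Lipschitz map) is precisely the ``different proof technique'' the statement alludes to: every inequality must exploit the exact linear structure and the norm-weighted averaging, and one must confirm that the single scale $\eta_i=\Theta(l+2\alpha^2)$ simultaneously makes all $k$ error-coefficients negative while keeping the accumulated product $\prod_i\eta_i\cdot\frac{1+\eta_k}{\lambda}$ at order $(l+2\alpha^2)^k$. Verifying the nonpositivity of the aggregated coefficients $S(i)$, which couple contributions from all deeper delay levels, is the delicate bookkeeping that drives the whole bound.
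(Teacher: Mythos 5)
Your proposal is correct and follows essentially the same route as the paper's proof in Appendix \ref{appendix.delay}: the same recursion (\Cref{l.bound}), the same per-step inequality built from the $\frac{m\lambda}{m+\lambda}$-strong convexity of the inf-convolution and smoothness parameters $\eta_{1,k},\eta_{2,k}$ with the exact linear identity replacing the Lipschitz bound, and the same telescoping with $\eta_i=\Theta(l+2\alpha^2)$ to kill the coefficients of the error sums $\sum_t\|v_t^{(0)}-v_t^{(j)}\|^2$. The only difference is at the very end: where you propose to re-derive $\sum_t\bigl(H_t^{(0)}+\lambda M_t^{(0)}\bigr)\le\sum_t\bigl(H_t^*+\tfrac{\lambda(m+\lambda)}{m+(1-\alpha^2)\lambda}M_t^*\bigr)$ by sharpening \Cref{appendix.c.lemma} with a $\|C_i\|/\alpha$-weighted potential, the paper simply cites this inequality from the ROBD analysis of \citep{shi2020online} (see \Cref{appendix.a.5}) --- and your weighted-Jensen sketch is precisely how that cited result is proved, so the gap you flag as the main obstacle is already closed in prior work.
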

\begin{proof}

In particular, define the function $\psi:\mathbb{R}^d\to\mathbb{R}^{+}\bigcup\{0\}$ as
\begin{align}
    \psi(v)=\min_yh_t(y-v)+\lambda c(y,y_{t-1}^{(k-1)},\cdots,y_{t-k}^{(0)},\cdots,y_{t-p}^{(0)}).\notag
\end{align}
We can show that $\psi$ is $\frac{m\lambda}{m+\lambda}$-strongly convex, and $v_t^{(k)}$ minimizes it. Thus,

\begin{subequations}\label{appendix.a.1}
\begin{align}
    h_t(y_t^{(k)}-v_t^{(k)})&+\lambda c(y_t^{(k)},y_{t-1}^{(k-1)},\cdots,y_{t-k}^{(0)},\cdots,y_{t-p}^{(0)})+\frac{1}{2}\cdot\frac{m\lambda}{m+\lambda}\|v_t-v_t^{(k)}\|^2\notag\\
    =&\psi(v_t^{(k)})+\frac{1}{2}\cdot\frac{m\lambda}{m+\lambda}\|v_t-v_t^{(k)}\|^2\notag\\
    \le&\psi(v_t)\notag\\
    =&\min_yh_t(y-v_t)+\lambda c(y,y_{t-1}^{(k-1)},\cdots,y_{t-k}^{(0)},\cdots,y_{t-p}^{(0)})\notag\\
    \le&h_t(y_t^{(0)}-v_t)+\lambda c(y_t^{(0)},y_{t-1}^{(k-1)},\cdots,y_{t-k}^{(0)},\cdots,y_{t-p}^{(0)})\notag\\
    \le&h_t(y_t^{(0)}-v_t)+\frac{\lambda}{2}\left\|y_t^{(0)}-\sum_{i=1}^pC_iy_{t-i}^{(0)}+\sum_{i=1}^{k-1}C_i(y_{t-i}^{(0)}-y_{t-i}^{(k-i)})\right\|^2\notag\\
    \le&h_t(y_t^{(0)}-v_t)+\lambda\left\|y_t^{(0)}-\sum_{i=1}^pC_iy_{t-i}^{(0)}\right\|^2+\lambda\left\|\sum_{i=1}^{k-1}C_i(y_{t-i}^{(0)}-y_{t-i}^{(k-i)})\right\|^2\label{a.a.e.2.1}\\
    \le&h_t(y_t^{(0)}-v_t)+2\lambda c(y_t^{(0)},y_{t-1:t-p}^{(0)})+\lambda\alpha\sum_{i=1}^{k-1}\|C_i\|\|y_{t-i}^{(k-i)}-y_{t-i}^{(0)}\|^2.\label{a.a.e.2.2}
\end{align}
\end{subequations}
We have used the AM-GM inequality in \Cref{a.a.e.2.1} and Jensen's inequality in \Cref{a.a.e.2.2}. 

Since $h$ is $l$-strongly smooth, for any $\eta_{1,k}>0$,
\begin{align}\label{appendix.a.2}
    \frac{1}{1+\eta_{1,k}}h_t(y_t^{(k)}-v_t)\le h_t(y_t^{(k)}-v_t^{(k)})+\frac{l}{2\eta_{1,k}}\|v_t-v_t^{(k)}\|^2.
\end{align}
Next, using the fact that the function $\frac{\lambda}{2}\|y_t^{(k)}-y\|^2$ is $\lambda$-strongly smooth in $y$, for any $\eta_{2,k}>0$, we have
\begin{align}\label{appendix.a.3}
    \frac{1}{1+\eta_{2,k}}&\cdot\frac{\lambda}{2}\left\|y_t^{(k)}-\sum_{i=1}^pC_iy_{t-i}^{(k)}\right\|^2\notag\\
    \le&\frac{\lambda}{2}\left\|y_t^{(k)}-\sum_{i=1}^{k-1}C_iy_{t-i}^{(k-i)}-\sum_{i=k}^pC_iy_{t-i}^{(0)}\right\|^2+\frac{\lambda}{2\eta_{2,k}}\left\|\sum_{i=1}^{k-1}C_i(y_{t-i}^{(k)}-y_{t-i}^{(k-i)})+\sum_{i=k}^pC_i(y_{t-i}^{(k)}-y_{t-i}^{(0)})\right\|^2.
\end{align}
Substituting \Cref{appendix.a.3} and \Cref{appendix.a.2} into \Cref{appendix.a.1}, we have 
\begin{align}\label{appendix.a.4}
    \frac{1}{1+\eta_{1,k}}&h_t(y_t^{(k)}-v_t)+\frac{1}{1+\eta_{2,k}}\cdot\frac{\lambda}{2}\left\|y_t^{(k)}-\sum_{i=1}^pC_iy_{t-i}^{(k)}\right\|^2\notag\\
\le&h_t(y_t^{(k)}-v_t^{(k)})+\frac{l}{2\eta_{1,k}}\|v_t-v_t^{(k)}\|^2\notag\\
&+\frac{\lambda}{2}\left\|y_t^{(k)}-\sum_{i=1}^{k-1}C_iy_{t-i}^{(k-i)}-\sum_{i=k}^pC_iy_{t-i}^{(0)}\right\|^2+\frac{\lambda}{2\eta_{2,k}}\left\|\sum_{i=1}^{k-1}C_i(y_{t-i}^{(k)}-y_{t-i}^{(k-i)})+\sum_{i=k}^pC_i(y_{t-i}^{(k)}-y_{t-i}^{(0)})\right\|^2\notag\\
\le&h_t(y_t^{(0)}-v_t)+2\lambda c(y_t^{(0)},y_{t-1:t-p}^{(0)})+\lambda\alpha\sum_{i=1}^{k-1}\|C_i\|\|y_{t-i}^{(k-i)}-y_{t-i}^{(0)}\|^2-\frac{1}{2}\cdot\frac{m\lambda}{m+\lambda}\|v_t-v_t^{(k)}\|^2\notag\\
&+\frac{l}{2\eta_{1,k}}\|v_t-v_t^{(k)}\|^2+\frac{\lambda}{2\eta_{2,k}}\left\|\sum_{i=1}^{k-1}C_i(y_{t-i}^{(k)}-y_{t-i}^{(k-i)})+\sum_{i=k}^pC_i(y_{t-i}^{(k)}-y_{t-i}^{(0)})\right\|^2\notag\\
\le&h_t(y_t^{(0)}-v_t)+2\lambda c(y_t^{(0)},y_{t-1:t-p}^{(0)})+\frac{l}{2\eta_{1,k}}\|v_t-v_t^{(k)}\|^2-\frac{1}{2}\cdot\frac{m\lambda}{m+\lambda}\|v_t-v_t^{(k)}\|^2\notag\\
&+\frac{\lambda\alpha}{\eta_{2,k}}\sum_{i=1}^p\|C_i\|\|y_{t-i}^{(0)}-y_{t-i}^{(k)}\|^2+\lambda\alpha(1+\frac{1}{\eta_{2,k}})\sum_{i=1}^{k-1}\|C_i\|\|y_{t-i}^{(k-i)}-y_{t-i}^{(0)}\|^2.
\end{align}
We have used the AM-GM inequality, the triangle inequality, and Jensen's inequality in the last step.

Summing \Cref{appendix.a.4} over time and defining $V(k)=\frac{1}{2}\sum_{t=1}^T\|v_t^{(0)}-v_t^{(k)}\|^2$ yields 
\begin{subequations}
\begin{align}
    \min&\left\{\frac{1}{1+\eta_{1,k}},\frac{\lambda}{1+\eta_{2,k}}\right\}\sum_{t=1}^T(H_t^{(k)}+M_t^{(k)})\notag\\
\le&2\sum_{t=1}^T(H_t^{(0)}+\lambda M_t^{(0)})+(\frac{l}{\eta_{1,k}}-\frac{m\lambda}{m+\lambda})\sum_{t=1}^T\frac{1}{2}\|v_t-v_t^{(k)}\|^2\notag\\
&+\frac{\lambda\alpha^2}{\eta_{2,k}}\sum_{t=1}^T\|y_t^{(0)}-y_t^{(k)}\|^2+\lambda\alpha^2(1+\frac{1}{\eta_{2,k}})\sum_{j=1}^{k-1}\sum_{t=1}^T\|y_t^{(j)}-y_t^{(0)}\|^2\label{a.a.e.6.1}\\
\le&2\sum_{t=1}^T(H_t^{(0)}+\lambda M_t^{(0)})+(\frac{l}{\eta_{1,k}}-\frac{m\lambda}{m+\lambda})\sum_{t=1}^TV(k)\notag\\
&+\lambda\alpha^2\left(\frac{1}{\eta_{2,k}}\cdot16V(k)+\sum_{j=k-1}^1(\frac{1+2\alpha^2}{\eta_{2,k}}+1)(1+2\alpha^2)^{k-1-j}\cdot16V(j)\right)\label{a.a.e.6.2}.
\end{align}
\end{subequations}
We have applied \Cref{appendix.a.4} in \Cref{a.a.e.6.1}, and \Cref{l.bound} in \Cref{a.a.e.6.2}. The inequality shows that, the upper bound on the cost of iROBD with delay $k$ does not only involves the estimation error in the $k^\mathrm{th}$ iteration (i.e. $V(k)$), but also involves errors from all previous iterations of estimation (i.e. $V(j)$, $j=1,\cdots,k-1$). To understand the impact of estimation errors from different iterations, we need to analyse the cost of iROBD under different delays, from $1$ to $k$, as a whole. To do this, define $P(k)=\min\left\{\frac{1}{1+\eta_{1,k}},\frac{\lambda}{1+\eta_{2,k}}\right\}\sum_{t=1}^T(H_t^{(k)}+M_t^{(k)})$, and then we have
\begin{align}
\frac{1}{\prod_{i=1}^{k-1}\eta_{2,i}}P(k)
    \le&\frac{1}{\prod_{i=1}^{k-1}\eta_{2,i}}P(k)+\frac{1}{\prod_{i=1}^{k-2}\eta_{2,i}}P(k-1)+\cdots+\frac{1}{\eta_{2,1}}P(2)+P(1)\notag\\
    \le&(1+\frac{2}{\eta_{2,1}}+\cdots+\frac{2}{\prod_{i=1}^{k-1}\eta_{2,i}})\sum_{t=1}^T(H_t^{(0)}+M_t^{(0)})\notag\\
    &+(\frac{l}{\eta_{1,k}}-\frac{m\lambda}{m+\lambda}+\frac{16\lambda\alpha^2}{\eta_{2,k}})\frac{V(k)}{\prod_{i=1}^{k-1}\eta_{2,i}}+\sum_{j=k-1}^1a(j)V(j).
\end{align}
Here the coefficient $a(j)$ is
\begin{align}
    a(j)=&\frac{1}{\prod_{i=1}^{j-1}\eta_{2,i}}\left(\frac{l}{\eta_{1,j}}-\frac{m\lambda}{m+\lambda}+16\lambda\sum_{i=j+1}^k\left(\left(1+\frac{1+2\alpha^2}{\eta_{2,i}}\right)\frac{\alpha^2}{\eta_{2,j}}\prod_{q=j+1}^{i-1}\left(\frac{1+2\alpha^2}{\eta_{2,q}}\right)\right)\right).
\end{align}
Here $\eta_{1,i}$ and $\eta_{2,i}$ are parameters from \Cref{appendix.a.2} and \Cref{appendix.a.3}. For $i=1,\cdots,k$, we pick $\eta_{2,i}=\eta_{i}$ and $\eta_{1,i}=\frac{1-\eta_{i}+\lambda}{\lambda}$, so that $\frac{1}{1+\eta_{1,j}}=\frac{\lambda}{1+\eta_{2,j}}$. This gives
\begin{align}\label{a.a.coefficient}
    a(j)=&\frac{\lambda}{\prod_{i=1}^{j-1}\eta_{i}}\left(\frac{l}{1+\eta_{j}-\lambda}-\frac{m}{m+\lambda}+16\sum_{i=j+1}^k\left(\left(1+\frac{1+2\alpha^2}{\eta_{i}}\right)\frac{\alpha^2}{\eta_{j}}\prod_{q=j+1}^{i-1}\left(\frac{1+2\alpha^2}{\eta_{q}}\right)\right)\right).
\end{align}
When $\eta_{j}=O(l+2\alpha^2)$ for all $j$, the negative term $-\frac{m\lambda}{m+\lambda}$ will dominate in $a(j)$. Thus, we can make the coefficient $a(j)$ non-positive, and then we have
\begin{subequations}
\begin{align}
    \frac{\lambda}{(1+\eta_{k})\prod_{i=1}^{k-1}\eta_{i}}&\sum_{t=1}^T(H_t^{(k)}+M_t^{(k)})
=\frac{1}{\prod_{i=1}^{k-1}\eta_{i}}P(k)\notag\\
\le&(1+\frac{2}{\eta_{1}}+\cdots+\frac{2}{\prod_{i=1}^{k-1}\eta_{i}})\sum_{t=1}^T(H_t^{(0)}+\lambda M_t^{(0)})\notag\\
\le&(1+\frac{2}{\eta_{1}}+\cdots+\frac{2}{\prod_{i=1}^{k-1}\eta_{i}})\sum_{t=1}^T\left(H_t^*+\frac{\lambda(m+\lambda)}{m+(1-\alpha^2)\lambda}M_t^*\right).\label{appendix.a.last}
\end{align}
\end{subequations}

Finally, recall that the sequence $\{y_t^{(0)}\}$ is identical to the decisions of ROBD. Thus, \Cref{appendix.a.last} follows from the analysis on ROBD in \citep{shi2020online}, which shows that
\begin{align}\label{appendix.a.5}
    \sum_{t=1}^T(H_t^{(0)}+\lambda M_t^{(0)})\le\sum_{t=1}^T\left(H_t^*+\frac{\lambda(m+\lambda)}{m+(1-\alpha^2)\lambda}M_t^*\right),
\end{align}
Therefore, we have that
\begin{align}
    \sum_{t=1}^T(H_t^{(k)}+M_t^{(k)})\le\frac{1+\eta_{2,k}}{\lambda}(\prod_{i=1}^{k-1}\eta_{i}+2\sum_{i=2}^{k-1}\prod_{j=i}^{k-1}\eta_{j}+2)\sum_{t=1}^T\left(\frac{1}{\lambda}H_t^*+\frac{m+\lambda}{m+(1-\alpha^2)\lambda}M_t^*\right),\notag
\end{align}
which immediately leads to a bound on the competitive ratio of iROBD of $(O(l+2\alpha^2))^k\max\{\frac{1}{\lambda},\frac{m+\lambda}{m+(1-\alpha^2)\lambda}\}$.

\end{proof}

\section{Proof and Example of Theorem \ref{t.reduction}}\label{appendix.reduction1}

In this section we will present a reduction (Algorithm \ref{a.reduction}) from the control problem mentioned in \Cref{Delay2OnlineControl} to online convex optimization with structured memory and feedback delay, and then provide a proof of \Cref{t.reduction}.

We restate here the online control problem:\begin{align*}
    &\min_{u_t}\sum_{t=1}^{T} \frac{q_t}{2}\|x_t\|^2 + \sum_{t=0}^{T-1}\frac{1}{2}\|u_t\|^2 \\
    &\mathrm{s.t.~}\quad x_{t+1}=Ax_t+Bu_t+w_t.
\end{align*}    
We are going to transform the control problem above into the form of minimizing $\sum_{t=1}^Tf_t(y_t)+c(y_{t:t-p})$. Before presenting the reduction, we introduce necessary notations for \begin{enumerate*}[label=(\arabic*)]
    \item canonical pair $(A,B)$; \item extracted matrices $\{C_i\}$; \item accumulative disturbances $r(t,i,j)$
\end{enumerate*}:

In \Cref{eq:example-1} we have assumed pair $(A,B)$ is in controllable canonical form. To be specific, canonical $(A,B)$ is in the following form:
\begin{center}
    \includegraphics[width=0.6\textwidth]{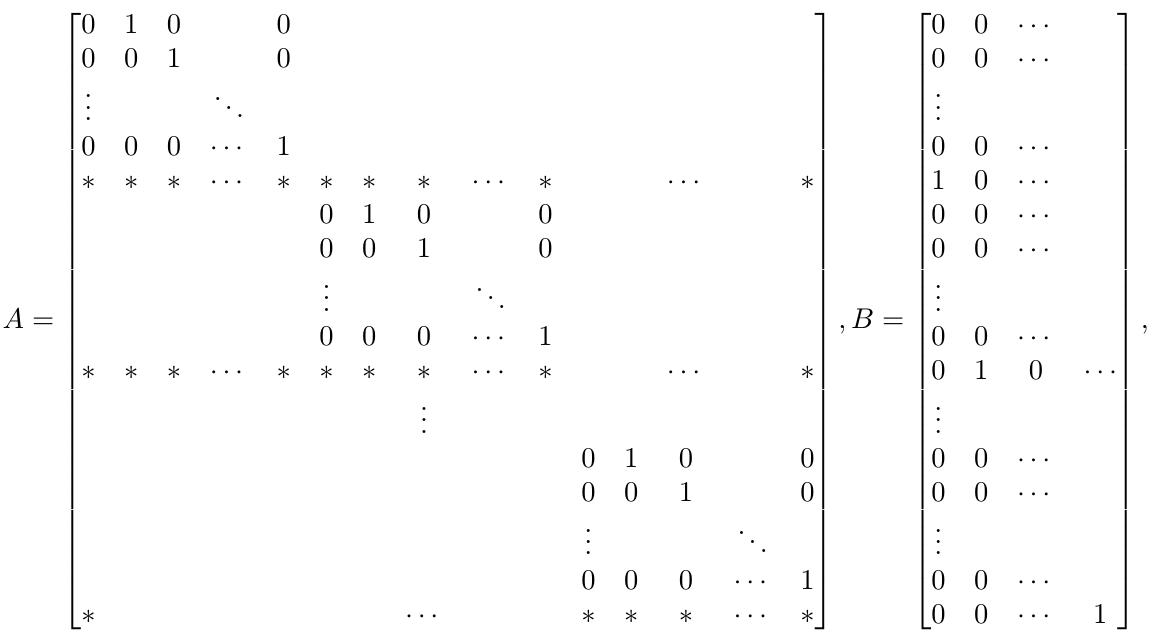}
\end{center}
where each $*$ represents a (possibly) non-zero entry, and the rows of $B$ with $1$ are the same rows of $A$ with $*$ \citep{luenberger1967canonical,li2019online}. It is well-known that any controllable system can be linearly transformed to the canonical form. Denote the indices of non-zero rows in matrix $B$ to be $\{k_1\cdots k_d\}$, and denote the set to be $\mathcal{I}$. Define a mapping $\psi:\mathbb{R}^n\to\mathbb{R}^d$ as $\psi(x)=\left(x^{(k_1)},\cdots,x^{(k_d)}\right)^T.$
Let $p_i=k_i-k_{i-1}$ for $1\le i\le d$, where $k_0=0$. The controllability index of $(A,B)$ is defined by $p=\max\{p_i\}$~\citep{li2019online}. 

Also, define $C_i\in\mathbb{R}^{d\times d}$ for $i=1,\cdots,p$ \footnote{We slightly abuse the notation $C_i$.  In Theorem \ref{t.delay} $C_i$ could be any matrix in $\mathbb{R}^{d\times d}$, but here $C_i$ is a specific matrix from rearranging $A$.}
as elements extracted from $A$ for $1\le i\le p$, $1\le h\le d$ and $1\le j\le d$, 
\begin{align}\label{def.Ci}
    C_i(h,j)=
\begin{cases}
A(k_h,k_j+1-i) & \text{if $i\le p_j$};\\
0 & \text{otherwise}.
\end{cases}
\end{align}

Moreover, for $1\le i\le d$ and $1\le j\le p_i$ define $r(t,i,j)$ as accumulative disturbances over time on the system state:
\begin{align}
    r(t,i,j)=\sum_{\tau=t+1-j}^{t-1}w_{\tau}^{(k_i-\tau+t-j)},\label{function-r}
\end{align}
for $j\ge2$; and $r(t,i,1)=0$ for $j=1$. In this way, we can turn any element of $x_t$ into sum of $\psi(\cdot)$ and $r(t,\cdot,\cdot)$:
\begin{align}
    x_t^{\left(1-j+k_i\right)}=\left(\psi(x_{t-j+1})\right)^{(i)}+r(t,i,j).\label{t.reduction.aux}
\end{align}
The first term of the right hand side involves the system state at step $t-j+1$, while the second term involves the disturbances to the system from step $t-j+1$ to step $t-1$. This decomposition uses a different approach than that in previous work such as \citep{shi2020online}, and is the first to deal with the coupling between system state and future disturbances, which is what leads to feedback delay in the resulting online optimization problem.

Now, we can restate the theorem and begin the proof:
\begin{theorem*}
Consider the online control problem in \Cref{eq:example-1}. Assume the coefficients $q_{t:t+p-1}$ are observable at step $t$. It can be converted to an instance of OCO with structured memory and feedback delay using Algorithm \ref{a.reduction}. 
\end{theorem*}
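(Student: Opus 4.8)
The plan is to verify that Algorithm~\ref{a.reduction} produces a bona fide instance of OCO with structured memory and feedback delay whose total cost coincides, instance by instance, with that of the control problem, so that any competitive guarantee for the former transfers to the latter. The whole construction rests on the decomposition identity~\eqref{t.reduction.aux}, $x_t^{(1-j+k_i)}=(\psi(x_{t-j+1}))^{(i)}+r(t,i,j)$, so I would establish this first by induction on $j$. The base case $j=1$ is the definition of $\psi$ together with $r(t,i,1)=0$; for the inductive step I would use that, in controllable canonical form, the rows of $A$ strictly between two consecutive controlled indices act as unit shifts while $w_t$ enters additively, so passing from $j$ to $j+1$ replaces one state coordinate by the corresponding coordinate one step earlier and appends exactly one disturbance term to $r$, matching~\eqref{function-r}.

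With the identity in hand I would make explicit the change of variables the algorithm implicitly performs. Writing $\psi(x_{t+1})=y_t+\zeta_t$, where $\zeta_t$ is the disturbance-driven companion sequence maintained by Algorithm~\ref{a.reduction} and $y_t$ is the control-driven part, I would check against the canonical dynamics and the extracted matrices $C_i$ of~\eqref{def.Ci} that $\psi$ obeys the recursion $\psi(x_{t+1})=\sum_{i=1}^{p}C_i\psi(x_{t+1-i})+u_t+(\text{disturbance})$; subtracting the companion recursion for $\zeta_t$ then leaves exactly the control reconstruction $u_t=y_t-\sum_{i=1}^{p}C_iy_{t-i}$ used in Algorithm~\ref{a.reduction}. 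Consequently the switching cost satisfies $c(y_{t:t-p})=\tfrac12\|y_t-\sum_{i=1}^{p}C_iy_{t-i}\|^2=\tfrac12\|u_t\|^2$, reproducing the control-input cost verbatim. Verifying that $\zeta$ and the accumulators $r$ jointly reconstruct $\psi(x_{t+1})$ from $y$ and the disturbances — i.e.\ that the bookkeeping induced by the canonical structure closes up consistently — is the step I expect to be the main obstacle, since it couples the definitions of $\psi$, $C_i$, $r$, and $\zeta$ and is precisely where earlier reductions that only handle input disturbances (e.g.\ \citep{shi2020online}) break down.

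For the state cost I would reindex. Since the indices $\{k_i+1-j:\,1\le i\le d,\ 1\le j\le p_i\}$ enumerate all of $\{1,\dots,n\}$, the identity gives $\tfrac{q_t}{2}\|x_t\|^2=\tfrac12\sum_{i=1}^{d}\sum_{j=1}^{p_i}q_t\,(y_{t-j}^{(i)}+\zeta_{t-j}^{(i)}+r(t,i,j))^2$. Summing over $t$ and collecting all terms attributed to a fixed decision time $s$ (setting $s=t-j$, so $t=s+j$ with $1\le j\le p_i$) yields precisely $f_s(y_s)=\tfrac12\sum_i\sum_{j=1}^{p_i}q_{s+j}(y_s^{(i)}+\zeta_s^{(i)}+r(s+j,i,j))^2$, the hitting cost of Algorithm~\ref{a.reduction}. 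Under the natural conventions ($q_t=0$ for $t>T$ and the initial decisions pinned by $x_0$) this correspondence is exact, so $\sum_t(\tfrac{q_t}{2}\|x_t\|^2+\tfrac12\|u_t\|^2)=\sum_s(f_s(y_s)+c(y_{s:s-p}))$; applying the same bijective map to the offline optimum shows $cost(OPT)$ is preserved, and the reduction therefore transfers competitive ratios directly.

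It remains to confirm the produced instance has the advertised structure. Each $f_s$ is a positively weighted sum of shifted squares of the coordinates of $y_s$, hence $(\min_i\sum_{j=1}^{p_i}q_{s+j})$-strongly convex and $(\max_i\sum_{j=1}^{p_i}q_{s+j})$-strongly smooth, and its geometry $h_s$ — the same quadratic with minimizer at the origin — depends only on the observable coefficients and is thus available at step $s$. The minimizer $v_s$ is the coordinatewise weighted mean of the shifts $-\zeta_s^{(i)}-r(s+j,i,j)$, and through the accumulator $r(s+p_i,i,p_i)$ it involves the disturbance $w_{s+p-1}$, which is revealed only upon observing $x_{s+p}$; hence $v_s$ becomes available exactly $p=\max_i p_i$ steps late. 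This is precisely the structured-memory-with-delay setting required by the solver, which completes the reduction.
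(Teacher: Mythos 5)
Your proposal is correct and follows essentially the same route as the paper's proof in Appendix~\ref{appendix.reduction1}: the same decomposition of state coordinates into $\psi$-values plus accumulated disturbances $r(t,i,j)$ via the canonical shift structure, the same accumulation sequences $y_t$ and $\zeta_t$ yielding $\psi(x_{t+1})=y_t+\zeta_t$, the same identification of the switching cost with $\frac{1}{2}\|u_t\|^2$ and reindexing of $\sum_t \frac{q_t}{2}\|x_t\|^2$ into per-decision hitting costs, and the same observation that $r(t+p_i,i,p_i)$ involves $w_{t+p-1}$ and hence forces a $p$-step feedback delay. If anything, you are more careful than the paper at exactly the point you flag as the main obstacle: the paper asserts $u_t=z_{t+1}-\psi(w_t)-\sum_{i=1}^p C_i z_{t+1-i}$ without spelling out the disturbance terms that the shift chains feed into the controlled rows (visible in its own $2$-d example, where a $w_{t-1}^{(1)}$ term is silently dropped), whereas you correctly leave the disturbance in the $\psi$-recursion generic and isolate the consistency of the $\psi$, $C_i$, $r$, $\zeta$ bookkeeping as the step that must be verified to close.
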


\begin{proof}
Recall that we define operator $\psi:\mathbb{R}^n\to\mathbb{R}^d$ as 
\begin{align}
    \psi(x)=\left(x^{(k_1)},\cdots,x^{(k_d)}\right)^T.
\end{align}

Let $z_t=\psi(x_t)$, that is, $z_t^j=x_t^{(k_j)}$. For $i\notin\mathcal{I}$, we have $x_t^{(i)}=x_{t-1}^{i+1}+w_{t-1}^{(i)}$. In this way,
\begin{align}
    x_t=&(z_{t-p_1+1}^{(1)}+\sum_{\tau=t+1-p_1}^{t-1}w_{\tau}^{(t-\tau)},\cdots,z_{t}^{(1)},\cdots,z_{t-p_d+1}^{(d)}+\sum_{\tau=t+1-p_d}^{t-1}w_{\tau}^{(k_d-\tau+t-p_d)},\cdots,z_t^{(d)})^T.\notag
\end{align}
Here $r(t,i,j)=\sum_{\tau=t+1-j}^{t-1}w_{\tau}^{(k_i+t-\tau-j)}$ for $j\ge2$. When $j=1$, $r(t,i,1)=0$.\\
Notice that
\begin{align}
    \sum_{t=0}^Tq_t\|x_t\|^2_2=&\sum_{t=0}^Tq_t\sum_{i=1}^d\sum_{j=1}^{p_i}\left(z_{t+1-j}^{(i)}+r(t,j,i)\right)^2\notag\\
    =&\sum_{t=0}^{T-1}\sum_{i=1}^d\left(\sum_{j=1}^{p_i}q_{t+j}\left(z_{t+1}^{(i)}+r(t+j,j,i)\right)^2\right).\notag
\end{align}

This lets us define a hitting cost
\begin{align}
    &h_t(y)=\frac{1}{2}\sum_{i=1}^d\left(\sum_{j=1}^{p_i}q_{t+j}\left(y^{(i)}+r(t+j,j,i)\right)^2\right).\notag
\end{align}
In this way, we can transform the total cost as following:
\begin{align}
    &\frac{1}{2}\sum_{t=0}^T(q_t\|x_t\|^2+\|u_t\|^2)=\sum_{t=0}^{T-1}h_t(z_{t+1})+\frac{1}{2}\|u_t\|^2.\notag
\end{align}
Recall that coefficients $q_{t:t+p-1}$ are observable at time $t$. When picking $z_{t+1}$, we do not know $h_t$ because it depends on information about $r(t+j,i,j)$, which depends on $w_{t+1:t+p-1}$.

We can also represent $u_t$ as follows:
\begin{align}
    u_t=&z_{t+1}-\psi(w_t)-A(\mathcal{I,:})x_t\notag\\
    =&z_{t+1}-\psi(w_t)-\sum_{i=1}^pC_iz_{t+1-i}.\notag
\end{align}

Next, we recursively define a sequence $\{y_t\}_{t\ge-p}$ as the accumulation of control actions, i.e.
\begin{align}
    y_t=u_t+\sum_{i=1}^pC_iy_{t-i},\forall t\ge0,\notag
\end{align}
where $y_t=0$ for all $t<0$. We also recursively define a sequence $\{\zeta_t\}_{t\ge-p}$ as the accumulation of control noise, i.e.
\begin{align}
    \zeta_t=\psi(w_t)+\sum_{i=1}^pC_i\zeta_{t-i},\forall t\ge0,\notag
\end{align}
where $\zeta_t=0$ for all $t<0$. Setting $x_0=0$ gives the following for all $t\ge-1$:
\begin{align}
    z_{t+1}=y_t+\zeta_t\notag.
\end{align}

Using the above, we can now formulate the problem as online optimization problem with memory and delay, where the hitting cost function is given by
\begin{align}
    f_t(y)=h_t(y+\zeta_t),\notag
\end{align}
and the switching cost is given by
\begin{align}
    \frac{1}{2}\|y_t-\sum_{i=1}C_iy_{t-i}\|^2.\notag
\end{align}
Note that $h_t$ involves $w_{t+p-1}$, which is not revealed until before picking $y_{t+p}$. In other words, at step $t$, only $f_{1:t-p}$ is known then, due to the reduction structure, there is a delay of $p$ steps. 
\end{proof}

\textbf{Example.} To illustrate the reduction, we consider an example of a 2-d system with the following objective and dynamics:
\begin{align}
    &\min_u\sum_{t=0}^{200}\frac{1}{2}\|x_t\|^2+\frac{1}{2}\|u_t\|^2\notag\\
 \mathrm{s.t.}~&x_{t+1}=\begin{bmatrix}
    0 & 1 \\
    -1 & 2
    \end{bmatrix}x_t+\begin{bmatrix}
     0 \\ 1
    \end{bmatrix}u_t+w_t\notag.
\end{align}
There is a disturbance $w_t$ on the system state $x_t$ and an input $u_t$. In this setting, the disturbance is unknown and potentially adversarial. To begin, we write the system in the following form:
\begin{align}
    x_{t+1}^{(1)}&=x_t^{(2)}+w_t^{(1)},\notag\\
    x_{t+1}^{(2)}&=2x_t^{(2)}-x_{t-1}^{(2)}+u_t+w_t^{(2)}\notag.
\end{align}
To transform the problem, we define
\begin{align}
    &y_t=u_t+2y_{t-1}-y_{t-2},\notag\\
    &\zeta_t=w_t^{(2)}+2\zeta_{t-1}-\zeta_{t-2},\notag\\
    &y_t+\zeta_t=x_{t+1}^{(2)},\notag\\
    &h_t(y)=y^2+(y+w_{t+1}^{(1)})^2.\label{example-delay-function}
\end{align}
Thus, the control problem is transformed into
\begin{align}
    \min_y\sum_{t=0}^{199}h_t(y_t+\zeta_t)+\frac{1}{2}\|y_t-2y_{t-1}-y_{t-2}\|^2.\notag
\end{align}
Note that, from the \Cref{example-delay-function} we can see that, at time $t$, the new cost function $h_t$ involves the disturbance from the next round, $w_{t+1}$, which is not revealed yet.

\section{Proof of Theorem \ref{t.reduction2}}\label{appendix.reudction2}
For simplicity we consider the trajectory tracking task: the cost function is given by $f_t(x_t)=\frac{1}{2}(x_t-v_t)^\top Q_t(x_t-v_t)$, where $\{v_t\}$ is the desired trajectory to track.

\begin{theorem*}
Consider the online control problem in \Cref{eq:example-2}. If $Q_t$ is observable at step $t$, and only the trajectory $v_{1:t-k}$ is known, i.e., there are $k$ steps of feedback delay, then it can be converted to an instance of online optimization with switching cost and feedback delay using \Cref{a.reduction-2}.
\end{theorem*}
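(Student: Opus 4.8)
The plan is to exhibit an explicit, cost-preserving bijection between feasible trajectories of the control problem \eqref{eq:example-2} and decision sequences of an online optimization instance with one-step nonlinear switching cost and delayed feedback, and then to check that the information available to the two learners coincides at every round. First I would set $y_t=x_t$ for all $t$, with the initial point $y_0=x_0$. Since the dynamics $x_{t+1}=Ax_t+u_t+g(x_t)$ are affine in $u_t$, this correspondence is a bijection: a control sequence $u_{0:T-1}$ determines $x_{1:T}$, while conversely any $y_{0:T}$ recovers the controls through $u_t=y_{t+1}-Ay_t-g(y_t)$, which is exactly Line~10 of \Cref{a.reduction-2} after the single-step index shift.

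Next I would map the two cost terms. The state cost transfers directly: $f_t(x_t)=\frac12(x_t-v_t)^\top Q_t(x_t-v_t)=h_t(y_t-v_t)$ with $h_t(z)=\frac12 z^\top Q_t z$, which is $\lambda_{\min}(Q_t)$-strongly convex, $\lambda_{\max}(Q_t)$-strongly smooth, and minimized at the origin, so $v_t$ plays the role of the minimizer of the $t$-th hitting cost. Substituting $u_t=y_{t+1}-Ay_t-g(y_t)$ into the control cost gives $\frac12\|u_t\|^2=\frac12\|y_{t+1}-\delta(y_t)\|^2$ with $\delta(y):=Ay+g(y)$, i.e.\ exactly the nonlinear switching cost \eqref{e.newswitching} with memory length $p=1$. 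Reindexing $\sum_{t=0}^{T-1}\frac12\|u_t\|^2=\sum_{t=1}^{T}c(y_t,y_{t-1})$ then yields the term-by-term identity $\sum_{t=1}^T f_t(x_t)+\sum_{t=0}^{T-1}\frac12\|u_t\|^2=\sum_{t=1}^T\big(f_t(y_t)+c(y_t,y_{t-1})\big)$. The objective is preserved exactly, so $\cost(\mathrm{ALG})$ and $\cost(\mathrm{OPT})$ are identical on the two sides and any competitive guarantee for the resulting online optimization instance (in particular \Cref{t.main}) transfers verbatim.

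The remaining ingredient is the timing correspondence. At control step $t$ the learner observes $x_t$, the weight $Q_t$, and the delayed target $v_{t-k}$; equivalently it always knows the geometry $h_\tau$ of every hitting cost (these depend only on $Q_\tau$), but learns each minimizer $v_\tau$ exactly $k$ rounds late. The switching cost $c(y,y_{t-1})=\frac12\|y-Ax_t-g(x_t)\|^2$ is determined by the observed $x_t$, hence incurs no delay. Because the output $y_t$ represents the next state $x_{t+1}$ (Line~10), the most recent hitting cost whose minimizer is known when $y_t$ must be committed is the one indexed $t-k-1$; this is why \Cref{a.reduction-2} instantiates a solver of delay $k+1$, and the learner has perfect knowledge of $f_{1:t-k-1}$ and only the geometry of the more recent costs, matching the delayed-feedback interaction model under which iROBD is analyzed.

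The hard part is not any analytic estimate --- the reduction is an exact identity rather than an inequality --- but the careful alignment of indices: the off-by-one between the state $x_t$ and the control $u_t$ (which surfaces as a switching cost linking $y_{t+1}$ and $y_t$) has to be reconciled with the $k$-round delay on the target trajectory, and the boundary terms at $t=0$ and $t=T$ must be checked so that the two finite sums match exactly. Once this bookkeeping is pinned down, the equivalence of both the objective values and the admissible-information structure is immediate, and the claim follows.
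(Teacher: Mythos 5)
Your proposal is correct and follows essentially the same route as the paper's proof: identify $y_t=x_t$, substitute $u_t=y_{t+1}-Ay_t-g(y_t)$ to turn the control cost into the nonlinear switching cost $\frac{1}{2}\|y_{t+1}-\delta(y_t)\|^2$ with $\delta(y)=Ay+g(y)$, observe the hitting costs transfer exactly, and check that the delayed targets $v_{1:t-k}$ give the delayed-feedback structure. In fact you are somewhat more careful than the paper's terse argument, since you explicitly reconcile the off-by-one between the decision $y_t$ and the state $x_{t+1}$ (Line~10 of \Cref{a.reduction-2}) and thereby explain why the algorithm instantiates a solver with delay $k+1$, a point the paper's prose (which simply asserts a $k$-step delay) glosses over.
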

\begin{proof}

From the dynamics we know that $u_t=x_{t+1}-Ax_t-\delta(x_t).$ Let $f_t(y)=\frac{1}{2}(y-v_t)^TQ_t(y-v_t)$. Using this we can represent the total cost as
\begin{align}
    \sum_{t=1}^T&\frac{1}{2}(x_t-v_t)^TQ_t(x_t-v_t)+\sum_{t=0}^{T-1}\frac{1}{2}\|u_t\|^2\notag\\
    =&\sum_{t=1}^T\frac{1}{2}(x_t-v_t)^TQ_t(x_t-v_t)+\sum_{t=0}^{T-1}\frac{1}{2}\|x_{t+1}-Ax_t-\delta(x_t)\|^2\notag\\
    =&\sum_{t=1}^Tf_t(y_t)+\frac{1}{2}\|y_t-Ay_{t-1}-\delta(y_{t-1})\|^2,\notag
\end{align}
where $y_t=x_t$ for all $t$. In this way, we have formulated the problem as online optimization with delay and nonlinear switching cost. Notice that, at time step $t$, only $v_{1:t-k}$ is known, so there is a $k$-step delay on the minimizer of the hitting cost function $f_t$.

\end{proof}

\section{Remark 1}\label{appendix.remark1}

We just consider a linear case where $\delta(y)=Ly$ with $L>0$ a constant. We prove here that the lower bound on the competitive ratio of any online algorithm in this setting matches the upper bound on the competitive ratio of iROBD, which is
\begin{align}
    \frac{1}{2}\left(1+\frac{2L+L^2}{m}+\sqrt{\left(1+\frac{2L+L^2}{m}\right)^2+\frac{4}{m}}\right).\notag
\end{align}

Tightness in this case is a simple application of Theorem 4 in \citep{shi2020online}, so we restate it here.

\begin{theorem}
When the hitting cost functions are $m$-strongly convex in feasible set $\mathcal{X}$ and the switching cost is given by $c(y_t,y_{t-1})=\frac{1}{2}\|y_t-\alpha y_{t-1}\|^2$ for a constant $\alpha\ge1$, the competitive ratio of any online algorithm is lower bounded by
\begin{align}
    \frac{1}{2}\left(1+\frac{\alpha^2-1}{m}+\sqrt{\left(1+\frac{\alpha^2-1}{m}\right)^2+\frac{4}{m}}\right).\notag
\end{align}
\end{theorem}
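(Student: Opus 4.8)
The plan is to exhibit a single adaptive adversary, working in one dimension with purely quadratic hitting costs, and to show that against it \emph{every} online algorithm pays at least $C$ times the offline optimum, where $C$ is the claimed value. Since we only need one hard instance, I would first reduce to $d=1$ and to hitting costs of the minimal-curvature form $f_t(y)=\frac{m}{2}(y-v_t)^2$: these are exactly $m$-strongly convex, and additional curvature only helps the learner, so this subclass is the worst case. The adversary's only freedom is then the minimizer sequence $\{v_t\}$, which it selects as a function of the learner's committed past decisions $y_{1:t-1}$, while the learner responds after seeing the revealed $f_t$. The objective becomes a lower bound on $\inf_{ALG}\sup_{\{v_t\}} cost(ALG)/cost(OPT)$, where the $\sup$ ranges over admissible minimizer sequences.

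The core of the argument is an amortized (potential-based) \emph{lower} competitiveness inequality. I would introduce a quadratic potential $\Phi_t$ in the learner's current position $y_t$ and the offline optimum's current position $y_t^*$, and aim to establish a per-step bound of the shape $f_t(y_t)+c(y_t,y_{t-1})\ge C\,(f_t(y_t^*)+c(y_t^*,y_{t-1}^*))-(\Phi_{t-1}-\Phi_t)$, \emph{after} the adversary has chosen $v_t$ to maximize the left-hand deficit against the learner's best response $y_t$. Summing this telescoping inequality over $t$, using $\Phi_0=0$ and $\Phi_T\ge 0$, then yields $cost(ALG)\ge C\,cost(OPT)$. Concretely, the adversary is calibrated so that, given $y_{t-1}$, however the learner trades off the hitting term $\frac{m}{2}(y_t-v_t)^2$ against the switching term $\frac{1}{2}(y_t-\alpha y_{t-1})^2$, the resulting cost plus potential change cannot drop below $C$ times the marginal offline cost. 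Performing the (one-dimensional quadratic) minimization over $y_t$ and then the maximization over $v_t$ collapses the feasibility of $(C,\Phi)$ into an algebraic condition, and requiring it to hold forces $C$ to be the positive root of $mC^2-(m+\alpha^2-1)C-1=0$, which is precisely $\frac{1}{2}\bigl(1+\frac{\alpha^2-1}{m}+\sqrt{(1+\frac{\alpha^2-1}{m})^2+\frac{4}{m}}\bigr)$.

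Equivalently, and perhaps more transparently, I would recast the whole construction as a stationary linear-quadratic zero-sum game: the offline optimum, the optimal online response, and the worst-case minimizer dynamics all obey linear (Riccati-type) recursions, and their stationary ratio is governed by the same scalar quadratic. In either formulation the boundary terms vanish in the $T\to\infty$ limit (or by padding the instance so $\Phi_T$ is negligible relative to the accumulated cost), so the asymptotic ratio is exactly the fixed point.

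The main obstacle — and the one place where the no-foresight structure must genuinely be used — is the min--max step: I must verify that the learner's optimal hedge against the adaptive adversary still cannot escape the fixed point, i.e.\ that the per-step saddle value equals $C$ exactly rather than something strictly smaller. This amounts to showing that the potential coefficient and the ratio $C$ can be chosen \emph{simultaneously} so that the quadratic-in-$y_t$ inequality holds with the correct sign uniformly in $y_{t-1}$; this simultaneous solvability is the delicate algebraic heart of the argument. Everything else (the telescoping, the control of the boundary terms, and solving the final quadratic) is routine, and it is reassuring that specializing to $\alpha=1+L$ recovers exactly the bound of \Cref{c.nonlinear}, confirming the tightness claimed in \Cref{remark.1}.
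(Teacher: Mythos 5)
Two remarks before the verdict. First, on comparison: the paper does not actually prove this statement at all --- it is imported verbatim as Theorem~4 of \citep{shi2020online} in Appendix~\ref{appendix.remark1} precisely so that \Cref{remark.1} can apply it with $\alpha = 1+L$. So there is no in-paper proof to match; the relevant benchmark is the cited proof, which belongs to the same family as your sketch (a one-dimensional adaptive adversary with minimal-curvature quadratic hitting costs $\frac{m}{2}(y-v_t)^2$, analyzed through a per-step recursion whose stationary value is the root of a scalar quadratic). Your calibration is exactly right: the claimed constant is the positive root of $mC^2-(m+\alpha^2-1)C-1=0$, and your sanity check that $\alpha=1+L$ recovers \Cref{c.nonlinear} is the same consistency the paper invokes.

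That said, as a proof your proposal has a genuine gap, and it sits exactly where you yourself place it: the ``delicate algebraic heart'' is asserted, not executed, and it is the entire content of the theorem. Two concrete problems remain open in the sketch. (i) The per-step inequality $f_t(y_t)+c(y_t,y_{t-1})\ge C\,(f_t(y_t^*)+c(y_t^*,y_{t-1}^*))-(\Phi_{t-1}-\Phi_t)$ must hold for \emph{every} learner response $y_t$ with the adversary committing to $v_t$ first; verifying this means writing the difference as a quadratic form in $y_t$, checking the sign of its leading coefficient (which the potential perturbs), and deriving --- not merely naming --- the feasibility condition on the pair $(C,\Phi)$. Saying the condition ``forces $C$ to be the positive root'' states the intended conclusion; without the computation one cannot rule out that the achievable saddle value is strictly \emph{below} $C$ (the matching upper bound only guarantees it is not above). (ii) Your right-hand side references the true optimum's positions $y_t^*$, but under an adaptive adversary OPT's trajectory is anticausal and entangled with the learner's play, so the potential cannot even be evaluated step by step without first characterizing that trajectory --- the Riccati analysis you gesture at in the ``equivalently'' paragraph, which is itself unexecuted and carries its own boundary-term bookkeeping. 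The standard and cleaner route, used by this paper in its own lower bound (\Cref{proof.exp}, proof of \Cref{t.exp_lowerbound}), is to commit to an explicit adversary sequence (e.g.\ pinning $v_t-\alpha y_{t-1}$) and an explicit comparator trajectory whose cost upper-bounds $cost(OPT)$, then lower-bound $cost(ALG)$ directly. In short: right construction, right quadratic, but the proof is a roadmap whose decisive verification is missing.
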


To apply this in our context, we substitute $\alpha$ in the theorem above with $1+L$ in our setting.  This  immediately gives that a lower bound on the competitive ratio of any algorithm is 
\begin{align}
    \frac{1}{2}\left(1+\frac{2L+L^2}{m}+\sqrt{\left(1+\frac{2L+L^2}{m}\right)^2+\frac{4}{m}}\right),\notag
\end{align}
which highlights that iROBD remains optimal in the basic linear setting.

\section{Remark 2: Unbounded Competitive Ratio}\label{appendix.remark2}
In this section we show that, even when $\delta$ is small, the competitive ratio of any online algorithm can be arbitrarily large when there are nonlinear switching costs. To show this we consider the following example:  $$\sum_{t=1}^T(y_t-v_t)^2+(y_t-y_{t-1}-\delta(y_{t-1}))^2.$$

Suppose the starting point of the online algorithm and the offline adversary is $y_0=y_0^*=0$. Let $\epsilon,\gamma>0$ be two small numbers, and $n\in\mathbb{N}^+$. The function $\delta$ is defined as:

\begin{align}
\delta(y)=\begin{cases}
\epsilon, & y\le n\epsilon; \\
-\epsilon\sin\left(\frac{\pi}{\gamma\epsilon}y-\frac{n\pi}{\gamma}-\frac{\pi}{2}\right), & n\epsilon<y\le n\epsilon+\gamma\epsilon;\\
-\epsilon, & y>n\epsilon+\gamma\epsilon.
\end{cases}\notag    
\end{align}

\begin{figure}[t]
\vskip 0.2in
\begin{center}
\centerline{\includegraphics{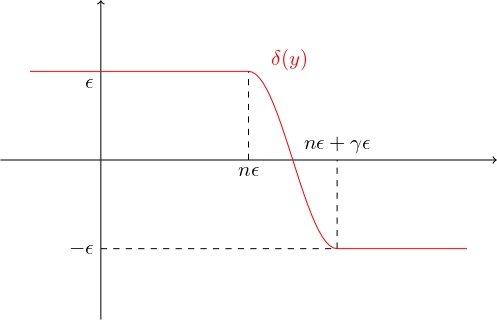}}
\caption{Illustration of the function $\delta(y)$.}
\label{f.delta}
\end{center}
\vskip -0.2in
\end{figure}

This is plotted in Figure \ref{f.delta}. Notice that the absolute value of $\delta$ is always no larger than $\epsilon$, and by adjusting the value of $\epsilon$, it can be made it as small as desired.

We consider a sequence $\{v_t\}$ such that the online algorithm follows exactly the trajectory through steps $t=1,2,\cdots,n$ and is forced to incur a huge switching cost at step $t=n+1$ while the adversary makes use of the property of $\delta$ and departs earlier in order to achieve a much smaller cost. More specifically, for $t=1,2,\cdots,n+1$, the trajectory $v_t$ is:
\begin{align}
    v_t=\begin{cases}
    t\cdot\epsilon, & t\in\{1,2,\cdots,n\};\\
    (n-1)\epsilon, & t=n+1.
    \end{cases}\notag
\end{align}
Suppose the online algorithm first chooses $y_t$, which does not equal $v_t$ at step $t=t_0$. If $t_0<n+1$, we stop the game at step $t_0$, and compare the online algorithm with an offline adversary which always stays chooses $y_t=v_t$. The total cost of the offline adversary is:
\begin{align}
    &\sum_{t=1}^{t_0}(y_t-v_t)^2+(y_t-y_{t-1}-\delta(y_{t-1}))^2    =\sum_{t=1}^{t_0}(t\epsilon-t\epsilon)^2-(t\epsilon-(t-1)\epsilon-\epsilon)^2
    =0,\notag
\end{align}
but the total cost of the online algorithm is non-zero. So the competitive ratio is unbounded.  

Now we consider the case when the algorithm decides on $y_t=v_t=t\cdot\epsilon$ for $i=1,\cdots,n$. In this case the cost incurred at step $n+1$ is:
\begin{align}
   (y_{n+1}-v_{n+1})^2+(y_{n+1}-y_n-\delta(y_n))^2
   =&(y_{n+1}-(n-1)\epsilon)^2+(y_{n+1}-(n+1)\epsilon)^2
   \ge2\epsilon^2.\notag
\end{align}

However, consider another sequence
\begin{align}
    y'_t=\begin{cases}
    t\cdot\epsilon, & t\in\{0,1,2,\cdots,n-1\};\\
    n\epsilon+\gamma\epsilon, & t=n;\\
    (n-1)\epsilon & t=n+1.
    \end{cases}.\notag
\end{align}
In this case the cost of $y_1',y_2',\cdots,y_{n+1}'$ is
\begin{align}
    \sum_{t=1}^{n+1}(y_t'-v_t)^2+&(y_t'-y_{t-1}'-\delta(y_{t-1}'))^2\notag\\
    =&\sum_{t=n}^{n+1}(y_t'-v_t)^2+(y_t'-y_{t-1}'-\delta(y_{t-1}'))^2\notag\\
    =&(n\epsilon+\gamma\epsilon-n\epsilon)^2+(n\epsilon+\gamma\epsilon-(n-1)\epsilon-\epsilon)+((n-1)\epsilon-n\epsilon-\gamma\epsilon-(-\epsilon))^2\notag\\
    =&3\gamma\epsilon^2.\notag
\end{align}
This cost is no smaller than the offline optimal; therefore, the competitive ratio of the online algorithm is bounded by
\begin{align}
    \frac{cost(ALG)}{cost(OPT)}\ge\frac{2\epsilon^2}{3\gamma\epsilon^2}=\frac{2}{3\gamma}.\notag
\end{align}
Since $\gamma\to0^+$, we can see the competitive ratio is unbounded.

\end{document}